\DeclareMathAlphabet{\mathcal}{OMS}{cmsy}{m}{n}
\newcommand{\aka}{\emph{a.k.a.,}\xspace}
\newcommand{\wrt}{\emph{w.r.t.}\xspace}
\newcommand{\ie}{\emph{i.e.,}\xspace}
\newcommand{\ignore}[1]{}
\newcommand{\model}{DR-GST}
\newcommand{\ds}{distribution shift}
  \providecommand\BibTeX{{%
    \normalfont B\kern-0.5em{\scshape i\kern-0.25em b}\kern-0.8em\TeX}}}
\begin{document}

%%
%% The "title" command has an optional parameter,
%% allowing the author to define a "short title" to be used in page headers.
\title{Confidence May Cheat: Self-Training on Graph Neural Networks under Distribution Shift}

%%
%% The "author" command and its associated commands are used to define
%% the authors and their affiliations.
%% Of note is the shared affiliation of the first two authors, and the
%% "authornote" and "authornotemark" commands
%% used to denote shared contribution to the research.
\author{Hongrui Liu}
\email{liuhongrui@bupt.edu.cn}
\authornote{Work done during internship at Ant Group.}
% \orcid{1234-5678-9012}
% \author{G.K.M. Tobin}
% \authornotemark[1]
% \email{webmaster@marysville-ohio.com}
\affiliation{%
  \institution{Beijing University of Posts and Telecommunications}}
% %   \streetaddress{P.O. Box 1212}
%   \city{Beijing}
% %   \state{Ohio}
%   \country{China}
%   \postcode{100876}}

\author{Binbin Hu}
\email{bin.hbb@antfin.com}
\affiliation{%
  \institution{Ant Group}}

\author{Xiao Wang}
\email{xiaowang@bupt.edu.cn}
\affiliation{%
  \institution{Beijing University of Posts and Telecommunications}
  \institution{Peng Cheng Laboratory}}

\author{Chuan Shi}
\authornote{Corresponding author}
\email{shichuan@bupt.edu.cn}
\affiliation{%
  \institution{Beijing University of Posts and Telecommunications}
  \institution{Peng Cheng Laboratory}}

\author{Zhiqiang Zhang}
\email{lingyao.zzq@antfin.com}
\affiliation{%
  \institution{Ant Group}}

\author{Jun Zhou}
\email{jun.zhoujun@antfin.com}
\affiliation{%
  \institution{Ant Group}}

\begin{abstract}
  Graph Convolutional Networks (GCNs) have recently attracted vast interest and achieved state-of-the-art performance on graphs, but its success could typically hinge on careful training with amounts of expensive and time-consuming labeled data. To alleviate labeled data scarcity, self-training methods have been widely adopted on graphs by labeling high-confidence unlabeled nodes and then adding them to the training step. In this line, we empirically make a thorough study for current self-training methods on graphs. Surprisingly, we find that high-confidence unlabeled nodes are not always useful, and even introduce the {\ds} issue between the original labeled dataset and the augmented dataset by self-training, severely hindering the capability of self-training on graphs.
  To this end, in this paper, we propose a novel \textbf{D}istribution \textbf{R}ecovered \textbf{G}raph \textbf{S}elf-\textbf{T}raining framework (\model), which could recover the distribution of the original labeled dataset.
  Specifically, we first prove the equality of loss function in self-training framework under the distribution shift case and the population distribution if each pseudo-labeled node is weighted by a proper coefficient. Considering the intractability of the coefficient, we then propose to replace the coefficient with the information gain after observing the same changing trend between them, where information gain is respectively estimated via both dropout variational inference and dropedge variational inference in DR-GST. However, such a weighted loss function will enlarge the impact of incorrect pseudo labels. As a result, we apply the loss correction method to improve the quality of pseudo labels. 
  Both our theoretical analysis and extensive experiments on five benchmark datasets demonstrate the effectiveness of the proposed \model, as well as each well-designed component in \model.

\end{abstract}

%%
%% The code below is generated by the tool at http://dl.acm.org/ccs.cfm.
%% Please copy and paste the code instead of the example below.
%%
\begin{CCSXML}
<ccs2012>
   <concept>
       <concept_id>10010147.10010257.10010293.10010294</concept_id>
       <concept_desc>Computing methodologies~Neural networks</concept_desc>
       <concept_significance>500</concept_significance>
       </concept>
   <concept>
       <concept_id>10003752.10010070.10010099.10003292</concept_id>
       <concept_desc>Theory of computation~Social networks</concept_desc>
       <concept_significance>500</concept_significance>
       </concept>
   <concept>
       <concept_id>10003752.10010070.10010071.10010289</concept_id>
       <concept_desc>Theory of computation~Semi-supervised learning</concept_desc>
       <concept_significance>500</concept_significance>
       </concept>
 </ccs2012>
\end{CCSXML}

\ccsdesc[500]{Computing methodologies~Neural networks}
\ccsdesc[500]{Theory of computation~Social networks}
\ccsdesc[500]{Theory of computation~Semi-supervised learning}

% \ccsdesc[500]{Computer systems organization~Embedded systems}
% \ccsdesc[300]{Computer systems organization~Redundancy}
% \ccsdesc{Computer systems organization~Robotics}
% \ccsdesc[100]{Networks~Network reliability}

%%
%% Keywords. The author(s) should pick words that accurately describe
%% the work being presented. Separate the keywords with commas.
\keywords{Graph Neural Networks, Self-Training, Information Gain}

%% A "teaser" image appears between the author and affiliation
%% information and the body of the document, and typically spans the
%% page.
% \begin{teaserfigure}
%   \includegraphics[width=\textwidth]{sampleteaser}
%   \caption{Seattle Mariners at Spring Training, 2010.}
%   \Description{Enjoying the baseball game from the third-base
%   seats. Ichiro Suzuki preparing to bat.}
%   \label{fig:teaser}
% \end{teaserfigure}

%%
%% This command processes the author and affiliation and title
%% information and builds the first part of the formatted document.
\maketitle

\section{Introduction}
\label{sec:intro}
Graphs are ubiquitous across many real-world applications, ranging from citation and social
network analysis to protein interface and chemical bond prediction. 
With the surge of demands,  Graph Convolution Network (GCN) and its variants \cite{gcn, gat, ppnp, gin, sgc} (abbreviated as GCNs) have recently attracted vast interest and achieved state-of-the-art performance in various tasks on graphs, most notably semi-supervised node classification. 
Nevertheless, its success could typically hinge on careful training with large amounts of labeled data, which is expensive and time-consuming to be obtained \cite{m3s}. 
Empirically, the performance of GCNs will rapidly decline with the decrease of labeled data \citep{abn}.
% Empirically, \cite{abn} has demonstrated the rapid decline of performance on GCNs with the decrease of labeled data. 
% Among these applications, an particularly important
% task is the node classification, i.e., classifying the nodes in a graph to various classes.
% With the surge of demands, Graph Convolution Network (GCN) and its variants \cite{gcn, gat, ppnp, gin, sgc} (abbreviated as GCNs) have been repeatedly
% proposed in recent years for representation learning of nodes for classification. 
% Undoubtedly, GCNs have achieved state-of-the-art performance till now.

%Despite their success, GCNs have been criticized for not making full use of unlabeled data, which is critical in few-shot scenarios.
%\cite{abn} has empirically demonstrated the rapid decline of performance on GCNs with the decrease of labeled data. 
As one of the promising approaches, self-training~\cite{self-training,pseudo-labeling} aims at addressing labeled data scarcity by making full use of abundant unlabeled data in addition to task-specific labeled data.   
Given an arbitrary model trained on the original labeled data as the \emph{teacher} model, the key idea of self-training is to pseudo-label high-confidence unlabeled samples to augment the above labeled data, and a \emph{student} model is trained with augmented data to replace the \emph{teacher} model. 
Such an iteration learning is repeated until convergence~\footnote{The \emph{teacher-student} term is commonly adopted in current self-training studies~\cite{uncertainty, self-training-cv1, self-neural}, and we just reuse it here for a clearer explanation. }. 
Analogously, self-training has great potential to facilitate advancing GCNs to exploiting unlabeled data ~\cite{deeper, m3s, abn}. 
% in a self-supervised manner. 
% Hereafter, there have been a few primal efforts in self-training on graphs
Whereas, these studies only focus on the high-confidence nodes on account of the prefabricated assumption that the higher the confidence, the more accurate the prediction. Naturally, we are curious about such a fundamental question, ``\emph{Are all the unlabeled nodes pseudo-labeled with high confidence truly useful?}''

% To ease this situation, \cite{deeper} proposes to self-train  the vanilla GCN for a better use of unlabeled nodes. 
% Given the vanilla GCN trained on the original labeled dataset as the \emph{teacher} model, the key idea of self-training is to firstly 
% pseudo-label high-confidence unlabeled nodes to augment the above labeled dataset, next train a \emph{student} model with such augmented dataset, 
% then replace the teacher model with the student model and finally repeat the above schedules until convergence. 

% It is clear that more unlabeled nodes are explicitly involved in training in this way. 
% Hereafter, built on this, there has been a surge of interest in self-training on graphs \cite{deeper, m3s, abn}, but they only focus on the high-confidence 
% nodes because of a priori that the higher the confidence, the more accurate the prediction. 
% Naturally, there is a question, ``\emph{whether all of unlabeled nodes pseudo-labeled by these methods are useful.}''

As a motivating example, we conduct an analysis experiment on a benchmark dataset Cora~\cite{cora} to explore how much additional information these high-confidence nodes can bring to the model (denoted as information gain). More details can be seen in Section \ref{sec:analysis}. Surprisingly, our experimental results show a clear negative correlation between the confidence and the information gain, implying that nodes pseudo-labeled by existing graph self-training methods with high confidence may be low-information-gain and useless. To further understand the underlying reason, we illustrate the distribution of unlabeled nodes and find these high-confidence (or low-information-gain) nodes are far from the decision boundary, which implies that they potentially guide the model to perform worthless optimization for a more crisp decision boundary.
Existing graph self-training methods which focus on high-confidence nodes are ``cheated'' by confidence in this way. 
In light of the above observations, we further investigate into what will happen when self-training is cheated by confidence.
We discover that during the optimization procedure dominated by \emph{easy nodes} (\ie nodes with low information gain), the \emph{Distribution Shift} phenomenon between the original and augmented dataset gradually appears. This is because more and more easy nodes selected by high confidence are added to the original labeled dataset, leading to the distribution gradually shifting to the augmented dataset and overmuch attention paid on such easy nodes as a result. Not surprisingly, this issue will severely threaten the capacity of self-training on graphs, since the distribution of the augmented dataset is different from the population distribution, resulting in a terrible generalization during evaluation. Alleviating {\ds} from self-training on graphs is in urgent demand, which is unexplored in existing studies.

% Worse still, we find existing self-training methods on graphs will introduce a distribution shift from the original labeled dataset to the augmented dataset, since they mainly focus on high-confidence nodes. We borrow the idea of information theory to illustrate it in this paper.
% Specifically, we first study the relationship between the confidence and the information gain about the model parameters, aiming at measuring how much additional information an unlabeled node can contribute to the model, \aka  how critical an unlabeled node is (More details about the information gain is described in Section \ref{subsec:information_gain}.). The results in Fig .\ref{fig:conf_un} clearly shows a negative correlation. That is to say, most of unlabeled nodes pseudo-labeled by existing self-training methods are \emph{easy nodes}, i.e., nodes with low information gain. Therefore, the distribution of original labeled dataset will gradually shift to the augmented dataset mostly combined with such easy nodes. We provide a more detailed explanation about distribution shift in Section \ref{subsec:emprical}.
%In fact, similar views have also been proposed by \cite{uncertainty,confidence_score}, but none of them put forward a solution. 
In this paper, we propose an information gain weighted self-training framework {\model}  which could recover the distribution of original labeled dataset.
Specifically, we first prove that the loss function of the self-training framework under the {\ds} case is equal to that under the population distribution if we could weight each pseudo-labeled node with a proper coefficient. 
But the coefficient is generally intractable in practice.
Then we discover the same changing trend between the coefficient and information gain, and propose to replace the coefficient with information gain, where the information gain can be estimated via both dropout variational inference and dropedge variational inference. 
% Instead, we propose to replace the coefficient with the information gain, since the same changing trend between them has been observed, where the information gain could be estimated via both dropout variational inference and dropedge variational inference. 
% we re-formulate the graph self-training task as an information gain weighted loss function to recover the distribution, 
Consequently, we can recover the shifted distribution with the newly proposed information gain weighted loss function. 
Such a loss function forces the model to pay more attention to \emph{hard nodes}, \ie nodes with high information gain, but will enlarge the impact of incorrect pseudo labels. Therefore, we apply loss correction  \cite{loss_correction1,loss_correction2,loss_correction3} to self-training to correct the prediction of the student model, so that the impact of incorrect pseudo labels from the teacher model can be alleviated in this way. 
Finally, we conduct a theoretical analysis of self-training on graphs, and the conclusion shows both {\ds} and incorrect pseudo labels will severely hinder its capability, which is consistent with our designs.

In summary, the main contributions are highlighted as follows: 
\begin{itemize}[leftmargin=*]
% \item We study existing self-training methods on graphs and surprisingly find that high-confidence unlabeled nodes these methods focus on may be cheating and even introduce the distribution shifting issue, severely hindering the capability of self-training on graphs.
% \item we make the first step to comprehensively study existing self-training methods on graphs and surprisingly find that confidence that these methods depend on may be cheating and even introduce the distribution shifting issue, severely hindering the capability of self-training on graphs.
\item We make a thorough study on graph self-training, and find two phenomena below: 1) pseudo-labeled high-confidence nodes may cheat. 2) {\ds} between the original labeled dataset and the augmented dataset. Both of them severely hinder the capability of self-training on graphs.

% We empirically and theoreticall
% \item We analyze the optimization of GCN under the distribution shift and then propose a novel self-training framework on graphs, which add a weight coefficient 
% related to the information gain to each pseudo-labeled nodes. Moreover, we creatively apply the loss correction to self-training to alleviate the impact of incorrect pseudo labels.

\item We propose a novel graph self-training framework {\model} that not only addresses the {\ds} issue from the view of information gain, but also is equipped with the creative loss correction strategy for improving qualities of pseudo labels.

% \item  we conduct a theoretical analysis of self-training on graphs, the conclusion of which is consistent with our designs.
\item We theoretically analyze the rationality of the whole {\model} framework and extensive experimental results on five benchmark datasets demonstrates that {\model} consistently and significantly outperforms various state-of-arts.
% We validate the effectiveness of {\model} via thorough comparisons with state-of-the-art GCNs and self-training methods on graphs on five benchmark datasets.
\end{itemize}

\section{Preliminary}
\label{sec:pre}
Let $\mathcal{G} = \left(\mathcal{V,E}, \mathbf{X}\right)$ be a graph with the adjacent matrix 
$\mathbf{A}\in\mathbb{R}^{|\mathcal{V}|\times |\mathcal{V}|}$, where $\mathcal{V}$ and  $\mathcal{E}$ are respectively the set of nodes and edges, and  $\mathbf{X}=[\mathbf{x}_1,\mathbf{x}_2,\cdots,\mathbf{x}_{|\mathcal{V}|}] \in\mathbb{R}^{|\mathcal{V}|\times D_v}$ is the $D_v$-dimensional feature matrix for nodes.
In the common semi-supervised node classification setting, we only have access to a small amounts of labeled nodes 
$\mathcal{V}_L$ with their labels $\mathcal{Y}_L$ along with a larger amounts of unlabeled nodes 
$\mathcal{V}_U$, where $|\mathcal{V}_L| \ll |\mathcal{V}_U|$.

% Let $\mathcal{G} = \left(\mathcal{V,E}, \mathbf{X}\right)$ be a graph with the adjacent matrix 
% $\mathbf{A}\in\mathbb{R}^{N\times N}$, where $\mathcal{V}$ is the set of $N$ nodes $\{v_1,v_2\cdots,v_N\}$, $\mathcal{E}$ is the set of edges, and 
% $\mathbf{X} = \left[\mathbf{x}_1,\mathbf{x}_2,\cdots,\mathbf{x}_N\right]^{\mathsf{T}}\in\mathbb{R}^{N\times D_0}$ represents the node feature matrix and 
% $\mathbf{x}_i$ is the feature vector of node $v_i$. In the common semi-supervised node classification setting, we only have access to a small amounts of labeled nodes 
% $\mathcal{V}_L=\{v_1,v_2,\cdots,v_l\}$ with their labels $\mathcal{Y}_L=\{y_1,y_2,\cdots,y_l\}$ along with a larger amounts of unlabeled nodes 
% $\mathcal{V}_U=\{v_1,v_2,\cdots,v_u\}$, where $l \ll u$.
\textbf{Self-training}
Generally, self-training methods on graphs firstly train a vanilla GCN as the base \emph{teacher} model 
$f_\theta\left(\mathbf{X},\mathbf{A}\right)$ with ground-truth labels $\mathcal{Y}_L$, where $\theta$ is the model parameter set. 
We could obtain the probability vector for each node $v_i\in\mathcal{V}$ as: $\mathbf{p}\left(y_i|\mathbf{x}_i,\mathbf{A};\theta\right)=f_\theta\left(\mathbf{x}_i,\mathbf{A}\right)$. For convenience, we abbreviate it to $\mathbf{p}_i$ and denote the \emph{j}-th element of $\mathbf{p}_i$ by $p_{i,j}$. Next, the teacher model pseudo-labels a subset $\mathcal{S}_U\subset\mathcal{V}_U$ of unlabeled nodes with its prediction $\bar{y}_u=\arg\max_j p_{u,j}$ 
for each node $v_u\in\mathcal{S}_U$. The selection of $\mathcal{S}_U$ is based on the confidence score $r_i=\max_{j} p_{i,j}$, \ie only nodes with $r_i$ higher than a threshold or top-\emph{k} high-confidence nodes are added to the labeled dataset. 
Then the augmented dataset $\mathcal{V}_L\cup\mathcal{S}_U$ is used to train a \emph{student} model $f_{\bar{\theta}}$ 
with the following objective function.

% Generally, self-training methods on graphs firstly train the vanilla GCN as the base \emph{teacher} model 
% $f_\theta\left(\mathbf{X},\mathbf{A}\right)$ with ground-truth labels $\mathcal{Y}_L$, where $\theta$ is the model parameter set. 
% The output probability vector $\mathbf{p}\left(y_i|\mathbf{x}_i,\mathbf{A};\theta\right)\in [0,1]^c$ of node $v_i\in\mathcal{V}$ can be obatained by $\mathbf{p}\left(y_i|\mathbf{x}_i,\mathbf{A};\theta\right)=f_\theta\left(\mathbf{x}_i,\mathbf{A}\right)$. 
% For convenience, we abbreviate it to $\mathbf{p}_i$ and denote the \emph{j}-th element of $\mathbf{p}_i$ by $p_{i,j}$.
% Next, the teacher model pseudo-labels a subset $\mathcal{S}_U\subset\mathcal{V}_U$ of unlabeled nodes with its prediction $\bar{y}_u=\arg\max_j p_{i,j}$ 
% for each node $v_u\in\mathcal{S}_U$. The selection of $\mathcal{S}_U$ is based on the confidence score $r_i=\max_{j} p_{i,j}$, i.e., only nodes with $r_i$ higher than a threshold or top-\emph{k} high-confidence nodes are added to the labeled dataset. 
% Then the augmented dataset $\mathcal{V}_L\cup\mathcal{S}_U$ is used to train a \emph{student} model $f_{\bar{\theta}}$ 
% with the following objective function
\begin{equation}
  \label{eq:objective_func}
   \begin{aligned}
      \min_{\bar{\theta}\in\Theta}\mathcal{L}\left(\mathbf{A,X},\mathcal{Y}_L\right)&=\min_{\bar{\theta}\in\Theta}\mathbb{E}_{v_i\in\mathcal{V}_L,y_i\in\mathcal{Y}_L}l\left(y_i,\mathbf{p}_i\right) \\
      &+\lambda\mathbb{E}_{v_u\in\mathcal{S}_U,\mathcal{S}_U\subset\mathcal{V}_U}\mathbb{E}_{\bar{y}_u\thicksim\mathbf{p}\left(y_u|\mathbf{x}_u,A;\theta\right)}l\left(\bar{y}_u,\mathbf{p}_u\right),
    %   &=\min_{\bar{\theta}\in\Theta}\mathbb{E}_{v_i\in\mathcal{V}_L,y_i\in\mathcal{Y}_L}\left[-\log \mathbf{p}_i\right]\\
    %   &+\lambda\mathbb{E}_{v_u\in\mathcal{S}_U,\mathcal{S}_U\subset\mathcal{V}_U}\mathbb{E}_{\bar{y}_u\thicksim\mathbf{p}\left(y_u|\mathbf{x}_u,A;\theta\right)}[-\log \mathbf{p}_u], 
   \end{aligned}
\end{equation}
where $l\left(y_i,\mathbf{p}_i\right) = -\log p_{i,y_i}$ is the multi-class cross entropy loss  and we fix $\lambda=1$ in this paper. 
Finally we replace the teacher model with the student model and iterate the above procedure until convergence.

\textbf{Information Gain}
As can be seen in Eq. \ref{eq:objective_func}, self-training on graphs will exploit the unlabeled data to train the whole model. Here, we aim to measure how an unlabeled node contributes to the model optimization in a principled way, \ie information gain. 
Information gain usually measures the reduction in information given a random variable, where information is generally calculated by the Shannon's entropy \cite{entropy}.
We utilize the information gain here to seek the node $v_u$ which owns the most information about parameters $\theta$ of model posterior and could reduce the number of possible parameter hypotheses maximally fast. We refer to this type of information gain as information gain about model parameters \cite{uncertainty}.
% Information gain captures the uncertainty caused by the lack of information \cite{bayesian_active, what_uncertainty, uncertainty_graph}, {\aka} epistemic uncertainty. The more information is lacking when evaluating a node, the higher the epistemic uncertainty is, the information gain is higher if we supplement this node to the labeled dataset as a result. 
% Instead of widely used confidence criterion in current self-training methods, we propose to adopt information gain, a more fine-grained principle to measure how an unlabeled node contribute to the model optimization.
Formally, given a node $v_u$, the information gain about model parameters is defined as $\mathbb{B}_u$, which could be calculated as follows:
% Next, we elaborate the information gain $\mathbb{B}_u$ about model parameters for each node $v_u$, 
% or more specifically, the information gain between predictive distribution and weight posterior. 
% Technically, $\mathbb{B}_u$ can be calculated by
\begin{equation}
\label{eq:information_gain_first}
\begin{aligned}
  \mathbb{B}_u(y_u,\theta|\mathbf{x}_u,\mathbf{A},\mathcal{G}) = \mathbb{H}[\mathbb{E}_{P(\theta|\mathcal{G})}[y_u|\mathbf{x}_u,\mathbf{A};\theta]] \\- 
  \mathbb{E}_{P(\theta|\mathcal{G})}[\mathbb{H}[y_u|\mathbf{x}_u,\mathbf{A};\theta]],
\end{aligned}
\end{equation}
where $\mathbb{H}(\cdot)$ denotes the Shannon's entropy and $P(\theta|\mathcal{G})$ is the distribution of model posterior.  
The first term measures the information of the model parameters under posterior, while the second term captures the information of model parameters given an additional node $v_u$. 
Obviously, by calculating the difference between the two terms above, $\mathbb{B}_u$ can measure how much information $v_u$ can bring to learn the model parameters $\theta$.

\section{Empirical Analysis}
\label{sec:analysis}
In this section, we conduct a series of empirical analysis to examine whether current graph self-training approaches adopt a principled way to leverage unlabeled data for semi-supervised node classification. 
% we first conduct an empirical analysis of confidence in graph self-training. 
% Our conclusion shows that  confidence may be cheating for graph self-training from the perspective of decision boundary optimization. 
% We further investigate into what will happen when self-training has been cheated by confidence and our analysis points out the existence of distribution shifting between the original labeled dataset and the augmented dataset, which motivates our study in Section \ref{sec:framework} immediately.
% In light of this, we further investigate the distribution of unlabeled nodes by virtue of information gain. We find that confidence will guide existing graph self-training methods to select unlabeled nodes far from the decision boundary, leading to the distribution shifting between the original labeled dataset and the augmented dataset 

\textbf{Empirical Analysis of Confidence}
\label{subsec:empirical_conf}
To better understand the capacity of high-confidence nodes in current self-training approaches, we aim to closely examine that how much additional information these nodes can bring to the model based on information gain.   
We first visualize the relationship between confidence and information gain in Fig. \ref{fig:conf_un_cora}, where the x-axis is the confidence while the y-axis is the information gain, and the blue and orange dots respectively represent nodes with correct and incorrect predictions. From Fig. \ref{fig:conf_un_cora} we can observe a negative correlation, implying that existing graph self-training methods only focus on \emph{easy} nodes (nodes with low information gain) and confidence may be cheating as a result. 
Essentially, such a ``cheating" phenomenon lies in the worthless optimization for a more crisp decision boundary. Specifically, as shown in Fig. \ref{fig:distribution_cora}, on the Cora dataset, we visualize the node embeddings on the last layer of the standard GCN before \emph{softmax} using \emph{t}-SNE \citep{tsne} algorithm,  where a darker dot represents a node with lower information gain. From the plots, we find that most of 
easy nodes (\ie low information gain) are far from the decision boundary. Whereas, these nodes are always emphasized by current self-training methods on graphs~\cite{deeper, m3s, abn} by force of high confidence. That is, these methods are ``cheated'' by confidence in this way. 

\begin{figure}
	\centering
	\subfigure[]{\label{fig:conf_un_cora}\includegraphics[width=0.45\columnwidth]{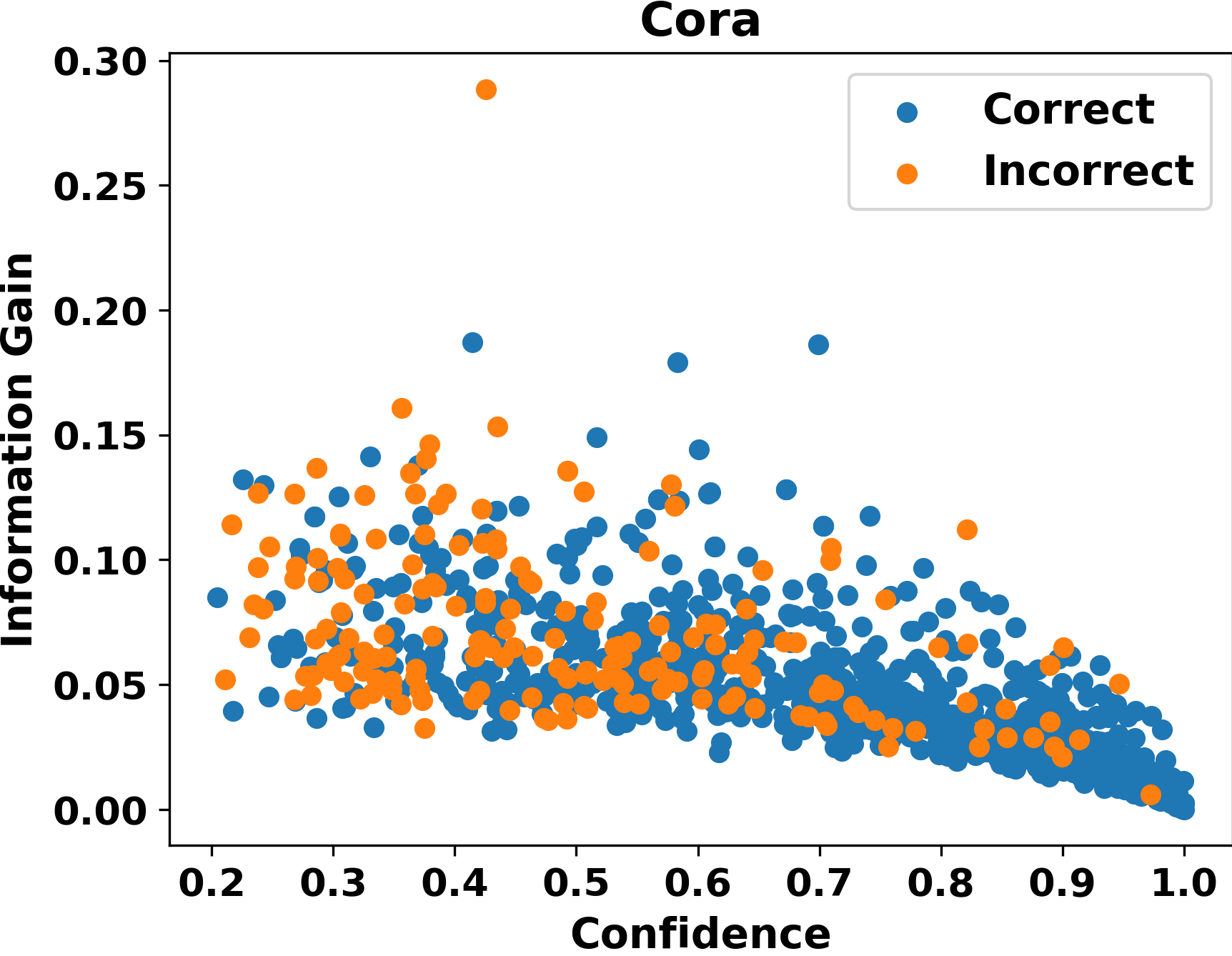}}
	\subfigure[]{\label{fig:distribution_cora}\includegraphics[width=0.45\columnwidth]{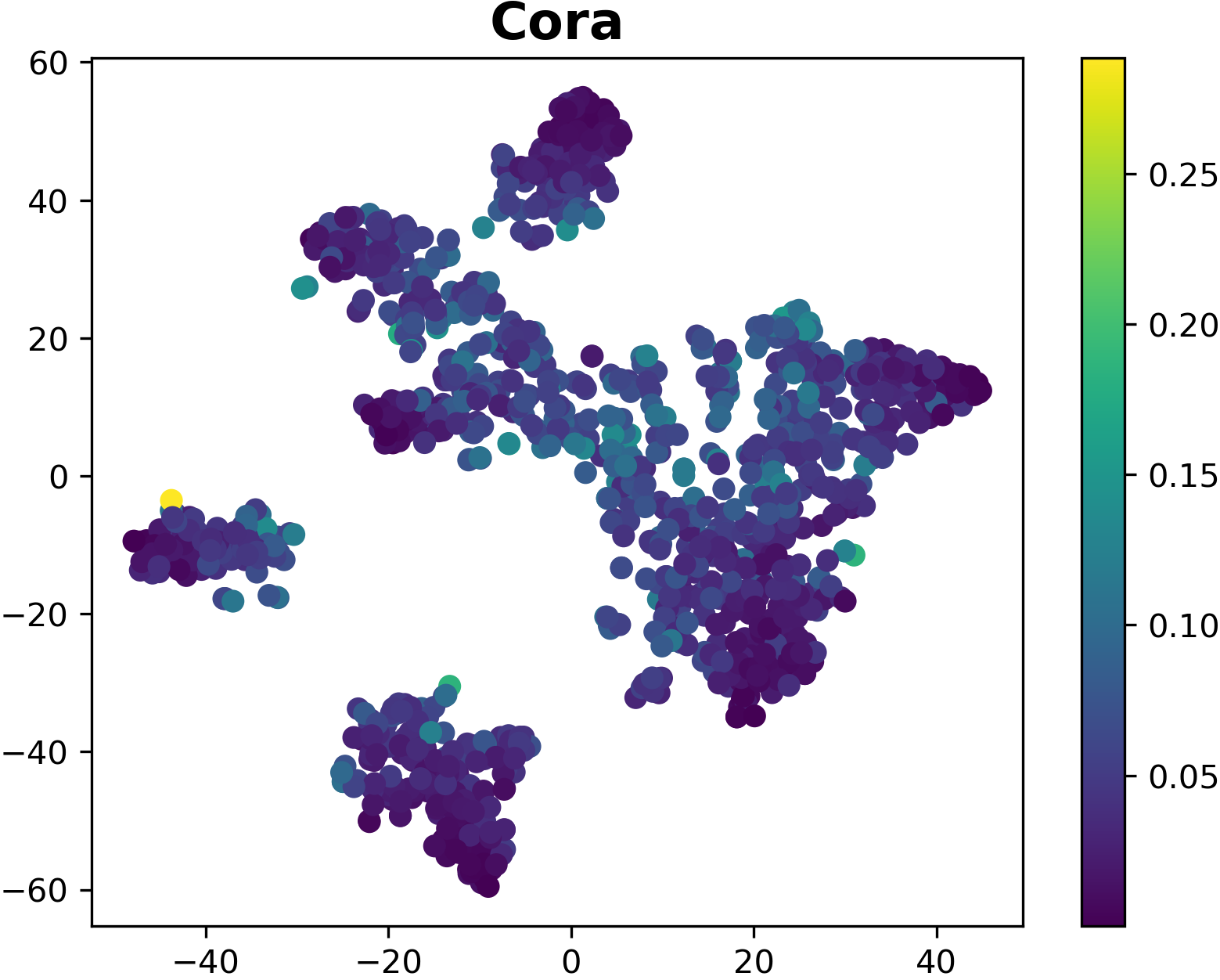}}
  \caption{(a): Relationship between confidence and information gain on Cora. (b): Visualization of embeddings on Cora}
\end{figure}

% \begin{figure}
% 	\centering
% 	\subfigure{\includegraphics[width=0.49\columnwidth]{fig/TSNE_Cora.png}}
% 	\subfigure{\includegraphics[width=0.49\columnwidth]{}}
%   \caption{Visualization of embeddings on Cora and Citeseer.}
% 	\label{fig:distribution}
% \end{figure}

% \begin{figure}
% 	\centering
% 	\subfigure[before self-training]{\label{fig:illustration_before}\includegraphics[width=0.49\columnwidth]{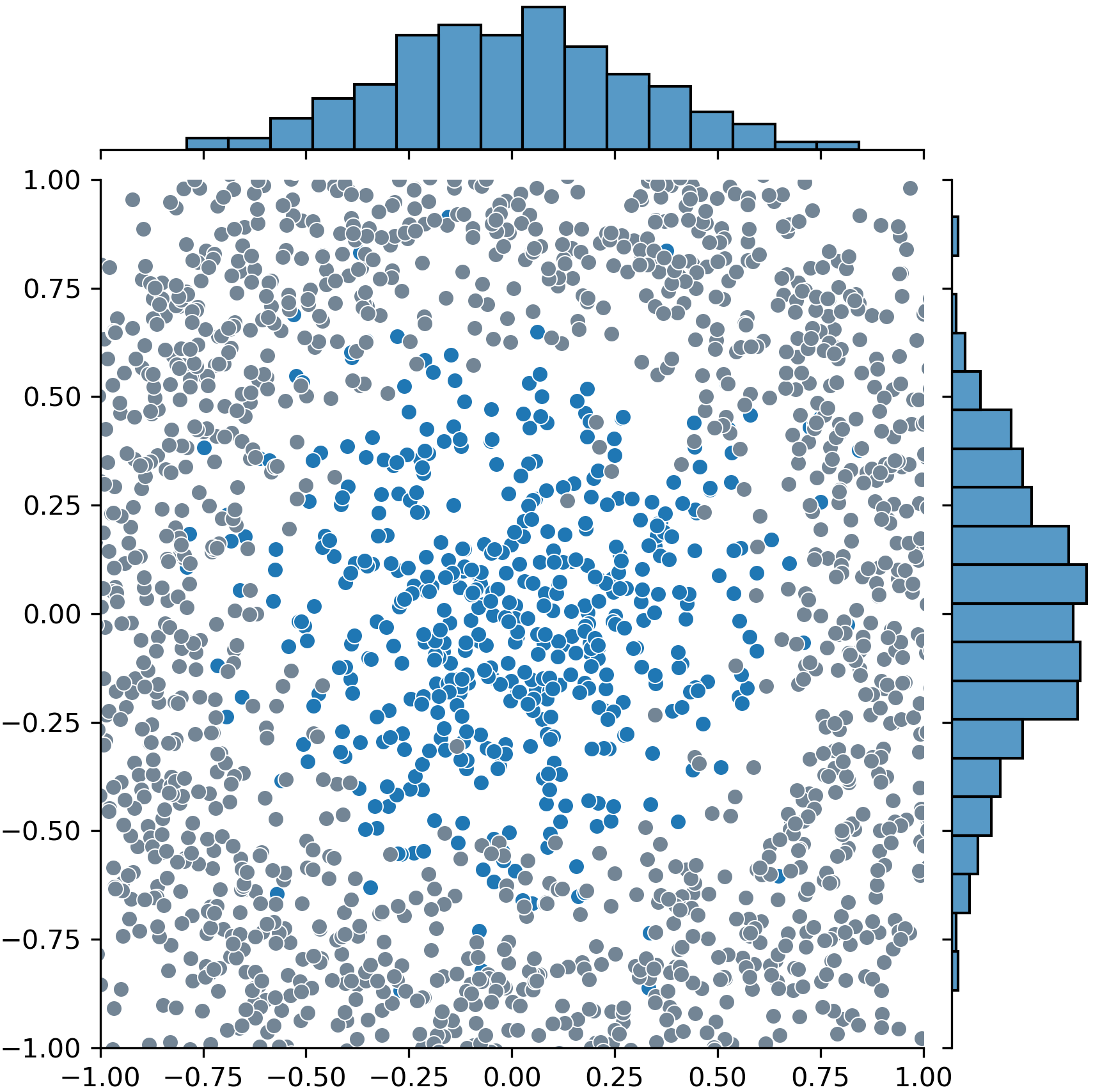}}
%   \subfigure[after self-training]{\label{fig:illustration_after}\includegraphics[width=0.49\columnwidth]{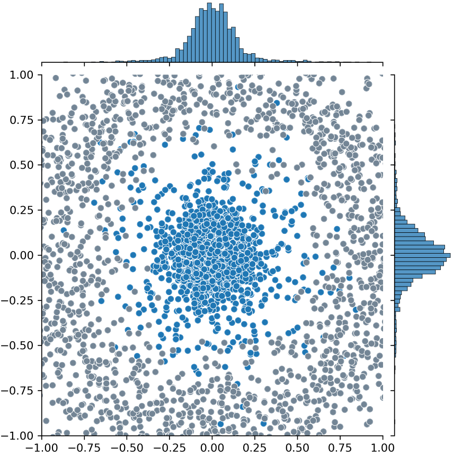}}
%   \subfigure[the ratio of $P_{pop}$ and $P_{st}$]{\label{fig:ratio}\includegraphics[width=0.49\columnwidth]{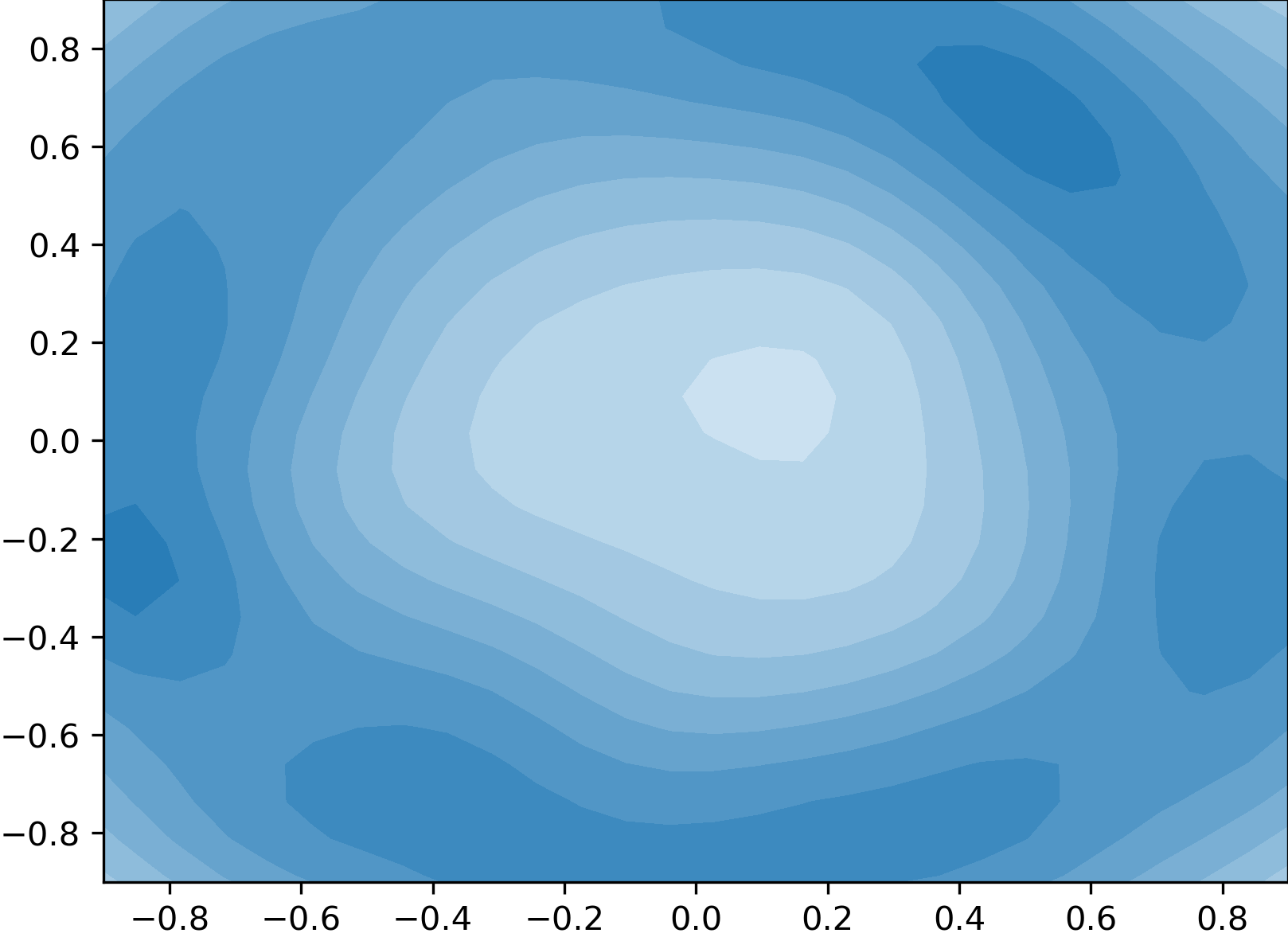}}
%   \caption{Visualization of embeddings under the ideal case.}
% 	\label{fig:illustration}
% \end{figure}

\begin{figure}


	\centering
% 	\subfigure[Citeseer]{\label{fig:distribution_citeseer}\includegraphics[width=0.4\columnwidth]{fig/TSNE_Citeseer.png}}
	\subfigure[$P_{pop}$]{\label{fig:illustration_before}\includegraphics[width=0.32\columnwidth]{fig/gauss_before.png}}
      \subfigure[$P_{st}$]{\label{fig:illustration_after}\includegraphics[width=0.32\columnwidth]{fig/gauss_after.png}}
      \subfigure[ratio of $P_{pop}$ and $P_{st}$]{\label{fig:ratio}\includegraphics[width=0.32\columnwidth]{fig/ratio.png}}
  \caption{Visualization of labeled nodes under the ideal condition. ($P_{pop}$: distribution before self-training, $P_{st}$: distribution after self-training)}
	\label{fig:illustration}
\end{figure}

% \label{subsec:emprical_dis}
% For better perform self-training on graphs, we should be comprehensively aware of the distribution of unlabeled nodes. Hence, we respectively visualize the node embeddings on Cora and Citeseer using \emph{t}-SNE \citep{tsne} algorithm in Fig.~\ref{fig:distribution_cora} and  Fig.~\ref{fig:distribution_citeseer}, where a darker dot represents a node with lower information gain. From the plots, we find that most of 
% easy nodes are far from the decision boundary. Whereas, these nodes are always emphasized by current self-training methods on graphs~\cite{deeper, m3s, abn} by force of high confidence. That is, these methods are ``cheated'' by confidence to perform worthless optimization for more crisp decision boundary.
\textbf{Empirical Analysis of Distribution Shift}
Furthermore, we investigate what will happen when self-training has been cheated by confidence. As an illustrative example, we randomly generate 500 nodes (blue) following two-dimensional Gaussion distribution $\mathcal{N}(0,0,0.3,0.3,0)$ to represent labeled nodes in one class, and another 4000 nodes (grey) following the distribution of concentric circles \cite{confidence_score} to represent labeled nodes belonging to other classes, as shown in Fig. \ref{fig:illustration_before}. Furthermore, following the common self-training setting, a large amount of unlabeled nodes still exists in the dataset, but for clarity, we omit them in the figure.
% we visualize embeddings of labeled nodes before self-training under the ideal condition in Fig. \ref{fig:illustration_before}, where color (\ie blue and grey) indicates the node class. It is worthwhile to note that lots of unlabeled nodes are not shown in the plot. 
In line with the core idea of current self-training methods, for the ``blue'' class,  unlabeled nodes around the center are pseudo-labeled for self-training since these nodes 
have high confidence (\aka far from the decision boundary). During iteration, as shown in Fig~\ref{fig:illustration_after}, the data distribution will become more and more sharpen since nodes far from the decision boundary are paid disproportionate attention and thus the unsatisfying \emph{Distribution Shift} phenomenon between the original and augmented dataset indeed appears.
\section{The {\model} Framework}
\label{sec:framework}
In this section,  we elaborate the proposed {\model}, a novel self-training framework aiming at recovering the shifted distribution. 
% In particular, we re-formulate the self-training task as an information gain weighted loss function, and then distribution could be naturally recovered by estimating information gain on graphs, assisted with dropout/dropedge variational inference. To prevent negative impact of incorrect pseudo labels during distribution recovery, a loss correction strategy is creatively adopted. Lastly, we theoretically analyze the rationality of the whole {\model} framework. 

% In this section, we elaborate the proposed {\model}, a novel self-training framework based on the information gain. 
% We first address the distribution shift in Section \ref{subsec:weighted}, \ref{subsec:information_gain}. 
% Specifically, in Section \ref{subsec:weighted} we prove the equality of the loss function between the ideal and the distribution shift case with an additional weight coefficient,  
% then in Section \ref{subsec:information_gain} approximate the above weight coefficient by the information gain, which can be estimated via both dropout and dropedge variational inference. 
% After recovering the distribution, in Section \ref{subsec:loss_correction} we introduce a \emph{loss correction} \cite{loss_correction1,loss_correction2,loss_correction3} strategy to improve the impact of incorrect pseudo labels.
% Subsequently, in Section \ref{subsec:overview} we put the above module together and illustrate the jointly optimization process. 
% Lastly, 

\subsection{Information Gain Weighted Loss Function Towards Distribution Shift}
\label{subsec:weighted}
We start with the formulation of the self-training task by analyzing the corresponding loss functions.
%We start with an analysis of the loss function under both the ideal case and the distribution shift case. 
Specifically, assuming that the original labeled dataset follows the population distribution $P_{pop}$, given a classifier $f_\theta$ parameterized by $\theta$, the best parameter set $\theta$ could be obtained via minimizing the following loss function:
\begin{equation}
  \label{eq:loss_pop}
    \mathcal{L}_{pop} = \mathbb{E}_{(v_i,y_i)\thicksim P_{pop}(\mathcal{V,Y})}l(y_i,\mathbf{p}_i).
\end{equation}
% The gradient descent algorithm is a good solution for the above optimization problem and the gradient of $\mathcal{L}_{pop}$ w.r.t. the parameters $\theta$ can be written as: 
% \begin{equation}
%     \nabla_\theta \mathcal{L}_{pop} = \mathbb{E}_{(v_i,y_i)\thicksim P_{pop}(\mathcal{V,Y})}\nabla_\theta l(y_i,\mathbf{p}_i).
% \end{equation}
Similarly, under the {\ds} case caused by self-training, the loss function can be represented as
\begin{equation}
  \label{eq:loss_st}
    \begin{aligned}
        \mathcal{L}_{st} &= \frac{|\mathcal{V}_L|}{|\mathcal{V}_L\cup\mathcal{S}_U|} \mathbb{E}_{(v_i,y_i)\thicksim P_{pop}(\mathcal{V,Y})}l(y_i,\mathbf{p}_i) \\
        &+ \frac{|\mathcal{S}_U|}{|\mathcal{V}_L\cup\mathcal{S}_U|}\mathbb{E}_{(v_u,y_u)\thicksim P_{st}(\mathcal{V,Y})}l(\bar{y}_u,\mathbf{p}_u),
    \end{aligned}
\end{equation}
% \begin{equation}
%     \begin{aligned}
%         \nabla_\theta \mathcal{L}_{st} &= \frac{|\mathcal{V}_L|}{|\mathcal{V}_L\cup\mathcal{S}_U|} \mathbb{E}_{(v_i,y_i)\thicksim P_{pop}(\mathcal{V,Y})}\nabla_\theta l(y_i,\mathbf{p}_i) \\
%     &+ \frac{|\mathcal{S}_U|}{|\mathcal{V}_L\cup\mathcal{S}_U|}\mathbb{E}_{(v_u,y_u)\thicksim P_{st}(\mathcal{V,Y})}\nabla_\theta l(\bar{y}_u,\mathbf{p}_u),
%     \end{aligned}
% \end{equation}
where $P_{st}$ represents the shifted distribution of the augmented dataset.

Generally, the {\ds} could lead to a terrible generalization during evaluation, and thus severely threaten the capacity of graph self-training. Therefore, It is ideal to optimize $f_\theta$ with the loss function $\mathcal{L}_{pop}$ under the population distribution rather than  $\mathcal{L}_{st}$ under the {\ds} case. However, only $\mathcal{L}_{st}$ is available in practice. To close the gap, we show the following theorem.
% Considering that the distribution shifting could severely threaten the capacity of graph self-training, we hope that $f_\theta$ can be optimized using the loss function $\mathcal{L}_{pop}$ under the ideal case instead of $\mathcal{L}_{st}$ under the distribution shifting case.
% But in fact, only $\mathcal{L}_{st}$ is available for us.
% Fortunately, by virtue of the following theorem, we have proved that $\mathcal{L}_{st}$ is equal to $\mathcal{L}_{pop}$ as long as we add a coefficient to each pseudo-labeled node in $\mathcal{L}_{st}$.
\begin{theorem}
\label{theory:loss_function}
    Given $\mathcal{L}_{pop}$ and $\mathcal{L}_{st}$ defined in Eq. \ref{eq:loss_pop} and Eq. \ref{eq:loss_st}, assuming that $\bar{y}_u=y_u$ for each pseudo-labeled node $v_u\in\mathcal{S}_U$, 
    then $\mathcal{L}_{st}=\mathcal{L}_{pop}$ holds true if $\mathcal{L}_{st}$ can be written with an additional weight coefficient $\gamma_u=\frac{P_{pop}(v_u,y_u)}{P_{st}(v_u,y_u)}$ as follows:
    \begin{equation}
    \label{eq:loss_equality}
        \begin{aligned}
            \mathcal{L}_{st} &= \frac{|\mathcal{S}_U|}{|\mathcal{V}_L\cup\mathcal{S}_U|}\mathbb{E}_{(v_u,y_u)\thicksim P_{st}(\mathcal{V,Y})}\gamma_u l(\bar{y}_u,\mathbf{p}_u)\\
            &+\frac{|\mathcal{V}_L|}{|\mathcal{V}_L\cup\mathcal{S}_U|}\mathbb{E}_{(v_i,y_i)\thicksim P_{pop}(\mathcal{V,Y})}l(y_i,\mathbf{p}_i),
        \end{aligned}
    \end{equation}
\end{theorem}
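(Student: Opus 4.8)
The plan is to prove the claim by a change-of-measure (importance sampling) argument applied to the pseudo-labeled term, after which both summands collapse to $\mathcal{L}_{pop}$. I would start from the right-hand side of Eq.~\ref{eq:loss_equality} and concentrate on its first summand, the expectation over $P_{st}$ carrying the weight $\gamma_u$. Writing this expectation as a sum (or integral) over the joint support of $(v_u, y_u)$ and substituting $\gamma_u = \frac{P_{pop}(v_u, y_u)}{P_{st}(v_u, y_u)}$, the factors of $P_{st}(v_u, y_u)$ cancel, so
\begin{equation}
\mathbb{E}_{(v_u,y_u)\thicksim P_{st}}\gamma_u l(\bar{y}_u,\mathbf{p}_u) = \mathbb{E}_{(v_u,y_u)\thicksim P_{pop}} l(\bar{y}_u,\mathbf{p}_u).
\end{equation}
This is the central step: the weight $\gamma_u$ is exactly the likelihood ratio that re-expresses an expectation under the shifted distribution as one under the population distribution.

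Next, I would invoke the stated assumption $\bar{y}_u = y_u$ for each $v_u \in \mathcal{S}_U$, which lets me replace $l(\bar{y}_u, \mathbf{p}_u)$ by $l(y_u, \mathbf{p}_u)$ inside the expectation. The pseudo-labeled term then reads $\mathbb{E}_{(v_u, y_u)\thicksim P_{pop}} l(y_u, \mathbf{p}_u)$, which is identical in form to the defining expectation of $\mathcal{L}_{pop}$ in Eq.~\ref{eq:loss_pop} and hence equals $\mathcal{L}_{pop}$. The second summand of Eq.~\ref{eq:loss_equality} is already $\mathbb{E}_{(v_i, y_i)\thicksim P_{pop}} l(y_i, \mathbf{p}_i) = \mathcal{L}_{pop}$ by definition. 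Substituting both back gives
\begin{equation}
\mathcal{L}_{st} = \left(\frac{|\mathcal{S}_U|}{|\mathcal{V}_L\cup\mathcal{S}_U|} + \frac{|\mathcal{V}_L|}{|\mathcal{V}_L\cup\mathcal{S}_U|}\right)\mathcal{L}_{pop}.
\end{equation}
Finally, since $\mathcal{V}_L$ is labeled and $\mathcal{S}_U \subset \mathcal{V}_U$ is unlabeled, the two sets are disjoint, so $|\mathcal{V}_L \cup \mathcal{S}_U| = |\mathcal{V}_L| + |\mathcal{S}_U|$, the bracketed coefficient is exactly $1$, and $\mathcal{L}_{st} = \mathcal{L}_{pop}$ as required.

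The computation itself is routine; the part deserving the most care is the well-posedness of the change of measure. The identity in the central step requires $P_{st}(v_u, y_u) > 0$ wherever $P_{pop}(v_u, y_u) > 0$ (absolute continuity of $P_{pop}$ with respect to $P_{st}$ on the relevant support), so that $\gamma_u$ is finite and the cancellation is legitimate; I would flag this as the governing regularity condition. A secondary point worth stating explicitly is that the two $P_{pop}$-expectations --- one over the genuinely labeled pairs and one arising from the re-weighted pseudo-labeled pairs --- are being identified with the same functional $\mathcal{L}_{pop}$, which is justified precisely because, under the assumption $\bar{y}_u = y_u$, every pseudo-labeled pair is, after re-weighting by $\gamma_u$, an honest draw from $P_{pop}$ scored by the same cross-entropy loss.
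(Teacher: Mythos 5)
Your proof is correct and follows essentially the same route as the paper's: the paper writes $\mathcal{L}_{pop}$ as the same convex combination (with coefficients $\tfrac{|\mathcal{S}_U|}{|\mathcal{V}_L\cup\mathcal{S}_U|}$ and $\tfrac{|\mathcal{V}_L|}{|\mathcal{V}_L\cup\mathcal{S}_U|}$), applies the identical change-of-measure identity $\mathbb{E}_{P_{st}}\gamma_u l = \mathbb{E}_{P_{pop}} l$ to the pseudo-labeled term, and invokes $\bar{y}_u=y_u$ in the same way; you merely run the chain of equalities in the opposite direction. Your explicit remark on absolute continuity of $P_{pop}$ with respect to $P_{st}$ is a reasonable regularity point the paper leaves implicit, but it does not change the argument.
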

\begin{proof}
  Please refer to \ref{appendix:proof_loss_function}.
\end{proof}

Based on Theorem \ref{theory:loss_function}, we can find that our desired $\mathcal{L}_{pop}$ can be written as the available $\mathcal{L}_{st}$ only if a coefficient $\gamma_u$ is added to $\mathcal{L}_{st}$. In other words, the {\ds} issue could be addressed by optimizing $f_\theta$ with available $\mathcal{L}_{st}$ weighted by $\gamma_u$ (in Eq. \ref{eq:loss_equality}).
%In other words, if we optimize $f_\theta$ using $\gamma_u$ weighted $\mathcal{L}_{st}$ shown in Eq. \ref{eq:loss_equality}, we can solve the distribution shifting issue. 
However, it should be noted that the population distribution $P_{pop}$ in $\mathcal{L}_{st}$ is generally intractable, which means that $\gamma_u$ cannot be accurately calculated. 

% it is equivalent to weight each pseudo-labeled node in the loss function of self-train with a proper coefficient (\ie $\gamma_u$). The conclusion is appealing, while the solution is non-trivial since the population distribution $P_{pop}$ is generally intractable.

% From Theorem \ref{theory:loss_function} we can discover that the population distribution can be recovered as long as we weight the loss function of each pseudo-labeled node with a proper coefficient.
% However, it should be noted that the population distribution $P_{pop}$ is generally intractable, which means $\gamma_u$ can not be solved as well. 
% Moreover, we have ever assumed that the original labeled dataset follows a population distribution, which is impossible in the real world. 
% Therefore, we can only approximate $\gamma_u$ as much as possible to recover the distribution of population distribution. 
% and then fine-tune it to approach the population distribution. 
% Next, we elaborate our design of the weight coefficient $\gamma_u$.
% To avoid the complicated estimation of the population distribution $P_{pop}$, we directly bridge the gap between the normalized information gain towards model parameters  between above weight coefficient (\ie $\gamma_u$). 
% To this end, we propose to approximate $\gamma_u$ with the information gain. Our motivation is as follows. 
To this end, we propose to build the bridge between $\gamma_u$ and the information gain, which is motivated as follows.
Recalling the data distributions shown in Fig. \ref{fig:illustration_before} and Fig. \ref{fig:illustration_after}, we could formally represent the former as $P_{pop}$ and the latter as $P_{st}$. We visualize the desired weight coefficient $\gamma_u = \frac{P_{pop}(v_u,y_u)}{P_{st}(v_u,y_u)}$ for each pseudo-labeled node $v_u$ in Fig. \ref{fig:ratio} for better understanding its changing trend, where the darker area means the larger $\gamma_u$.  Obviously, we observe that $\gamma_u$ becomes smaller when getting closer to the center area (\aka far away from the decision boundary), which is consistent with the change trend of the information gain. This finding inspires us to adopt the information gain to approximate $\gamma_u$.

% Instead, we employ the normalized information gain about model parameters as the weight coefficient. Our motivation is as follows: 
% Recalling Fig. \ref{fig:illustration_before} and Fig. \ref{fig:illustration_after}, where the former is the distribution of embeddings under the ideal case 
% while the latter is that under the distribution shift case after self-training, thus we can utilize the representation $P_{pop}$ and $P_{st}$ 
% to respectively represent them. With this, we visualize the desired weight coefficient $\gamma_u = \frac{P_{pop}}{P_{st}}$ for each pseudo-labeled node $v_u$ 
% in Fig. \ref{fig:ratio} to investigate its changing trend, where the darker area represents the larger $\gamma_u$. 
% Obviously, we can observe that $\gamma_u$ becomes smaller when getting closer to the center and away from the decision boundary, 
% which is consistent with the information gain. 
% Therefore, we propose to approximate $\gamma_u$ using the information gain but with an additional normalization to recover the population distribution as much as possible.

\subsection{Information Gain Estimation on Graphs}
\label{subsec:information_gain}
Next, we elaborate the estimation of the information gain for each node $v_u$ in graph. As mentioned in Eq. \ref{eq:information_gain_first}, the distribution of model posterior $P(\theta|\mathcal{G})$ is desired for calculating information gain,
but it is intractable in practice, and always computationally expensive for traditional bayesian neural networks~\cite{bayesian1,bayesian2,bayesian3}.
% the core of the information gain estimation is the calculation of  the weight posterior $P(\theta|\mathcal{G})$ (detailed in Eq.~\ref{eq:information_gain}). Unfortunately, its intractability hinders us from performance efficient inference.
% Technically, given a node $v_u$, we define its information gain as $\mathbb{B}_u$, which could be calculated via predictive distribution and weight posterior, as follow:
% % Next, we elaborate the information gain $\mathbb{B}_u$ about model parameters for each node $v_u$, 
% % or more specifically, the information gain between predictive distribution and weight posterior. 
% % Technically, $\mathbb{B}_u$ can be calculated by
% \begin{equation}
% \label{eq:information_gain}
% \begin{aligned}
%   \mathbb{B}_u(y_u,\theta|\mathbf{x}_u,\mathbf{A},\mathcal{G}) = \mathbb{H}[\mathbb{E}_{P(\theta|\mathcal{G})}[y_u|\mathbf{x}_u,\mathbf{A};\theta]] \\- 
%   \mathbb{E}_{P(\theta|\mathcal{G})}[H[y_u|\mathbf{x}_u,\mathbf{A};\theta]],
% \end{aligned}
% \end{equation}
% where the first term is the Shannon’s entropy \cite{entropy} of the predictive distribution $\mathbf{p}(y_u|\mathbf{x}_u,\mathbf{A},\mathcal{G})$ under the expected weight posterior $P(\theta|\mathcal{G})$, while the second term is the average of Shannon’s entropy for a given $\theta$ sampled from the weight posterior. Since $P(\theta|\mathcal{G})$ is intractable in practice, it is always computationally expensive for traditional bayesian neural networks~\cite{bayesian1,bayesian2,bayesian3} for inference.  
% Due to  ease of implementation and
% computational efficiency in deep architectures, 
Instead, we could shift attention towards dropout~\cite{dropout} and dropedge~\cite{dropedge}, a type of regularization technique for preventing over-fitting and over-smoothing in GCNs, which could be both interpreted as an approximation of $P(\theta|\mathcal{G})$~\cite{gdc}. Consequently, we propose to estimate  the information gain assisted with dropout and dropedge ({\aka} \emph{dropout} and \emph{dropedge variational inference}), which takes into account both features and the network topology in our unified framework {\model}. For distinction, we refer to {\model} with dropout variational inference as {\model}$_{do}$ and that with dropedge variational inference as {\model}$_{de}$.
% where on the right-hand side, the former is the Shannon’s entropy \cite{entropy} of the predictive distribution $\mathbf{p}(y_u|\mathbf{x}_u,\mathbf{A},\mathcal{G})$ under the expected weight posterior $P(\theta|\mathcal{G})$
% while the latter is the average of Shannon’s entropy given a sample $\theta$ from the weight posterior. 
% It should be noted that $P(\theta|\mathcal{G})$ is intractable in practice, which is a common challenge in Bayesian Neural Networks when performing inference. 
% Different strategies have been proposed \cite{bayesian1,bayesian2,bayesian3}, but most of them are computationally expensive. 
% Fortunately, \cite{dropout} proposes that the dropout distribution can be interpreted as an approximation of $P(\theta|\mathcal{G})$, 
% so that an inference called dropout variational inference is put forward, greatly reducing the computation overhead. Moreover, considering the rich network topology of graphs, \cite{gdc} proposes that another graph data augmentation \emph{dropedge} \cite{dropedge} can also be interpreted as an approximation of $P(\theta|\mathcal{G})$, which is called dropedge variational inference similarly.
% In this paper, we utilize the above two variational inference to calculate the information gain, so as to take into account both features and the network topology in an unified framework.

\subsubsection{Dropout Variational Inference}
Specifically, given a $L$-layer GCN model $f_\theta$, its $l$-th layer output $\mathbf{H}^{(l)} \in \mathbb{R}^{|\mathcal{V}|\times D_l}$ 
can be obtained by 
\begin{equation}
  \mathbf{H}^{(l)}=\sigma(\mathfrak{N}(\mathbf{A})\mathbf{H}^{(l-1)}\mathbf{W}^{(l-1)}),
\end{equation} 
where $\mathfrak{N}(\cdot)$ represents the normalizing operator, 
$\mathbf{W}^{(l-1)}\in\mathbb{R}^{D_{l-1}\times D_l}$ is the (\emph{l}-1)-th layer weight matrix, $\sigma(\cdot)$ is the activation function and 
$\mathbf{H}^{(1)}=\mathbf{X}\in\mathbb{R}^{|\mathcal{V}|\times D_v}$, $\theta=\{\mathbf{W}^{(l)}\}_{l=1}^L$. Dropout randomly masks features of nodes in the graph through drawing from an independent Bernoulli random variable.
%Drawing from an independent Bernoulli random variable, dropout randomly masks features of nodes in the graph. 
Formally, the $l$-th layer output of $f_\theta$ with dropout can be written as:
\begin{equation}
  \label{eq:dropout}
  \mathbf{H}^{(l)}=\sigma(\mathfrak{N}(\mathbf{A})(\mathbf{H}^{(l-1)}\odot\mathbf{Z}^{(l-1)})\mathbf{W}^{(l-1)}),
\end{equation}
where each element of $\mathbf{Z}^{(l)}\in \{0,1\}^{D_{l-1}\times D_{l-1}}$ is a sample of Bernoulli random variable, representing whether or not the corresponding feature in $\mathbf{H}^{(l-1)}$ is set to zero.

Such Bernoulli random sampling on features can also be treated as a sample from $P(\theta|\mathcal{G})$ \cite{dropout}, thus we can perform $T$-times Monte-Carlo sampling (referred to Monte-Carlo dropout, \emph{MC-dropout}) during \emph{inference} to estimate $P(\theta|\mathcal{G})$. At each time $t$, a probability vector $\tilde{\mathbf{p}}_u^t = \tilde{\mathbf{p}}^t(y_u|\mathbf{x}_u,\mathbf{A};\tilde{\theta}_t)$ 
can be obtained by performing forward pass under such a sample weight $\tilde{\theta}_t$, \ie $\tilde{\mathbf{p}}_u^t=f_{\tilde{\theta}^t}(\mathbf{x}_u,\mathbf{A})$. 

However, from the perspective of the computational overhead and practical performance, we only conduct dropout on the last layer during MC-dropout.
In other words, the probability vector $\tilde{\mathbf{p}}_u^t\in\tilde{\mathbf{P}}^t=f_{\tilde{\theta}^t}({\mathbf{X},\mathbf{A}})$ at each time $t$ can be obtained by:
\begin{equation}
\label{eq:dropout_p}
  \tilde{\mathbf{P}}^t=\sigma(\mathfrak{N}(\mathbf{A})(\mathbf{Z}^{(t)}\odot\sigma(\mathfrak{N}(\mathbf{A})\cdots\sigma(\mathfrak{N}(\mathbf{A})\mathbf{XW}^{(1)})\cdots)\mathbf{W}^{(l-1)}))\mathbf{W}^{(l)})
\end{equation}

% where at each time \emph{t}, the 
% output $\tilde{\mathbf{P}}^t=f_{\tilde{\theta}^t}({\mathbf{X},\mathbf{A}})=\{\tilde{\mathbf{p}}_u^t\}_{v_u\in\mathcal{S}_U}$ is obtained by 

% Specifically, the dropout variational inference treats model parameters after dropout as a sample from $P(\theta|\mathcal{G})$, and 
% performs $T$-times Monte-Carlo sampling (refered to Monte-Carlo dropout, \emph{MC-dropout}) during \emph{inference} based on this, where at each time $t$, a probability vector $\tilde{\mathbf{p}}_u^t = \tilde{\mathbf{p}}^t(y_u|\mathbf{x}_u,\mathbf{A};\tilde{\theta}_t)$ 
% can be obtained by performing forward pass under such a sample weight $\tilde{\theta}_t$, i.e., $\tilde{\mathbf{p}}_u^t=f_{\tilde{\theta}^t}(\mathbf{x}_u,\mathbf{A})$. 
% Consequently, the predictive distribution $\mathbf{p}(y_u|\mathbf{x}_u,\mathbf{A},\mathcal{G})$ can be obatianed by averaging all the $\tilde{\mathbf{p}}_u^t$: 
% \begin{equation}
%   \label{eq:predictive_distribution}
%   \mathbf{p}_u^{\mathcal{G}}=\mathbf{p}(y_u|\mathbf{x}_u,\mathbf{A},\mathcal{G})=\frac{1}{T}\sum_{t=1}^T \tilde{\mathbf{p}}_u^t, 
%   \tilde{\theta}_t\sim P(\theta|\mathcal{G}),
% \end{equation}
% and thus the information gain $\mathbb{B}_u$ can be calculated by: 
% \begin{equation}
%   \label{eq:information_gain}
%   \mathbb{B}_u(y_u,\theta|\mathbf{x}_u,\mathbf{A},\mathcal{G})=-\sum_{d=1}^D p_{u,d}^\mathcal{G}\log p_{u,d}^\mathcal{G}+\frac{1}{T}\sum_{d=1}^D\sum_{t=1}^T \tilde{p}_{u,d}^t \log \tilde{p}_{u,d}^t.
% \end{equation}

\subsubsection{Dropedge Variational Inference}
The dropedge variational inference takes a similar way with dropout variation inference, but imposes the randomness on the network topology instead. 

Specifically, the $l$-th layer output of $f_\theta$ with dropedge can be written as:
\begin{equation}
  \label{eq:dropedge}
  \mathbf{H}^{(l)}=\sigma(\mathfrak{N}(\mathbf{A}\odot\mathbf{Z}^{(l-1)})\mathbf{H}^{(l-1)}\mathbf{W}^{(l-1)}),
\end{equation}
where each element of $\mathbf{Z}^{(l)}\in \{0,1\}^{|\mathcal{V}|\times |\mathcal{V}|}$ is also a sample of Bernoulli random variable, 
representing whether or not the corresponding edge in $\mathbf{A}$ is removed.

Similarly, we only conduct dropedge on the last layer and perform $T$-times Monte-Carlo sampling (referred to as Monte-Carlo dropedge) base on dropedge, where at each time \emph{t}, the probability vector 
$\tilde{\mathbf{p}}_u^t\in\tilde{\mathbf{P}}^t=f_{\tilde{\theta}^t}({\mathbf{X},\mathbf{A}})$ at each time $t$ is obtained by
\begin{equation}
\label{eq:dropedge_p}
  \tilde{\mathbf{P}}^t=\sigma(\mathfrak{N}(\mathbf{A}\odot\mathbf{Z}^{(t)})\sigma(\mathfrak{N}(\mathbf{A})\cdots\sigma(\mathfrak{N}(\mathbf{A})\mathbf{XW}^{(1)})\cdots)\mathbf{W}^{(l-1)})\mathbf{W}^{(l)}).
\end{equation}

\subsubsection{Information Gain Estimation}
With such probability vector $\tilde{\mathbf{p}}_u^t$ obtained by Eq. \ref{eq:dropout_p} or Eq. \ref{eq:dropedge_p}, we can calculate the prediction distribution $\mathbf{p}_u^\mathcal{G}$ by averaging all the $\tilde{\mathbf{p}}_u^t$: 
\begin{equation}
  \label{eq:predictive_distribution}
  \mathbf{p}_u^{\mathcal{G}}=\mathbf{p}(y_u|\mathbf{x}_u,\mathbf{A},\mathcal{G})=\frac{1}{T}\sum_{t=1}^T \tilde{\mathbf{p}}_u^t, 
  \tilde{\theta}_t\sim P(\theta|\mathcal{G}),
\end{equation}
and thus the information gain $\mathbb{B}_u$ can be calculated by: 
\begin{equation}
  \label{eq:information_gain}
  \mathbb{B}_u(y_u,\theta|\mathbf{x}_u,\mathbf{A},\mathcal{G})=-\sum_{d=1}^D p_{u,d}^\mathcal{G}\log p_{u,d}^\mathcal{G}+\frac{1}{T}\sum_{d=1}^D\sum_{t=1}^T \tilde{p}_{u,d}^t \log \tilde{p}_{u,d}^t.
\end{equation}

Finally, we weight the loss function with above information gain after normalization:
% \begin{equation}
%   \label{eq:bar_information_gain}
%   \bar{\mathbb{B}}_u=\frac{\mathbb{B}_u}{\beta\cdot\frac{1}{|\mathcal{S}_U|}\sum_i\mathbb{B}_i}
% \end{equation}

% \begin{equation}
%   \label{eq:final_loss}
%   \begin{aligned}
%   \mathcal{L}(\mathbf{A,X},\mathcal{Y}_L)
%   &\approx\frac{|\mathcal{S}_U|}{|\mathcal{V}_L\cup\mathcal{S}_U|}\mathbb{E}_{(v_u,y_u)\thicksim P_{st}(\mathcal{V,Y})}\bar{\mathbb{B}}_u l(\bar{y}_u,\mathbf{p}_u)\\
%   &+\frac{|\mathcal{V}_L|}{|\mathcal{V}_L\cup\mathcal{S}_U|}\mathbb{E}_{(v_i,y_i)\thicksim P_{pop}(\mathcal{V,Y})}l(y_i,\mathbf{p}_i),
% \end{aligned}
% \end{equation}

\begin{equation}
  \label{eq:final_loss}
  \begin{aligned}
  \mathcal{L}_{st}
  & = \frac{|\mathcal{S}_U|}{|\mathcal{V}_L\cup\mathcal{S}_U|}\mathbb{E}_{(v_u,y_u)\thicksim P_{st}(\mathcal{V,Y})}\bar{\mathbb{B}}_u l(\bar{y}_u,\mathbf{p}_u)\\
  &+\frac{|\mathcal{V}_L|}{|\mathcal{V}_L\cup\mathcal{S}_U|}\mathbb{E}_{(v_i,y_i) \thicksim P_{pop}(\mathcal{V,Y})}l(y_i,\mathbf{p}_i) \\
  & \text{where\ \ \ \ \ \ \ \ }\bar{\mathbb{B}}_u=\frac{\mathbb{B}_u}{\beta\cdot\frac{1}{|\mathcal{S}_U|}\sum_i\mathbb{B}_i}.
\end{aligned}
\end{equation}
Here, we can tune the balance coefficient $\beta$ to recover the population distribution (\ie $\mathcal{L}_{st} \approx \mathcal{L}_{pop}$) as much as possible.
%where we can carefully tune the hyper-parameter $\beta$ to recover the population distribution as much as possible.

% Finally, the prediction distribution $\mathbf{p}_u^\mathcal{G}$ and the information gain $\mathbb{B}_u$ can be respectively obtained in the same way with Eq. \ref{eq:predictive_distribution}, Eq. \ref{eq:information_gain}.
% Moreover, for distinction, we abbreviate our method with the dropout variational inference as ${\model}_{do}$ and that with the dropedge variational inference as ${\model}_{de}$.

\subsection{Improving Qualities of Pseudo Labels via Loss Correction}
\label{subsec:loss_correction}

Till now, we have addressed the {\ds} issue with an information gain weighted loss function, where more attentions are paid to nodes with high information gain rather than high confidence. Unfortunately, such a training pipeline still implies hidden risks.
Specifically, considering that pseudo labels of hard nodes are more likely to be incorrect as shown in Fig. \ref{fig:conf_un_cora} and our DR-GST focuses more on hard nodes, the impact of incorrect pseudo-labeled nodes will be enlarged and even mislead the learning of GCNs. Previous works generally filter out these low-quality nodes with collaborative scoring~\cite{deeper, m3s} or prefabricated assumption~\cite{abn} in a relatively coarse-grained manner, where abundant nodes with high information gain are discarded in advance.
%as a weight coefficient for each pseudo-labeled node in the objective function. 
% However, such weighted loss function will also enlarge the impact of incorrect pseudo-labeled nodes, since most of them are hard nodes as shown in Fig. \ref{fig:conf_un}. 
% But recalling Theorem \ref{theory} in Section \ref{subsec:theory}, 
% we can find another intractable challenge is still to be solved, i.e., how to reduce the error rate of pseudo labels and improve their quality. 
%More importantly, its impact may become larger in our model since most of incorrect predictions are generally hard nodes which are our focus, as shown in Fig. \ref{fig:conf_un}.
%Previous methods tend to utilize an auxiliary model or some priori to filter out such low-quality nodes, but also exclude other important hard nodes. 
Instead, motivated by studies on learning with noisy labels~\cite{loss_correction1,loss_correction2,loss_correction3}, we propose to incorporate loss correction strategy into graph self-training.  In brief, {\model} corrects the predictions of the student model in each iteration, so as to eliminate the negative impact of misleading pseudo labels from the teacher model.
% (referred to as \emph{loss correction} ), we creatively propose to correct the predictions of the student model, 
% so as to eliminate the impact of low-quality pseudo labels from the teacher model.

Specifically, given a student model $f_{\bar{\theta}}$ trained by pseudo labels, the loss correction assumes there is a model $f_{\theta^*}$ trained by ground-truth labels 
and a \emph{transition matrix} $\mathbf{T}$ such that $f_{\bar{\theta}}$ can be represented by $f_{\bar{\theta}}=\mathbf{T}f_{\theta^*}$, as shown in Fig. \ref{fig:loss_correction}, 
where each element in $\mathbf{T}\in\mathbb{R}^{c\times c}$ is a transition probability from the ground-truth label to the pseudo label, \ie 
$T_{kj}=P(\bar{Y}=j|Y=k)$ and $c$ is the number of classes. With such a transition matrix, every model trained by pseudo labels is equal to that trained by ground-truth labels.
We have proved the equivalence relation above using the following proposition.

% Next, we will respectively prove this under both the mean square error (MSE) loss and the cross-entropy (CE) loss.

% \textbf{MSE.} Under the MSE loss, the following equation holds true: 
% \begin{equation}
%   \begin{aligned}
%     {\theta^*} &= \arg\min_{{\theta^*}\in\Theta}\sum_u||f_{\theta^*}(\mathbf{x}_u,\mathbf{A})-\mathbf{y}_u||^2\\
%     &=\arg\min_{{\theta^*}\in\Theta}\sum_u||\mathbf{T}f_{\theta^*}(\mathbf{x}_u,\mathbf{A})-\mathbf{T}\mathbf{y}_u||^2\\
%     &=\arg\min_{\bar{\theta}\in\Theta}\sum_u||f_{\bar{\theta}}(\mathbf{x}_u,\mathbf{A})-\bar{\mathbf{y}}_u||^2=\bar{\theta}, 
%   \end{aligned}
% \end{equation}
% where $\mathbf{y}_u$ is a one-hot vector expanded from $y_u$.

% \textbf{CE.} Under the CE loss, we have the following theorem: 
\begin{proposition}
\label{theory_lc}
  Given a model $f_{\bar{\theta}}$ trained by pseudo labels and a model $f_{{\theta}^*}$ trained by ground-truth labels, assuming that there exists a transition matrix $\mathbf{T}$ such that the equation $f_{\bar{\theta}}(\mathbf{x}_u,\mathbf{A})=\mathbf{T}f_{\theta^ *}(\mathbf{x}_u,\mathbf{A})$ holds for each node $v_u$, then $\bar{\theta}=\theta^*$ if $\mathbf{T}$ is a permutation matrix under cross entropy (CE) loss or $\mathbf{T}$ is an arbitrary non-zero matrix under mean square error (MSE) loss.
\end{proposition}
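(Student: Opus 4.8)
The plan is to reduce the proposition to a node-wise invariance of the training objective. Both $\theta^{*}$ and $\bar\theta$ are defined as minimizers of a loss — $\theta^{*}$ of the ground-truth loss $\sum_{u} l(y_{u}, f_{\theta}(\mathbf{x}_{u},\mathbf{A}))$ and $\bar\theta$ of the pseudo-label loss $\sum_{u} l(\bar y_{u}, f_{\theta}(\mathbf{x}_{u},\mathbf{A}))$ — so the cleanest route is to show that, under the assumed relation $f_{\bar\theta}(\mathbf{x}_{u},\mathbf{A})=\mathbf{T}f_{\theta^{*}}(\mathbf{x}_{u},\mathbf{A})$, the pseudo-label loss evaluated along $\mathbf{T}f_{\theta^{*}}$ equals the clean loss evaluated along $f_{\theta^{*}}$ at every node. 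Once the per-node terms agree, the two objectives are the same function of the underlying predictor, so their stationary points coincide and the recovered (loss-corrected) student is the clean-trained model, which is what $\bar\theta=\theta^{*}$ should be read to mean. I would then carry out this comparison separately for cross-entropy and for MSE.

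For the cross-entropy case I would first note that a transition matrix that is also a permutation matrix $\mathbf{P}_{\pi}$ encodes a deterministic relabelling $\bar y_{u}=\pi(y_{u})$ and merely permutes coordinates, so that $[\mathbf{P}_{\pi}f_{\theta^{*}}]_{\pi(y_{u})}=[f_{\theta^{*}}]_{y_{u}}$. Substituting into $l(\bar y_{u},\mathbf{p})=-\log p_{\bar y_{u}}$ collapses the corrected term to $-\log [f_{\theta^{*}}]_{y_{u}}=l(y_{u},f_{\theta^{*}})$; summing over nodes makes the pseudo-label and clean objectives literally identical, hence so are their minimizers.

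For MSE I would expand $l(\bar y_{u},\mathbf{p})=\lVert \bar{\mathbf{y}}_{u}-\mathbf{p}\rVert_{2}^{2}$ with $\mathbf{p}=\mathbf{T}f_{\theta^{*}}$ and use the soft target $\bar{\mathbf{y}}_{u}=\mathbf{T}\mathbf{e}_{y_{u}}$ induced by the same $\mathbf{T}$ (where $\mathbf{e}_{y_{u}}$ is the clean one-hot label), so the residual factors as $\mathbf{T}(\mathbf{e}_{y_{u}}-f_{\theta^{*}})$. The stationarity condition of the corrected objective then reads $\mathbf{T}^{\top}\mathbf{T}(\mathbf{e}_{y_{u}}-f_{\theta^{*}})=\mathbf{0}$, which is solved by $f_{\theta^{*}}=\mathbf{e}_{y_{u}}$ — the clean MSE optimum — for every nonzero $\mathbf{T}$. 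The quadratic geometry of MSE, defined on all of $\mathbb{R}^{c}$ rather than the simplex, is exactly what lets an arbitrary nonzero $\mathbf{T}$ play the role that only a permutation can play for cross-entropy.

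I expect the main obstacle to be the asymmetry between the two losses together with the final identifiability step. For cross-entropy I must argue why the permutation hypothesis cannot be weakened: because $-\log$ acts on a single coordinate of a simplex vector, a generic row-stochastic $\mathbf{T}$ mixes coordinates and strictly changes the loss, so only coordinate-preserving maps keep the objective invariant, and a valid transition matrix with this property is forced to be a permutation. The second delicate point is the passage from ``equal objectives / equal optimal outputs'' to the parameter statement $\bar\theta=\theta^{*}$: matching $f$ at all nodes forces matching parameters only under an identifiability assumption on the network, which I would state as a standing hypothesis rather than derive, and $\bar\theta=\theta^{*}$ is then to be understood as the loss-corrected student coinciding with the ground-truth-trained model.
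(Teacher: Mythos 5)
Your proof is correct and follows essentially the same route as the paper's: for MSE you apply $\mathbf{T}$ to both the prediction and the one-hot target so the corrected residual factors as $\mathbf{T}(\mathbf{e}_{y_u}-f_{\theta^*})$, and for CE you use the fact that a permutation matrix leaves the per-node cross-entropy term invariant. The only cosmetic difference is that you argue at the level of the objective (loss equality for CE, a stationarity condition for MSE) where the paper argues via a gradient-difference computation for CE and a chain of $\arg\min$ identities for MSE; both versions also share the same unstated caveat that uniqueness of the corrected MSE minimizer really requires $\mathbf{T}$ to be injective (full column rank) rather than merely non-zero, and your explicit identifiability hypothesis for passing from equal predictors to $\bar{\theta}=\theta^*$ is a gap the paper silently glosses over as well.
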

\begin{proof}
  Please refer to Appendix \ref{appendix:proof_2}.
\end{proof}
% The above conclusion is a bit counter-intuitive. In fact, this results from the assumption that $\mathbf{T}$ is same for all the nodes. 
% In other words, each nodes from class $i$ has a fixed probability $T_{ij}$ of being assigned to the class $j$. 
% Such assumption called being \emph{class-conditional} is common in loss correction and is enough for our model. 

% Having proven the equality of training with pseudo labels and ground-truth labels, 
Based on Proposition \ref{theory_lc}, ideally, we can train the student model regardless of the quality of labels, 
and recover $f_{\theta^*}$ with $\mathbf{T}$. Specifically, as shown in Fig. \ref{fig:loss_correction}, for each node $v_i\in\{\mathcal{V}_L\cup\mathcal{S}_U\}$ with its 
feature vector $\mathbf{x}_i$, we first feed it into student model and multiply the output with $\mathbf{T}$ to get $f_{\bar{\theta}}(\mathbf{x}_i,\mathbf{A})$. 
Then we use $f_{\bar{\theta}}(\mathbf{x}_i,\mathbf{A})$ to optimize the student model according to Eq. \ref{eq:final_loss}. Finally, at inference, we can treat the student model as $f_{\theta^*}$. 
Please note that the transition matrix $\mathbf{T}$ is pre-computed 
and not updated during optimization of the student model.

% As mentioned above, the transition matrix $\mathbf{T}$ is pre-computed before training. 
Next, we make an illustration for the computation of the transition matrix $\mathbf{T}$. 
Noting that for each node $v_i\in\mathcal{V}_L$ with the ground-truth label $y_i=k$, the probability $P(Y=k|X=\mathbf{x}_i)$ should be 1 since we definitely know its label to be $k$. 
Therefore, given the output probability $p_{kj}=f_{\bar{\theta}}(\mathbf{x}_i,\mathbf{A})_j$ of class $j$, we have 
\begin{equation}
  \begin{aligned}
  p_{kj}&=P(\bar{Y}=j|X=\mathbf{x}_i)=\sum_{m=1}^c P(\bar{Y}=j|Y=m,X=\mathbf{x}_i)P(Y=m|X=\mathbf{x}_i)\\
  &=P(\bar{Y}=j|Y=k,X=\mathbf{x}_i)\cdot 1 + 0 + \cdots + 0 = T_{kj}(\mathbf{x}_i)=T_{kj}.
  \end{aligned}
\end{equation} 
In others words, the output probability vector $f_{\bar{\theta}}(\mathbf{x}_i,\mathbf{A})$ of each node $v_i$ with its ground-truth label $k$ is the $k$-th row of $\mathbf{T}$, where $\bar{\theta}$ means such a model is trained with the augmented dataset $\mathcal{V}_L\bigcup\mathcal{S}_U$.

Technically, we first train a student model $f_{\bar{\theta}}$ without loss correction using the augmented dataset $\mathcal{V}_L\cup \mathcal{S}_U$, then update $\mathbf{T}$ 
according to $p_{kj}=f_{\bar{\theta}}(\mathbf{x}_i,\mathbf{A})_j$, and finally re-train a student model from scratch with loss correction to obtain $f_{\theta^*}$.

Considering that there are multiple nodes belonging to class $k$ in $\mathcal{V}_L$, we propose the following optimization problem to learn $\mathbf{T}$ instead:  
\begin{equation}
  \label{eq:update_T}
  \arg\min_{\mathbf{T}}\sum_{k=1}^c\sum_{j=1}^{N_k^{(L)}}||\mathbf{T}_{k,:}-f_{\bar{\theta}}(\mathbf{x}_i,\mathbf{A})||^2+||\mathbf{T}\mathbf{T}^{\mathsf{T}}-\mathbf{I}||^2,
\end{equation}
where $N_k^{(L)}$ is the number of nodes belonging to class $k$ in $\mathcal{V}_L$ and $\mathbf{I}$ is an identity matrix. 
Since the improved CE loss is utilized as the loss function in this paper as mentioned in Eq. \ref{eq:objective_func} and Eq. \ref{eq:final_loss}, 
we append the regularization term $||\mathbf{T}\mathbf{T}^{\mathsf{T}}-\mathbf{I}||^2$ for guiding $\mathbf{T}$ to approximate to a permutation matrix, 
which is derived from Proposition \ref{theory_lc} under the CE loss. 
% The regularization term $||\mathbf{T}\mathbf{T}^{\mathsf{T}}-\mathbf{I}||^2$ aims at guiding $\mathbf{T}$ to approximated to a permutation matrix, which is derived from the Proposition \ref{theory_lc} under the CE loss.
% is derived from the Proposition \ref{theory_lc}, where we have proved that $\mathbf{T}$ should be a permutation matrix under the CE loss. Considering that the permutation matrix could satisfy the constraint $\mathbf{T}\mathbf{T}^\mathsf{T}=\mathbf{I}$ and our loss function is based on CE loss as mentioned in Eq. \ref{eq:objective_func}, we supplement the regularization term to Eq. \ref{eq:update_T}. 
Moreover, we initialize $\mathbf{T}$ with the identify matrix $\mathbf{I}$ at the very beginning.

\begin{figure}
	\centering
	\includegraphics[width=1.0\columnwidth]{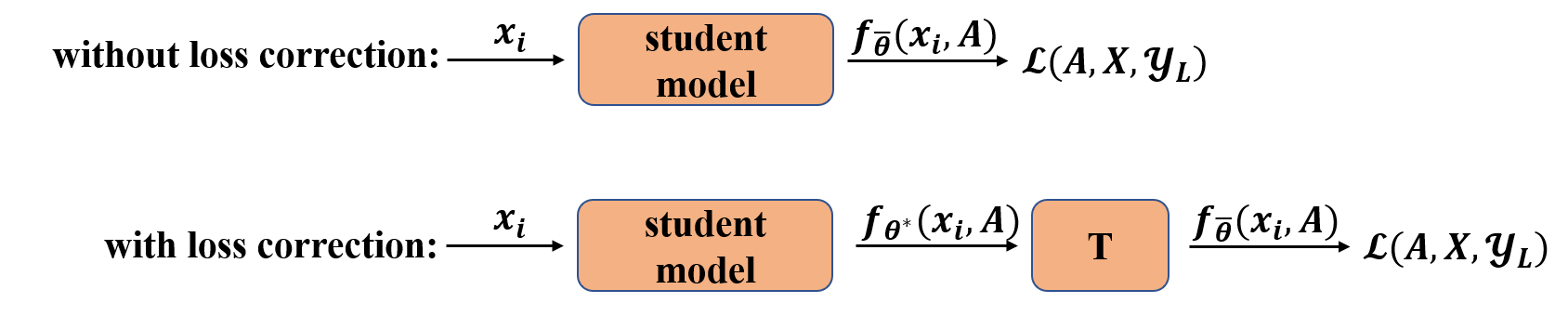}
	\caption{An illustration of loss correction.}
	\label{fig:loss_correction}
\end{figure}

\subsection{Overview of {\model}}
\label{subsec:overview}
Till now, we have elaborated our proposed {\model} framework, which solves both the {\ds} and the low-quality pseudo labels
with the help of information gain and loss correction. 
We summarize it in Algorithm \ref{algorithm} and further analyze its time complexity in Appendix \ref{appendix:time}.

Given a graph $\mathcal{G}=(\mathcal{V,E},\mathbf{X})$ with its original labeled dataset $\mathcal{V}_L$, unlabeled dataset $\mathcal{V}_U$, 
adjacent matrix $\mathbf{A}$ as well as its label set $\mathcal{Y}_L$, we first train a teacher model $f_\theta$ on $\mathcal{V}_L$ to obtain 
the prediction $\bar{y}_u$ and the confidence $r_u$ for each unlabeled node $v_u\in\mathcal{V}_U$ at line \ref{algo:teacher}.
then iterate steps from line \ref{algo:select} to \ref{algo:replace} util convergence, where we call each iteration a \emph{stage} following \cite{m3s}. 
Specifically, at line \ref{algo:select} we select part of unlabeled nodes whose confidence $r_u$ is bigger than a given threshold $\tau$ to obtain $\mathcal{S}_U$. 
Next at line \ref{algo:pseudo} we pseudo-label each node $v_u\in\mathcal{S}_U$ with $\bar{y}_u$ to augment $\mathcal{V}_L$. 
Then at line  \ref{algo:calculate} we calculate the information gain $\mathbb{B}_u$ according to dropout or dropedge variational inference in Section \ref{subsec:information_gain} 
and normalize it according to Eq. \ref{eq:final_loss}. 
With such information gain, we train a student model $f_{\bar{\theta}}$ at line \ref{algo:train} using the augmented dataset, where pseudo labels may be incorrect. 
Therefore, at line \ref{algo:update} we update the transition matrix $\mathbf{T}$ with the output probability vector of $f_{\bar{\theta}}$ of each node $v_i\in\mathcal{V}_L$ according to Eq. \ref{eq:update_T}, 
and retrain the student model from scratch at line \ref{algo:retrain} with $f_{\bar{\theta}}=\mathbf{T}f_{{\theta}^*}$ to get $f_{{\theta}^*}$. 
Finally, we replace the teacher model $f_\theta$ with $f_{{\theta}^*}$ and repeat above steps utill convergence.

\begin{algorithm}
  \caption{The {\model} Framework}
  \label{algorithm}
  \begin{algorithmic}[1]
    \REQUIRE Graph $\mathcal{G}=(\mathcal{V,E},\mathbf{X})$, original labeled dataset $\mathcal{V}_L$, unlabeled dataset $\mathcal{V}_U$, adjacent matrix $\mathbf{A}$, label set $\mathcal{Y}_L$, transition matrix $\mathbf{T=I}$
    \ENSURE Probability vector $\mathbf{p}_i$ for each node $v_i$
    \STATE Train a teacher model $f_\theta$ on $\mathcal{V}_L$ to obtain the prediction $\bar{y}_u$ and the confidence $r_u$ for each unlabeled node $v_u\in\mathcal{V}_U$;
    \label{algo:teacher}
    \FOR{each stage $k$}
    \STATE Select part of unlabeled nodes according to $r_u$ to get $\mathcal{S}_U$;
    \label{algo:select}
    \STATE Pseudo-labeling each node $v_u\in\mathcal{S}_U$ with $\bar{y}_u$;
    \label{algo:pseudo}
    \STATE Calculate the information gain $\mathbb{B}_u$ according to Eq. \ref{eq:information_gain};
    \label{algo:calculate}
    \STATE Train a student model $f_{\bar{\theta}}$ without $\mathbf{T}$ according to Eq. \ref{eq:final_loss};
    \label{algo:train}
    \STATE Update $\mathbf{T}$ using $f_{\bar{\theta}}(\mathbf{x}_i,\mathbf{A})$ of $v_i\in\mathcal{V}_L$ according to Eq. \ref{eq:update_T};
    \label{algo:update}
    \STATE Retrain a student model from scratch according to Eq. \ref{eq:final_loss} with  $f_{\bar{\theta}}=\mathbf{T}f_{{\theta}^*}$ to get $f_{{\theta}^*}$;
    \label{algo:retrain}
    \STATE Replace the teacher model $f_\theta$ with the student model $f_{{\theta}^*}$;
    \label{algo:replace}
    \ENDFOR
    \RETURN $\mathbf{p}_i=f_{{\theta}^*}(\mathbf{x}_i,\mathbf{A})$ in the final stage.
  \end{algorithmic}
\end{algorithm}

\subsection{Theoretical Analysis}
\label{subsec:theory}
In this section, we theoretically analyze the influence factors on self-training from the perspective of gradient descent, and our theorem below demonstrates the rationality of the whole DR-GST framework.

\begin{theorem}
  \label{theory}
    Assuming that $||\nabla_\theta l(y_i,\mathbf{p}_i)||\leqslant \Psi$ for each node $v_i$, where $\Psi$ is a constant, given $\nabla_\theta\mathcal{L}_{pop}$ and $\nabla_\theta\mathcal{L}_{st}$, the gradient of $\mathcal{L}_{pop}$ and $\mathcal{L}_{st}$ w.r.t. model parameters $\theta$,
    the following bound between $\nabla_\theta \mathcal{L}_{pop}$ and $\nabla_\theta \mathcal{L}_{st}$ holds: 
    \begin{equation}
        \begin{aligned}
            ||\nabla_\theta \mathcal{L}_{pop} - \nabla_\theta \mathcal{L}_{st}|| &\leqslant \frac{|\mathcal{S}_U|}{|\mathcal{V}_L\cup\mathcal{S}_U|}\Psi(2||P_{(v_u,y_u)\thicksim P_{pop}(\mathcal{V,Y})}(\bar y_u\neq y_u)||\\
            &+||P_{st}(\mathcal{V,Y})-P_{pop}(\mathcal{V,Y})||).
        \end{aligned}
    \end{equation}
\end{theorem}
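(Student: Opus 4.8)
The plan is to expand both gradients by linearity of differentiation under the expectation, collapse their difference into a single weighted term, and then split that term into a \emph{label-error} piece and a \emph{distribution-shift} piece via a hybrid insertion. First I would differentiate Eq.~\ref{eq:loss_pop} and Eq.~\ref{eq:loss_st} with respect to $\theta$, writing $\nabla_\theta\mathcal{L}_{pop}=\mathbb{E}_{(v_i,y_i)\thicksim P_{pop}}\nabla_\theta l(y_i,\mathbf{p}_i)$ and $\nabla_\theta\mathcal{L}_{st}=\frac{|\mathcal{V}_L|}{|\mathcal{V}_L\cup\mathcal{S}_U|}\mathbb{E}_{P_{pop}}\nabla_\theta l(y_i,\mathbf{p}_i)+\frac{|\mathcal{S}_U|}{|\mathcal{V}_L\cup\mathcal{S}_U|}\mathbb{E}_{P_{st}}\nabla_\theta l(\bar y_u,\mathbf{p}_u)$. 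Using $\frac{|\mathcal{V}_L|}{|\mathcal{V}_L\cup\mathcal{S}_U|}+\frac{|\mathcal{S}_U|}{|\mathcal{V}_L\cup\mathcal{S}_U|}=1$, the population term partially cancels and leaves $\nabla_\theta\mathcal{L}_{pop}-\nabla_\theta\mathcal{L}_{st}=\frac{|\mathcal{S}_U|}{|\mathcal{V}_L\cup\mathcal{S}_U|}\bigl(\mathbb{E}_{P_{pop}}\nabla_\theta l(y_u,\mathbf{p}_u)-\mathbb{E}_{P_{st}}\nabla_\theta l(\bar y_u,\mathbf{p}_u)\bigr)$, so the whole bound inherits the prefactor $\frac{|\mathcal{S}_U|}{|\mathcal{V}_L\cup\mathcal{S}_U|}$.

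The key step is to add and subtract the hybrid term $\mathbb{E}_{P_{pop}}\nabla_\theta l(\bar y_u,\mathbf{p}_u)$ inside the parenthesis. This separates the two error sources cleanly: the difference $\mathbb{E}_{P_{pop}}\bigl[\nabla_\theta l(y_u,\mathbf{p}_u)-\nabla_\theta l(\bar y_u,\mathbf{p}_u)\bigr]$ isolates the effect of \emph{incorrect pseudo labels} (its integrand vanishes wherever $\bar y_u=y_u$), while $\mathbb{E}_{P_{pop}}\nabla_\theta l(\bar y_u,\mathbf{p}_u)-\mathbb{E}_{P_{st}}\nabla_\theta l(\bar y_u,\mathbf{p}_u)$ isolates the \emph{distribution shift} (it pairs the same loss against two different distributions). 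A single triangle inequality then splits $\|\nabla_\theta\mathcal{L}_{pop}-\nabla_\theta\mathcal{L}_{st}\|$ into these two contributions, each scaled by the prefactor above.

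For the label-error term I would pull the norm inside the expectation by Jensen's inequality, observe that the summand is identically zero on the event $\{\bar y_u=y_u\}$, and on its complement bound $\|\nabla_\theta l(y_u,\mathbf{p}_u)-\nabla_\theta l(\bar y_u,\mathbf{p}_u)\|\leqslant 2\Psi$ via the triangle inequality together with the assumption $\|\nabla_\theta l\|\leqslant\Psi$. The leftover expectation of the indicator is exactly $P_{(v_u,y_u)\thicksim P_{pop}}(\bar y_u\neq y_u)$, producing the $2\Psi\,\|P_{(v_u,y_u)\thicksim P_{pop}}(\bar y_u\neq y_u)\|$ term. For the distribution-shift term I would rewrite the two expectations as one sum against the signed measure $P_{pop}-P_{st}$, move the norm inside by the triangle inequality, and factor out $\|\nabla_\theta l(\bar y_u,\mathbf{p}_u)\|\leqslant\Psi$, which leaves $\Psi\sum|P_{pop}-P_{st}|=\Psi\,\|P_{st}(\mathcal{V,Y})-P_{pop}(\mathcal{V,Y})\|$. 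Combining the prefactor with these two pieces gives the claimed inequality.

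The main obstacle is bookkeeping rather than depth. One must confirm that the bound $\|\nabla_\theta l\|\leqslant\Psi$, stated for $\nabla_\theta l(y_i,\mathbf{p}_i)$, applies equally to the pseudo-label gradient $\nabla_\theta l(\bar y_u,\mathbf{p}_u)$ (it does, since $\bar y_u$ is merely another class label), and one must fix a consistent $\ell_1$/total-variation-type norm on distributions so that the step $\bigl\|\sum(P_{pop}-P_{st})\nabla_\theta l\bigr\|\leqslant\Psi\sum|P_{pop}-P_{st}|$ is legitimate. The only genuinely creative choice is the hybrid insertion of $\mathbb{E}_{P_{pop}}\nabla_\theta l(\bar y_u,\mathbf{p}_u)$, which is precisely what makes the two interpretable error sources — wrong pseudo labels and shifted distribution — emerge as separate summands matching the right-hand side.
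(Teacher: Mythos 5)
Your proposal is correct and follows essentially the same route as the paper: the same cancellation of the labeled term via the coefficients summing to one, the same hybrid insertion of $\mathbb{E}_{P_{pop}}\nabla_\theta l(\bar y_u,\mathbf{p}_u)$ followed by the triangle inequality, and the same signed-measure argument bounding the distribution-shift piece by $\Psi\,\|P_{st}-P_{pop}\|$. The only cosmetic difference is that you prove the pseudo-label-error bound $2\Psi\,\|P_{P_{pop}}(\bar y_u\neq y_u)\|$ directly via the indicator decomposition, whereas the paper imports exactly that bound as a corollary cited from prior work.
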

\begin{proof}
    Please refer to Appendix \ref{appendix:proof_1}.
\end{proof}
%It should be noted that our assumption about the bounded gradient is common in the analysis of gradient descent \cite{gradient_analysis}. 

From the Theorem \ref{theory} we can conclude that the performance of self-training is negatively related to the difference $||P_{st}(\mathcal{V,Y})-P_{pop}(\mathcal{V,Y})||$ between the two distributions 
as well as the error rate $||P_{(v_u,y_u)\thicksim P_{pop}(\mathcal{V,Y})} (\bar y_u\neq y_u)||$ of pseudo labels. Meanwhile, we find our proposed {\model} is a natural framework equipped with two designs to correspondingly address the issues in self-training: \emph{information gain weighted loss function} for distribution recovery and \emph{loss correction strategy} for improving qualities of pseudo labels. This analysis further demonstrates the rationality of {\model} framework from the theoretical perspective.
% Both factors are taken into account in DR-GST at the same time, and respectively addressed with information gain weighted loss function in Section \ref{subsec:weighted}, \ref{subsec:information_gain} and loss correction in Section \ref{subsec:loss_correction}, demonstrating the rationality of DR-GST framework.

\section{Experiment}
In this section, we evaluate the effectiveness of {\model} framework on semi-supervised node classification task with five widely used benchmark datasets from citation networks~\cite{cora,corafull} (\ie Cora, Citeseer, Pubmed and CoraFull) and social networks~\cite{flickr} (\ie Flickr). More detailed descriptions about datasets are in Appendix \ref{appendix:dataset}.
%we conduct extensive experiments on semi-supervised node classification task to evaluate the performance of our proposed {\model} framework. 
% Specifically, we first compare {\model} with six baseline methods on five benchmark datasets, 
% and then perform ablation study to respectively demonstrate the effectiveness of our proposed information-gain based weighted loss function and the loss correction. 
% Lastly, we investigate the hyper-parameter sensitivity. 

\subsection{Experimental Setup}

% \subsubsection{Datasets}
% We following the common data splitting strategy (\ie 500 nodes for validation and 1000 nodes for test) in previous work~\cite{gcn} and summarize the detailed descriptions of these datasets in Table~\ref{tab:dataset}.

% We validate our proposed {\model} framework on five benchmark datasets. The statistics of the datasets are summarized in Table \ref{tab:dataset}. 
% \begin{itemize}[leftmargin=*]
%   \item \textbf{Citation networks}. Cora, Citeseer, Pubmed \cite{cora} and CoraFull \cite{corafull} are four commonly used citation networks datasets, 
%   where nodes represent papers, edges are the citation relationship between papers, node features 
%   are comprised of bag-of-words vector of the papers and labels represent the fields of papers.
%   \item \textbf{Flickr} \cite{Flickr}. Flickr is a social network dataset, where nodes represent users of the Flickr website, edges are their relationships 
%   induced by their photo-sharing records and labels represent users' interest groups. 
% \end{itemize}

\begin{table*}[]
  \caption{Node classification results(\%). (L/C: the number of labels per class; bold: best)}
  \label{tab:main_result}
  \setlength{\tabcolsep}{0.5mm}{
      \begin{tabular}{@{}c|cccc|cccc|cccc|cccc|cccc@{}}
      \toprule
      Dataset & \multicolumn{4}{c|}{Cora}     & \multicolumn{4}{c|}{Citeseer}  & \multicolumn{4}{c|}{Pubmed}    & \multicolumn{4}{c|}{CoraFull}  & \multicolumn{4}{c}{Flickr}    \\ \midrule
      L/C     & 3     & 5     & 10    & 20    & 3     & 5     & 10    & 20    & 3     & 5     & 10    & 20    & 3     & 5     & 10    & 20    & 3     & 5     & 10    & 20    \\ \hline
      GCN     & 64.52 & 69.55 & 78.03 & 81.56 & 51.39 & 61.34 & 68.39 & 71.64 & 66.04 & 71.25 & 75.88 & 79.31 & 41.83 & 49.12 & 55.67 & 60.69 & 37.69 & 40.64 & 48.04 & 51.74 \\
      GAT     & 67.19 & 69.45 & 76.38 & 82.24 & 55.19 & 59.40 & 67.61 & 72.00 & 67.85 & 68.41 & 72.42 & 78.38 & 36.44 & 46.70 & 52.45 & 57.97 & 20.02 & 24.90 & 33.27 & 37.06      \\
      APPNP   & 65.06 & 75.53 & 81.33 & 83.14 & 51.22 & 60.48 & 68.50 & 71.64 & 65.77 & 73.01 & 76.35 & 79.51 & 40.29 & 44.49 & 50.89 & 60.77 & 24.76 & 35.54 & 47.87 & \textbf{61.55}      \\ \hline
      STs     & 70.68 & 75.60 & 80.35 & 82.89 & 56.29 & 65.59 & 74.17 & 74.36 & 69.82 & 73.77 & 77.68 & 81.02 & 43.44 & 51.16 & 58.40 & 61.70 & 35.21 & 43.25 & 48.23 & 52.99 \\
      M3S     & 64.24 & 71.02 & 78.93 & 82.78 & 50.07 & 63.28 & 74.54 & 74.72 & 68.76 & 69.21 & 70.72 & \textbf{81.34} & 42.77 & 49.75 & 57.43 & 61.40 & 35.33 & 39.02 & 47.62 & 51.87 \\
      ABN     & 66.39 & 73.07 & 78.73 & 81.79 & 54.30 & 64.27 & 69.90 & 72.81 & 59.17 & 71.40 & 75.26 & 79.09 & 43.38 & 48.39 & 55.88 & 60.62 & 35.13 & 41.62 & 47.01 & 52.10 \\ \hline
      {\model}$_{do}$      & 70.85 & \textbf{77.92} & 80.88 & 83.34 & 59.39 & 69.08 & \textbf{75.00} & \textbf{75.78} & \textbf{70.74} & \textbf{74.63} & \textbf{78.44} & 81.08 & \textbf{45.44} & \textbf{53.29} & \textbf{60.01} & 62.75 & 37.84 & \textbf{43.47} & \textbf{49.48} & 53.66 \\
      {\model}$_{de}$      & \textbf{73.43} & 77.59 & \textbf{81.67} & \textbf{84.03} & \textbf{60.60} & \textbf{69.91} & 74.65 & 75.26 & 70.55 & 73.71 & 77.42 & 80.65 & 45.42 & 52.50 & 59.16 & \textbf{63.11} & \textbf{38.21} & 43.28 & 49.44 & 53.05 \\ \bottomrule
      \end{tabular}}
\end{table*}

\subsubsection{Baselines}
We compare our proposed {\model} framework with two categories of baselines, including three representative GCNs (\ie GCN \cite{gcn}, GAT \cite{gat}, PPNP \cite{ppnp}) 
and three graph self-training frameworks (\ie STs \cite{deeper}, M3S \cite{m3s}, ABN \cite{abn}). Noting that STs includes four variants (\ie Self-Training, Co-Training, Union and Intersection) in the original paper and the best performance is reported in our experiments. The implementation of DR-GST and all the baselines can be seen in Appendix \ref{appendix:implementation}. More detailed experimental environment can be seen in Appendix \ref{appendix:environment}.

\subsubsection{Evaluation Protocol}
To more comprehensively evaluate our model, for all the datasets, 
we arrange only a few (including 3, 5, 10, 20) labeled nodes per class ($L/C$) for the training set following \cite{deeper}. 
Specifically, in the setting $L/C=20$, we follow the standard split \cite{cora} for Cora, Citeseer and Pubmed, 
and manually select 20 labeled nodes per class for CoraFull and Flickr considering the lack of standard split. 
In the setting $L/C<20$, we make 10 random splits for each $L/C$, where each random split represents that we randomly select part of nodes from the 
training set of $L/C=20$. 
For all the methods and all the cases, we run 10 times and report the mean accuracy.

% In addition, we report the performance of {\model} with different variational inference, \ie {\model} $_{do}$ with MC-dropout and {\model}$_{de}$ with MC-dropedge. And the corresponding ablation studies and parameter analysis will be discussed later.

% It should be noted that STs includes four variants Self-Training, Co-Training, Union and Intersection in the original paper. 
% For clarity, we only report their best results and abbreviate the above four variants as STs. 

\subsection{Overall Comparison on Node Classification}
The performance of different methods on node classification are summarized in Table \ref{tab:main_result}. We have the following observations.

\begin{itemize}[leftmargin=*]
\item Our proposed {\model} framework outperforms all the baselines by a considerable margin across most cases of all the datasets. The results demonstrate the effectiveness of {\model} by adopting a more principled mechanism to make use of unlabeled nodes in graph for boosting  classification performance.

\item With the decrease of labeled nodes, we observe that the performance of GCNs (\ie GCN, GAT and APPNP) drops quickly.  For clarity, we further illustrate the changing trend of accuracy \wrt $L/C$ in Fig. \ref{fig:acc_l_c}. Obviously, we can discover the larger performance margin between {\model} and GCNs with fewer labeled nodes per class, which further implies the superior capacity of {\model} for addressing labeled data scarcity on graph learning.

\item Considering the two variants of {\model}, we find that {\model}$_{do}$ performs better on Pubmed, CoraFull and Flickr while {\model}$_{de}$ on Cora and Citeseer. An intuitive explanation for such distinct performance is the different emphasis on network topology and feature information  \wrt different graphs for node classification task. Correspondingly, in {\model} framework, MC-dropedge performs information gain estimation with network topology while MC-dropout is based on feature information. This finding also sheds light on possible future work to combine both topology and feature to further enhance performance under our framework.

\item Among the two categories of baselines, self-training frameworks (\ie STs, M3S and ABN) can generally improve GCNs (\ie GCN, GAT and APPNP), which indicates the usefulness of unlabeled data. Nevertheless, {\model} still yields better performance for the following two promising designs: 
1) We pay more attention on nodes with high information gain rather than high confidence, so that the unsatisfying {\ds} issue is avoided.
2) We adopt a loss correction strategy, where qualities of pseudo labels are improved for subsequent self-training.

%   \item {\model} consistently outperforms all the baselines under all the cases of all the datasets, which demonstrates the effectiveness of our method.
%   Moreover, we can also observe that two variants of {\model} incline to different datasets, or in other words, 
%   {\model}$_{do}$ performs better on CoraFull and Flickr while {\model}$_{de}$ on Cora and Citeseer. %Pubmed结果还没出来
%   This may results from the network topology and will be further investigated in the future.
%   \item The performance of GCNs drops quickly with the decrease of labeled nodes, which is consistent with previous study \cite{abn} as mentioned in Section \ref{sec:intro}. 
%   For clarity, we visualize the changing trend of accuracy w.r.t. $L/C$ in Fig. \ref{fig:acc_l_c}. Obviously, we can discover that {\model} declines much slowly than GCNs, 
%   further demonstrating the effectiveness of our methods on making use of unlabeled nodes. 
%   \item Self-training frameworks can generally improve GCNs more or less. Especially, {\model} can achieve better results compared with other self-training frameworks. 
%   We argue that this owes to our proposed information-gain based weighted loss function and loss correction strategies. We will show their contributions
%   in the next ablation study.
\end{itemize}

\subsection{In-depth Analysis of {\model}}
In this section, we make a series of analysis to better understand each component in {\model}, as well as key parameter selections.

\subsubsection{Ablation Study}
As mentioned above, the performance of self-training theoretically hinges on the distribution gap and qualities of pseudo labels, which could be naturally captured by our {\model} framework with two corresponding designs: the information-gain based weighted loss function and loss correction module. To comprehensively understand their contributions towards self-training on graphs, we prepare following three variants of {\model}:
% We conduct ablation study on Citeseer to evaluate the contribution of newly proposed designs, i.e., the information-gain based weighted loss function and loss correction module. 
\begin{itemize}[leftmargin=*]
  \item \textbf{{\model}-lc}: {\model} only with the loss correction module, \ie $\bar{\mathbb{B}}=1$ for all the unlabeled nodes.
  \item \textbf{{\model}-ig}: {\model} only with the information gain weighted loss function.
  \item \textbf{{\model}-w/o}: {\model} without the above two designs.
\end{itemize}

The results on {\model}$_{do}$ and {\model}$_{de}$ are respectively reported in Fig. \ref{fig:ablation_do} and Fig. \ref{fig:ablation_de}
% We respectively summarize the results on {\model}$_{do}$ in Fig. \ref{fig:ablation_do} and {\model}$_{de}$ in Fig. \ref{fig:ablation_de}. 
From the results we can find that the overall performance order is as follows: {\model} $>$ {\model}-ig $>$ {\model}-lc  $>$ {\model}-w/o. There are three conclusions here. 
Firstly, the best performance achieved by the complete {\model} framework indicates the effectiveness of considering two components together. 
Secondly, the information gain weighted loss function and loss correction are both value modules for self-training on graphs. Thus, ignoring them altogether (\ie {\model}-w/o) is not ideal. 
Thirdly, the information-gain weighted loss function plays a more vital role in our self-training framework since {\model}-lc generally does not perform as well as {\model}-ig. 
In short, above findings further verify the rationality of {\model} from the empirical perspective. 
% The results are summarized in Fig. \ref{fig:ablation}, and we have the following observations: (1) The results of {\model} are consistently better than all other three variants, indicating the effectiveness of the combination of our designs. (2) Compared with {\model}-w/o, both {\model}-ig and {\model}-lc achieve better performance, respectively verifying their usefulness. (3) {\model}-lc is generally better than {\model}-ig, which implies the information-gain weighted loss function plays a more vital role in our self-training framework.

\subsubsection{Parameter Study}

\begin{figure}[t]
	\centering
	\subfigure{\includegraphics[width=0.32\columnwidth]{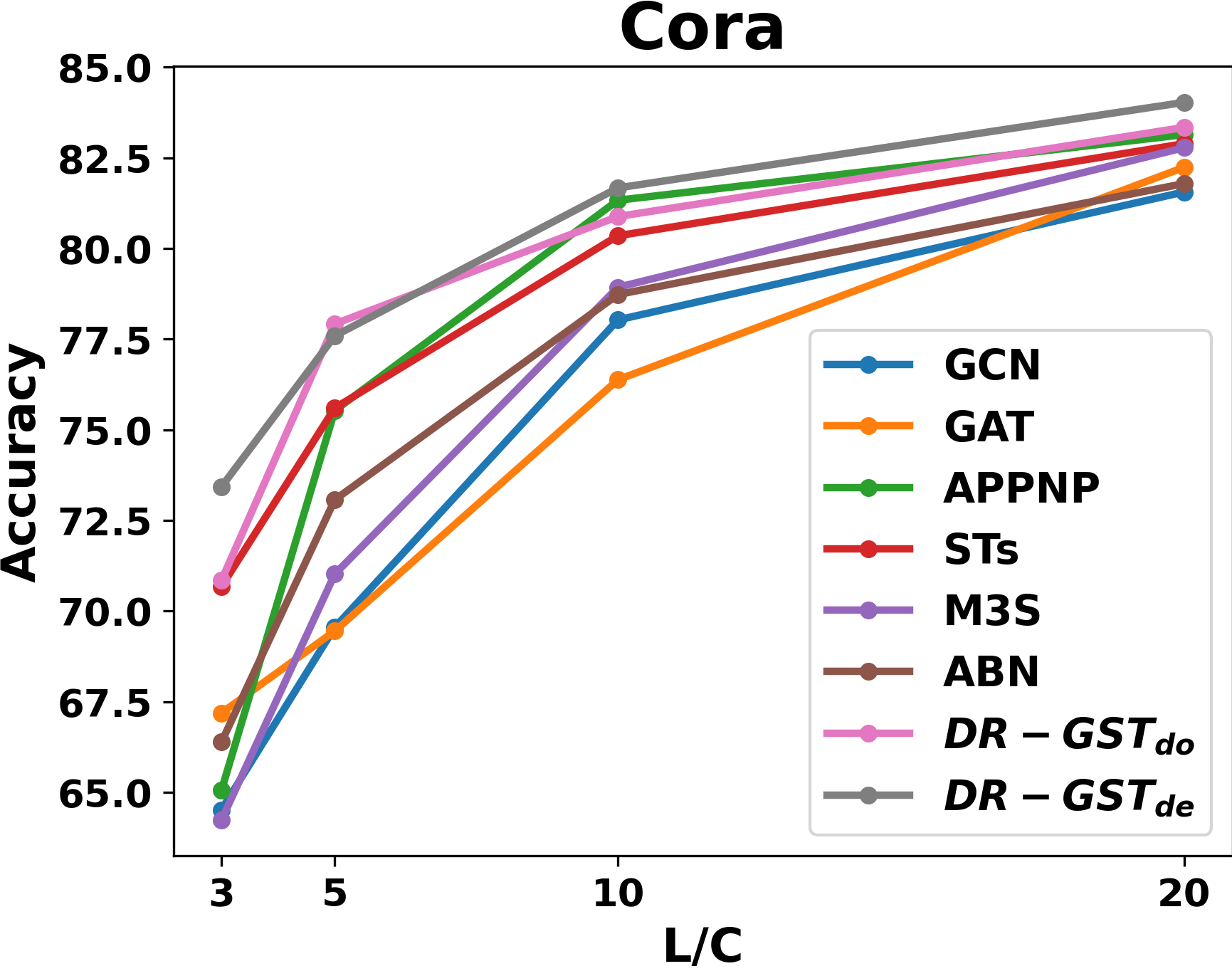}}
	\subfigure{\includegraphics[width=0.32\columnwidth]{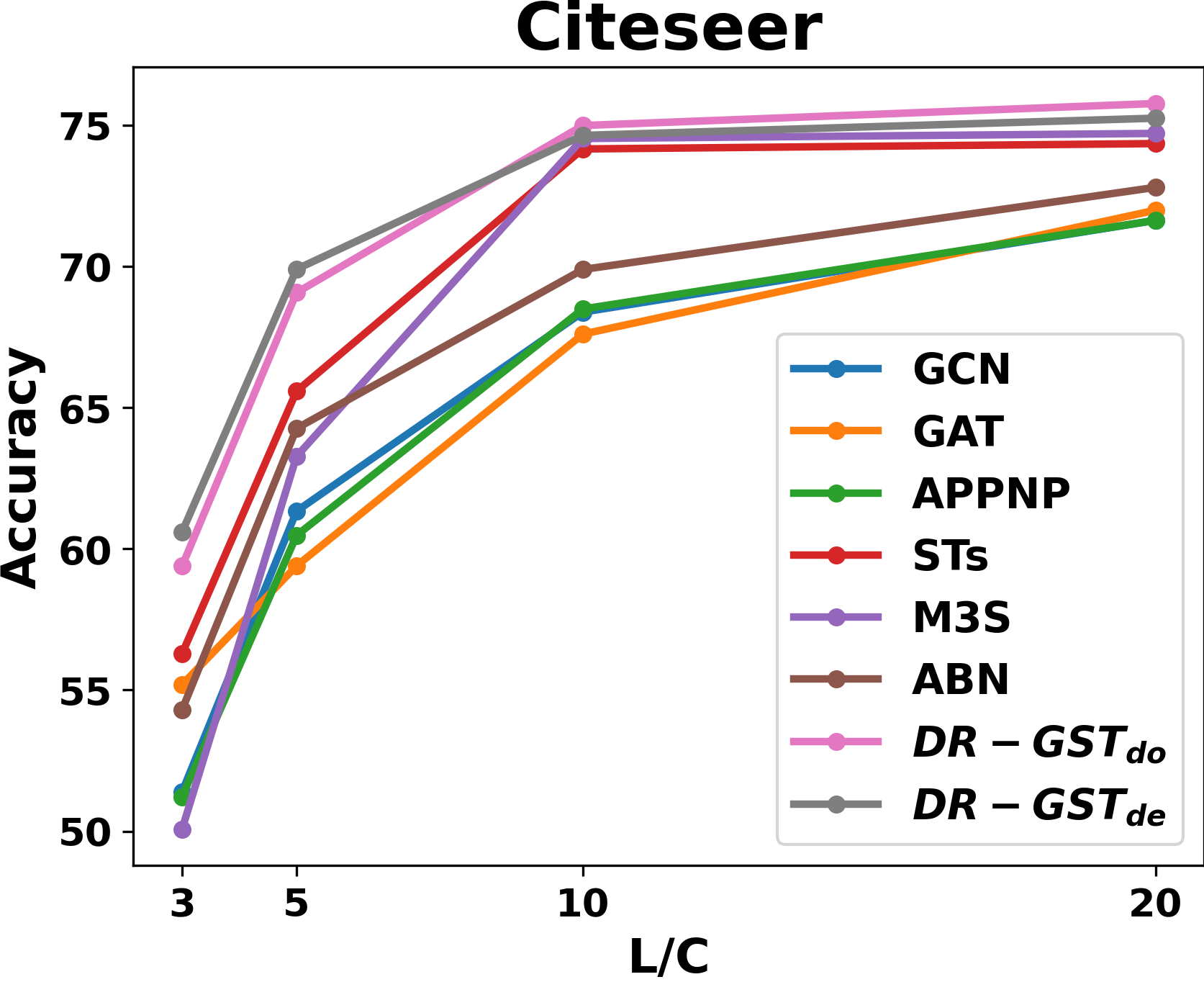}}
  \subfigure{\includegraphics[width=0.32\columnwidth]{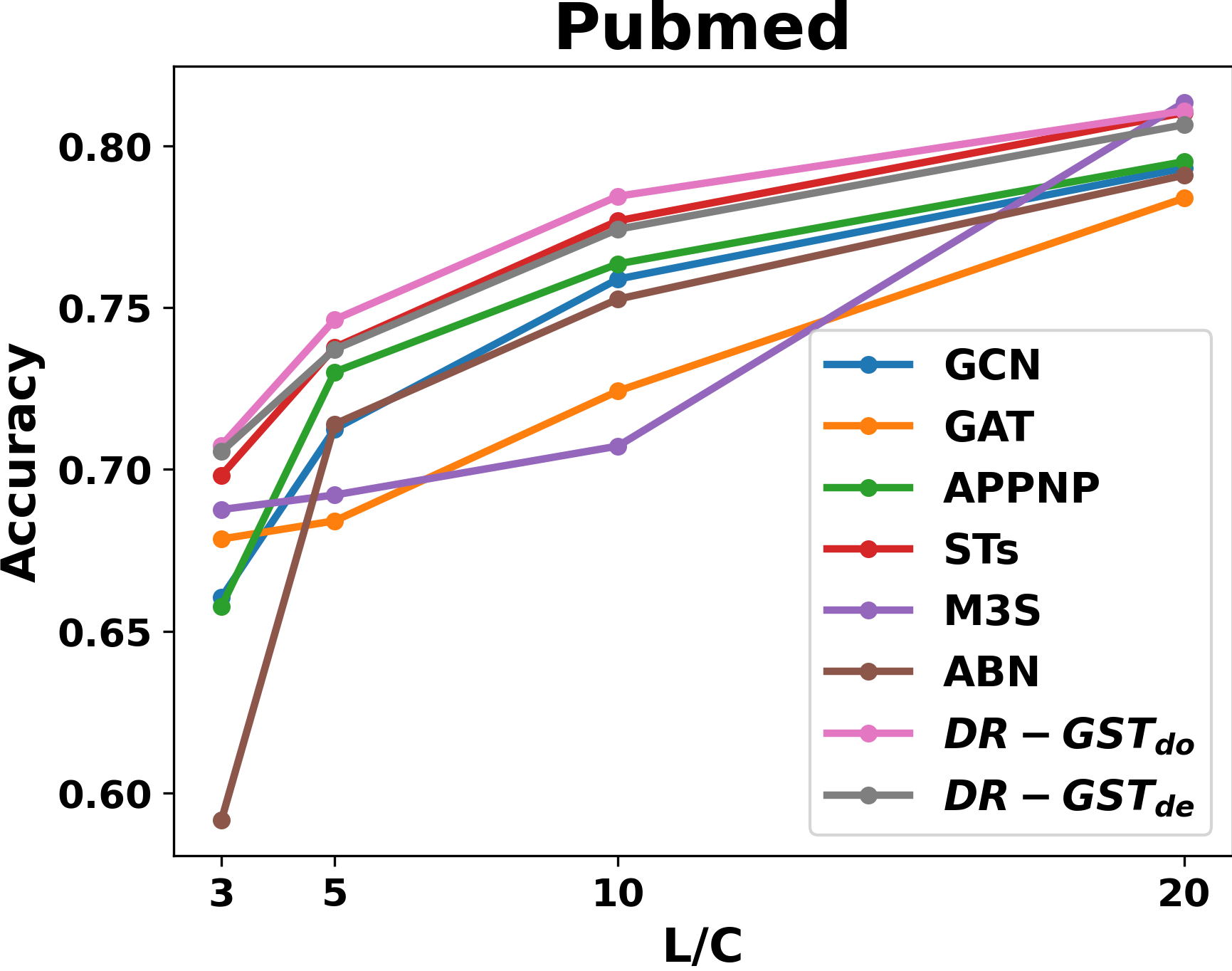}}
  
	\subfigure{\includegraphics[width=0.32\columnwidth]{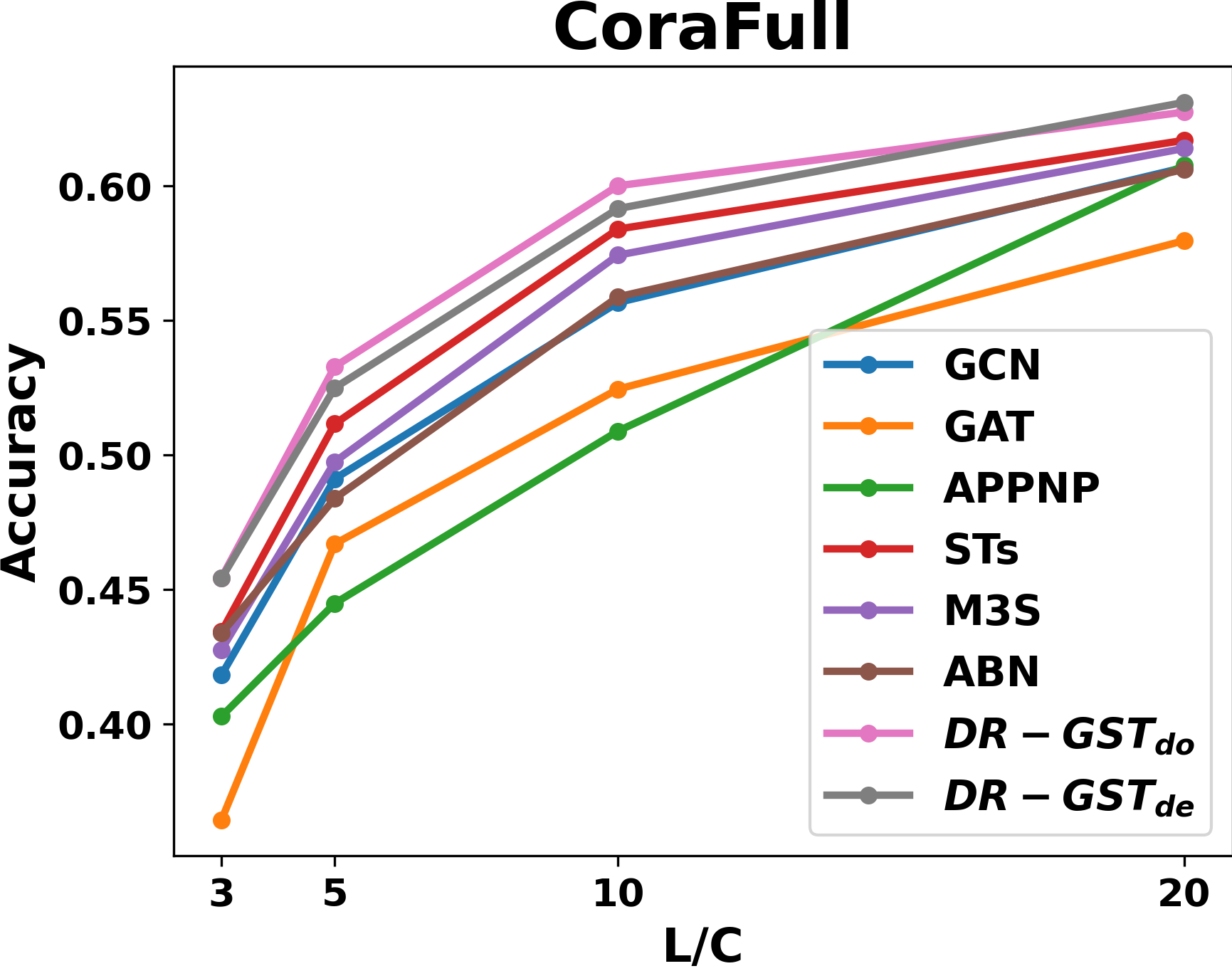}}
  \subfigure{\includegraphics[width=0.32\columnwidth]{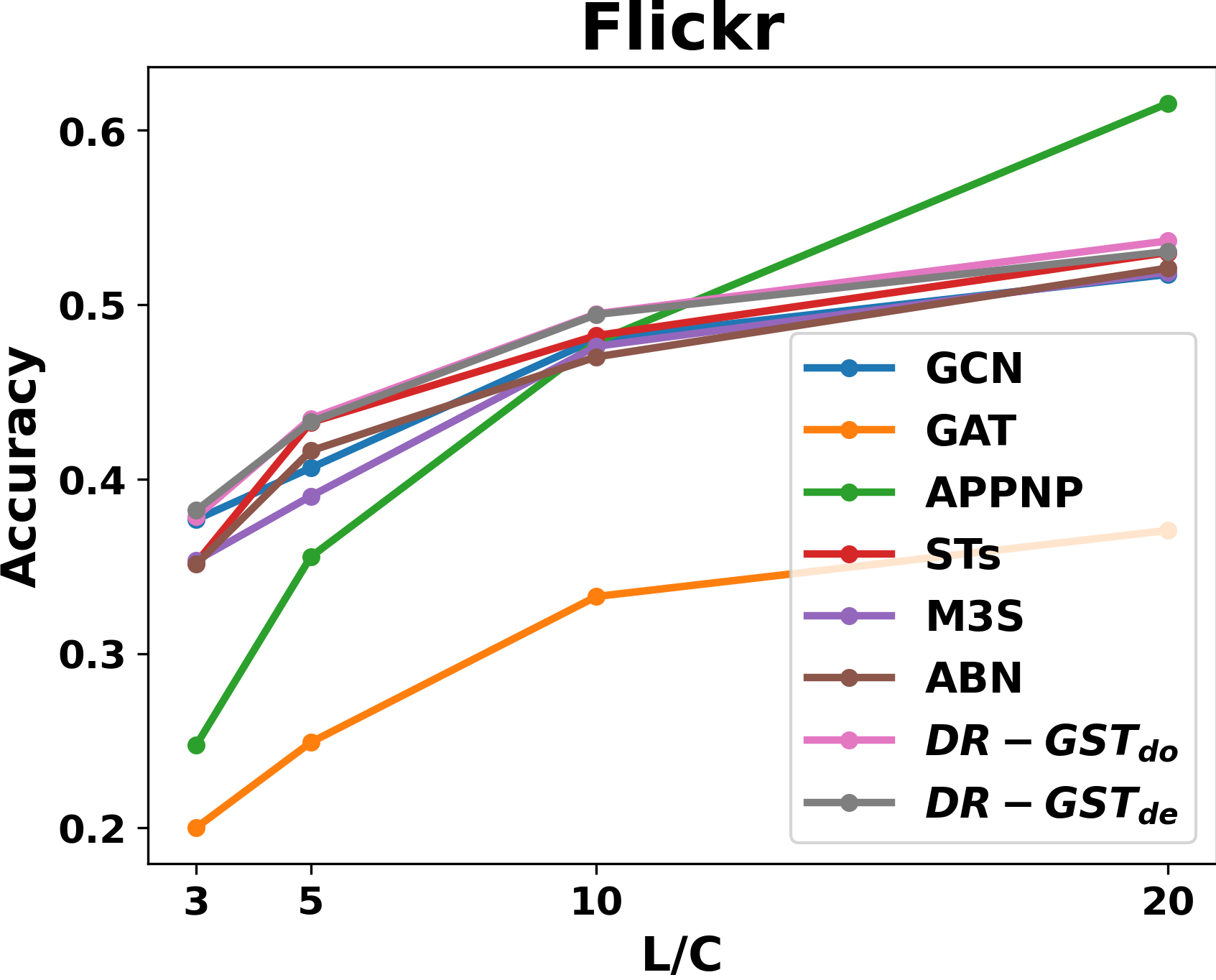}}
  \caption{The changing trends of accuracy w.r.t. $L/C$}
	\label{fig:acc_l_c}
\end{figure}

\begin{figure}
	\centering
	\subfigure{\includegraphics[width=0.43\columnwidth, height=2.5cm]{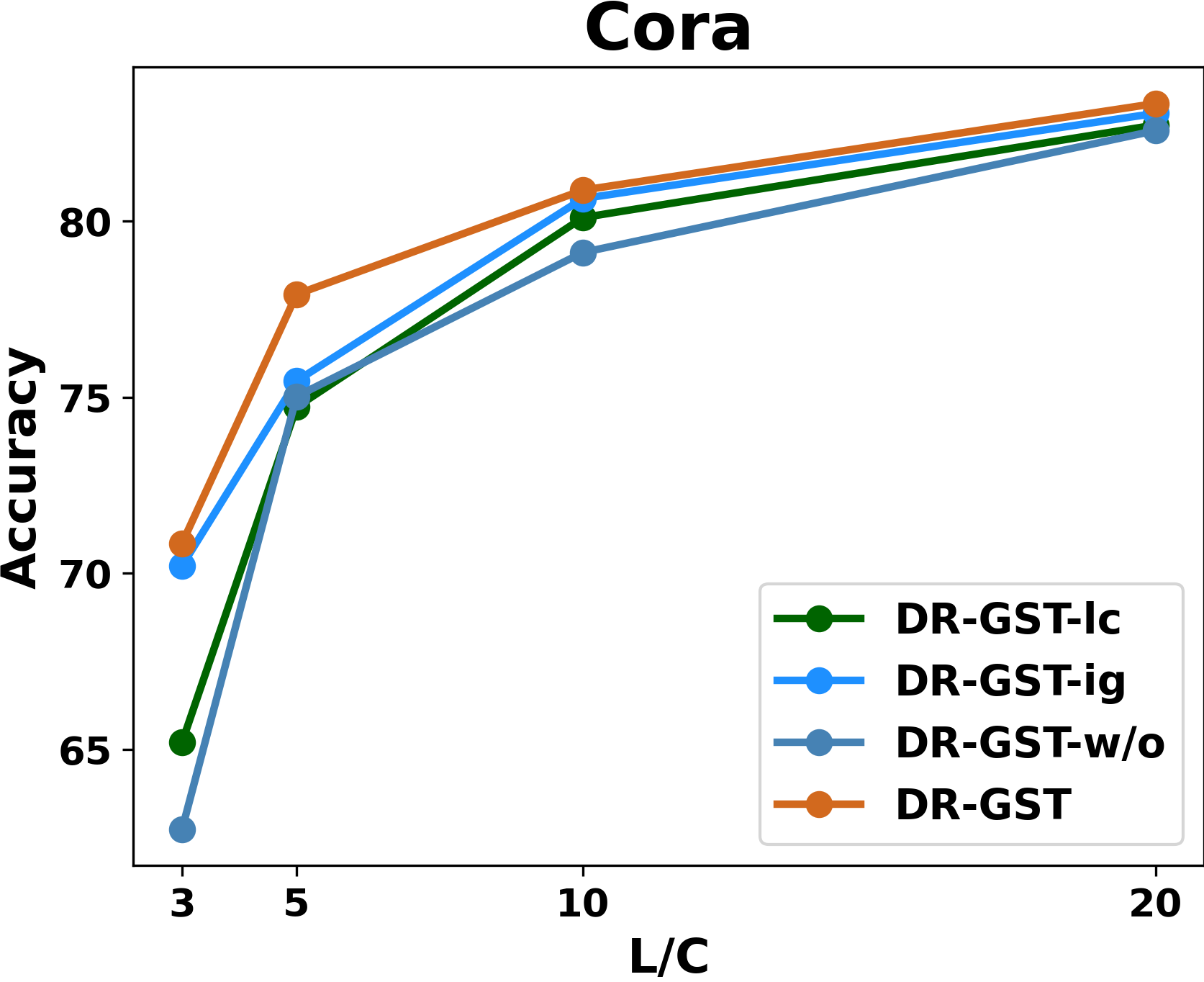}}
	\subfigure{\includegraphics[width=0.43\columnwidth,height=2.5cm]{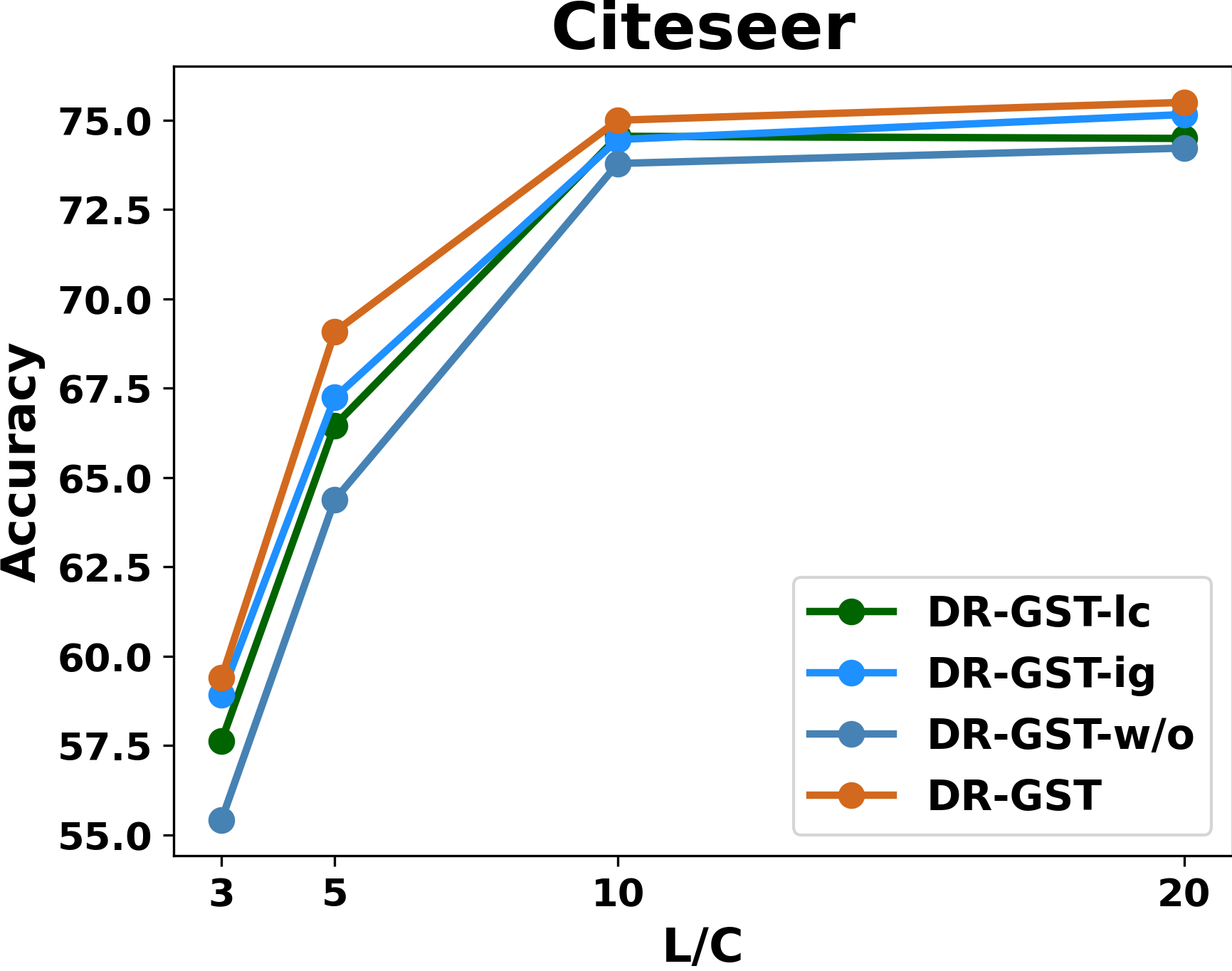}}

  \subfigure{\includegraphics[width=0.43\columnwidth,height=2.5cm]{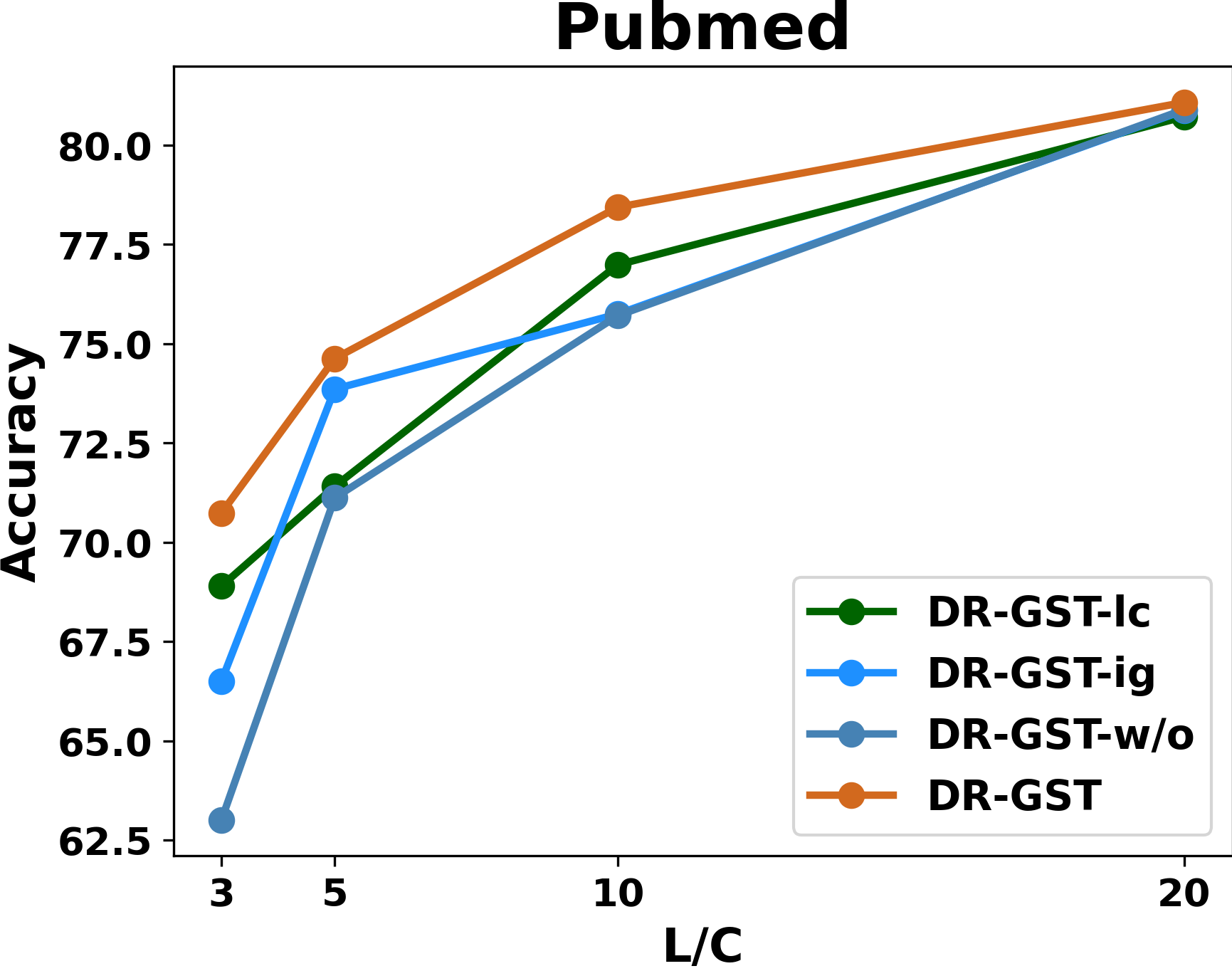}}
  \subfigure{\includegraphics[width=0.43\columnwidth,height=2.5cm]{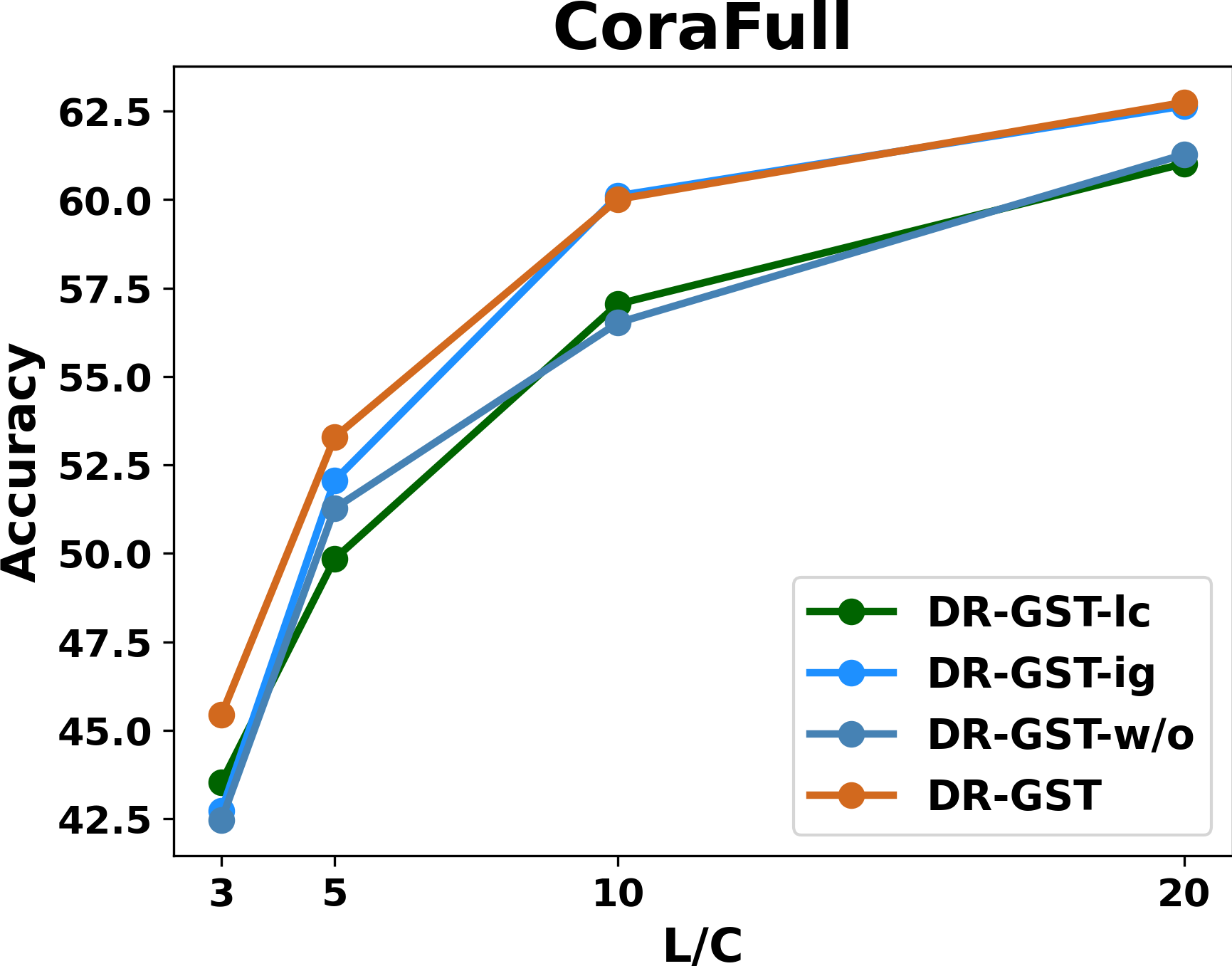}}
  \caption{Ablation study of {\model}$_{do}$.}
	\label{fig:ablation_do}
\end{figure}

\begin{figure}
	\centering
	\subfigure{\includegraphics[width=0.43\columnwidth,height=2.5cm]{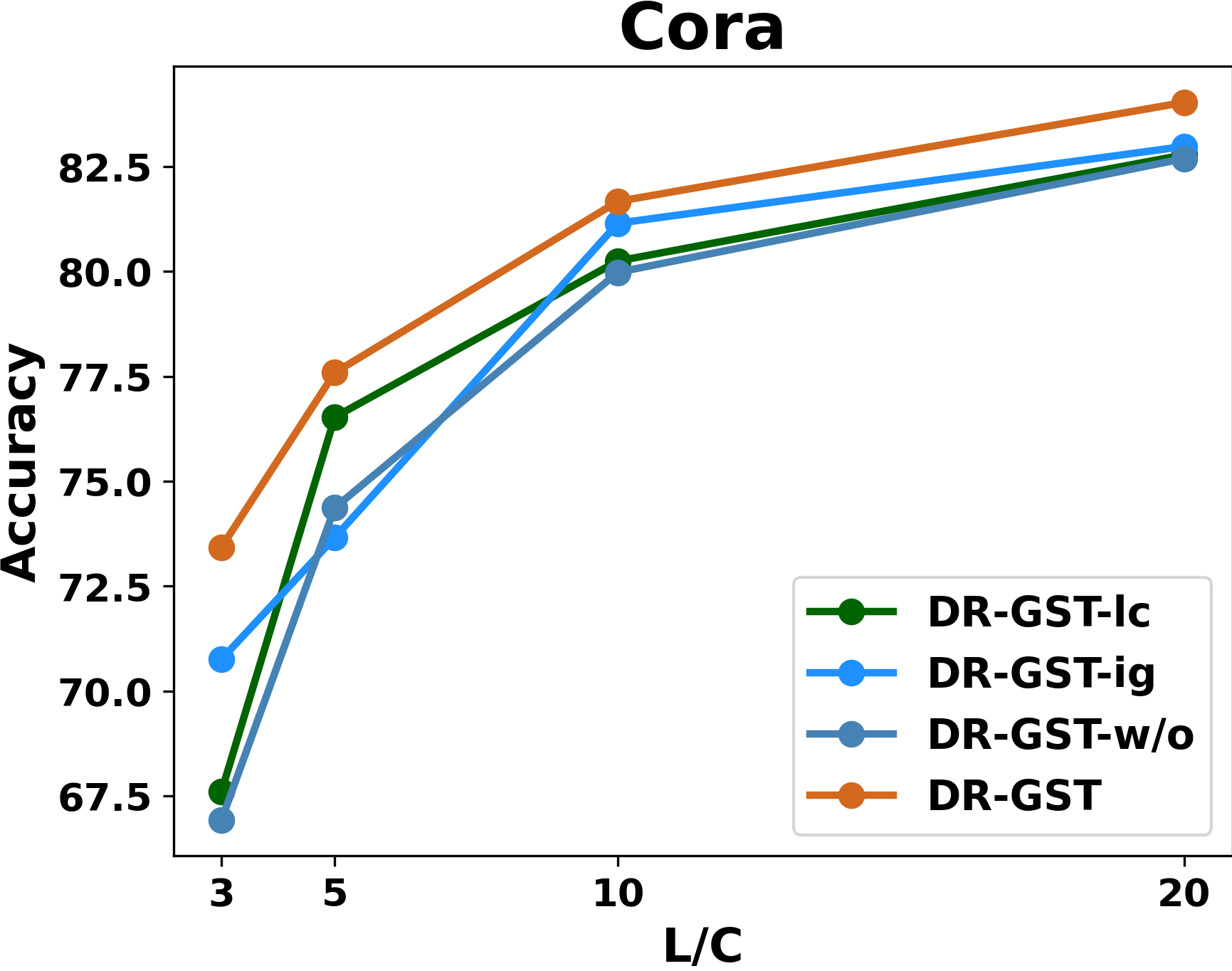}}
	\subfigure{\includegraphics[width=0.43\columnwidth,height=2.5cm]{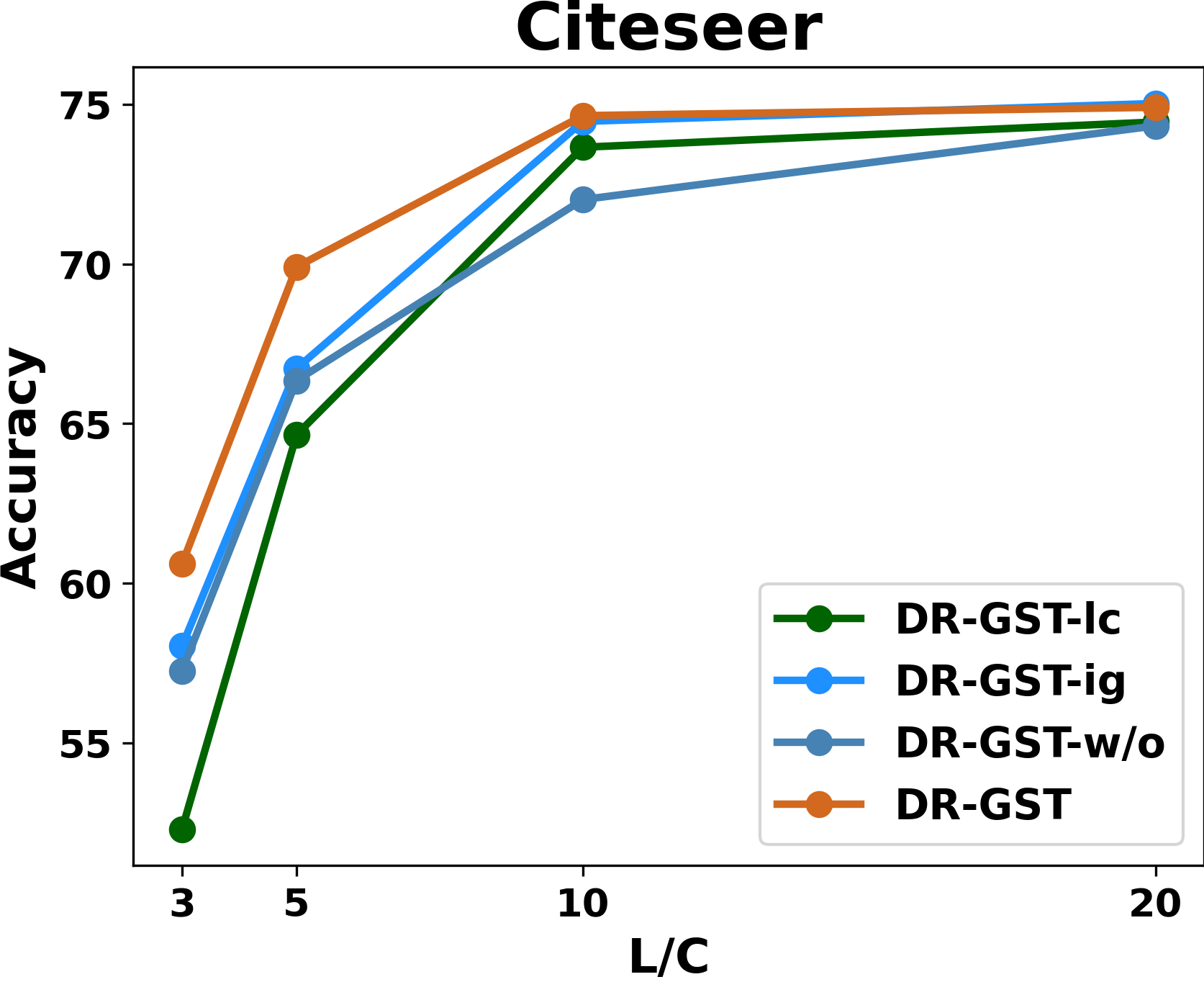}}

  \subfigure{\includegraphics[width=0.43\columnwidth,height=2.5cm]{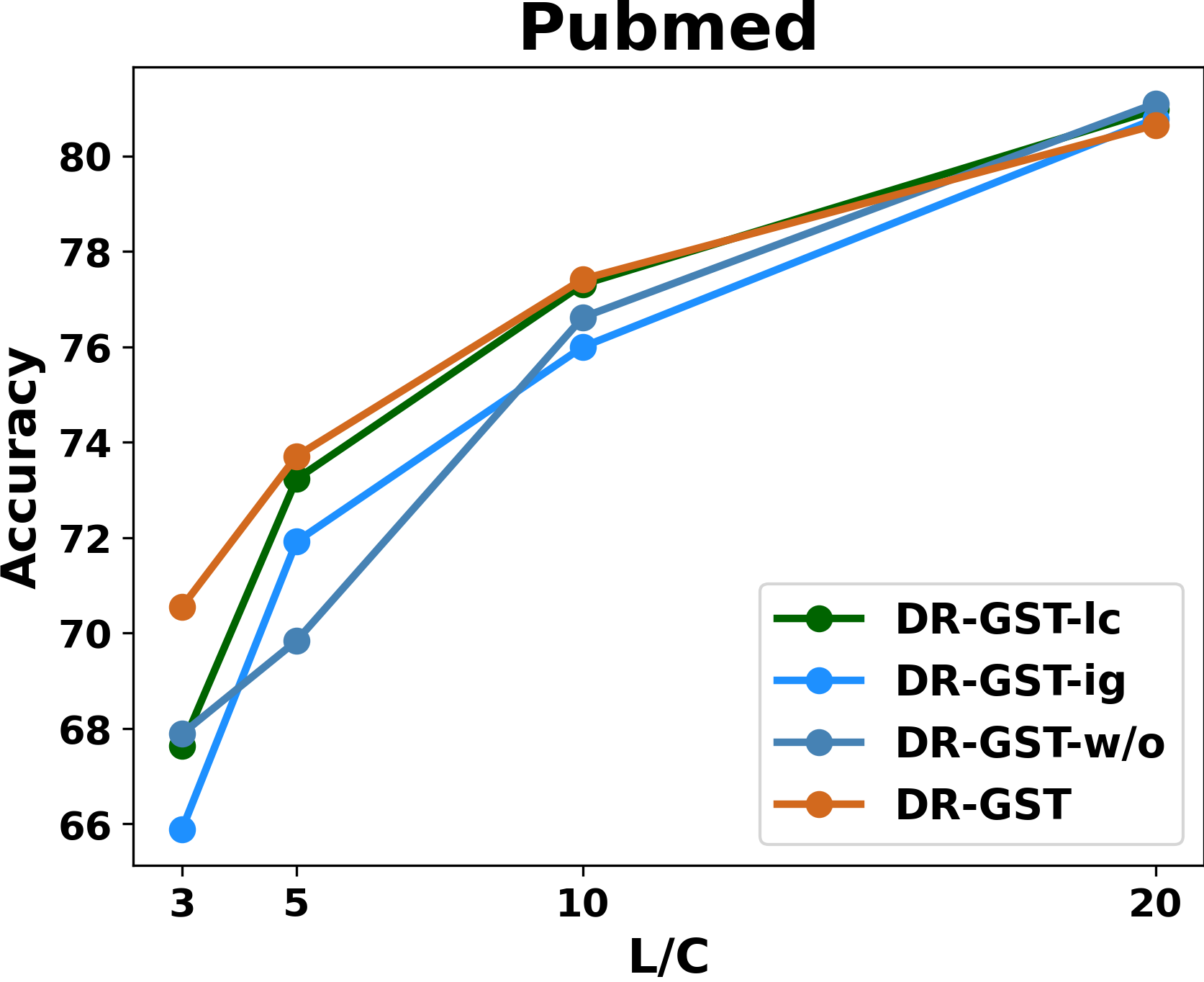}}
  \subfigure{\includegraphics[width=0.43\columnwidth,height=2.5cm]{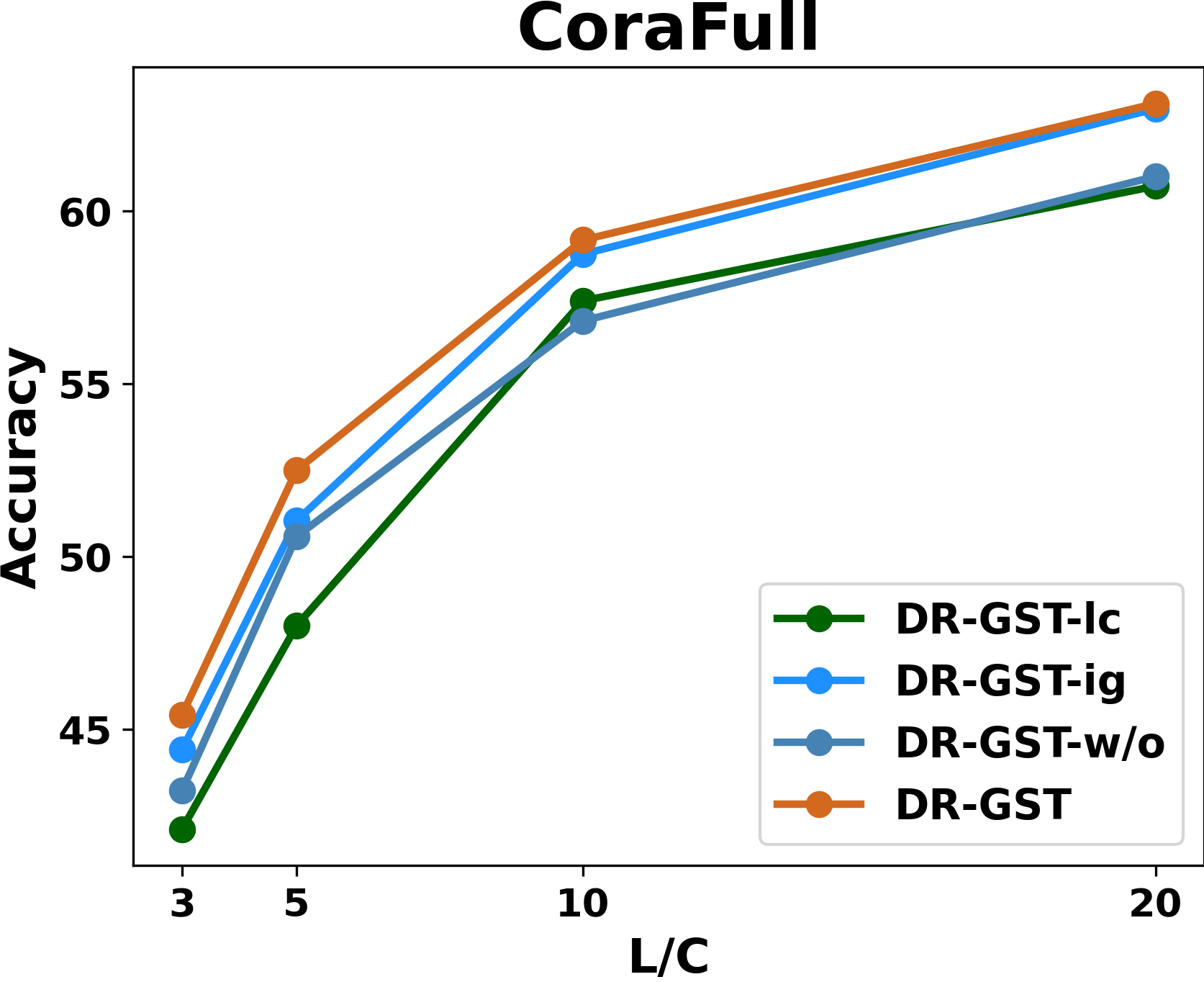}}
  \caption{Ablation study of {\model}$_{de}$.}
	\label{fig:ablation_de}
\end{figure}

\begin{figure}
	\centering
	\includegraphics[width=0.86\columnwidth, height=3cm]{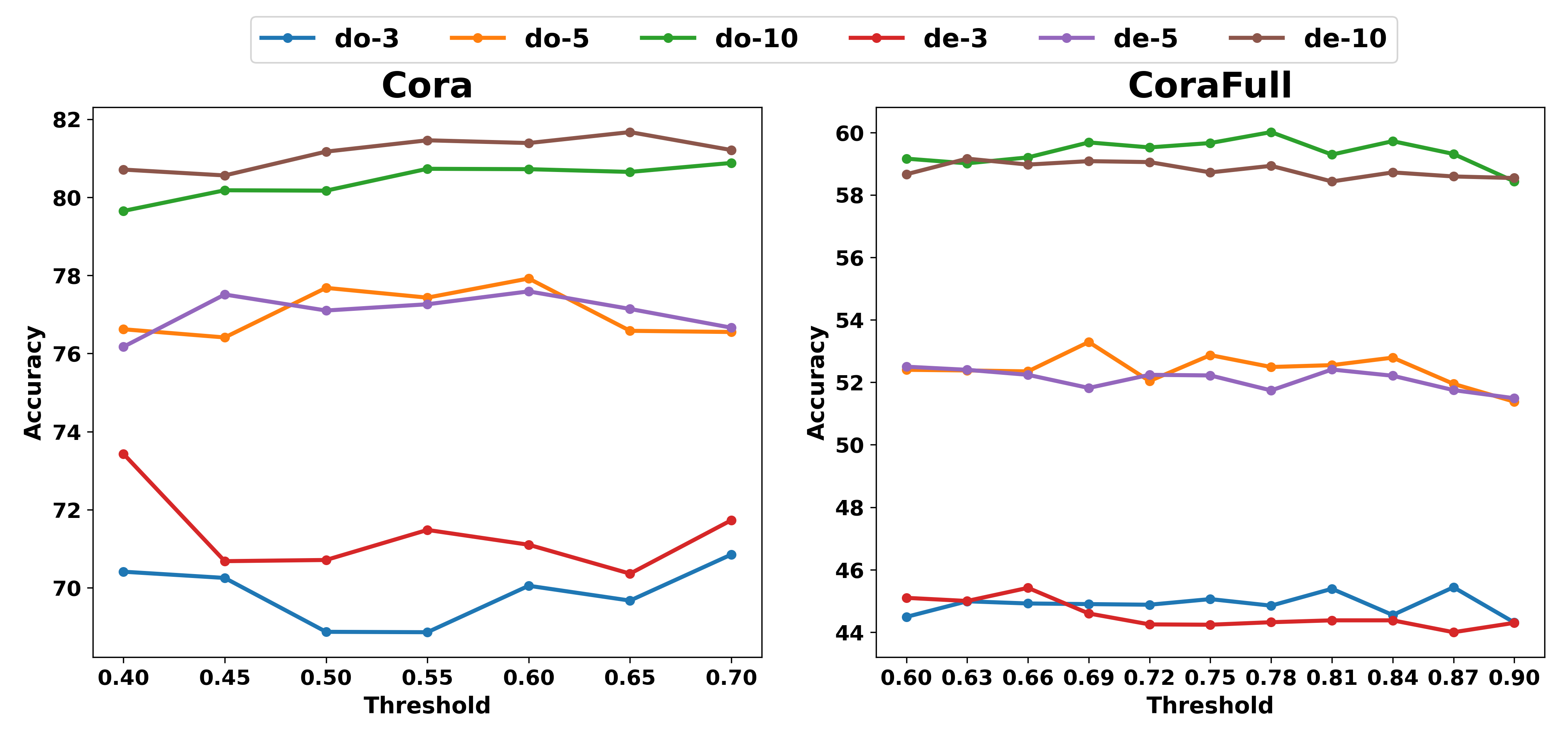}
  \caption{Impact of threshold $\tau$.}
	\label{fig:parameter_thre}
\end{figure}

\begin{figure}
	\centering
	\includegraphics[width=0.86\columnwidth, height=3cm]{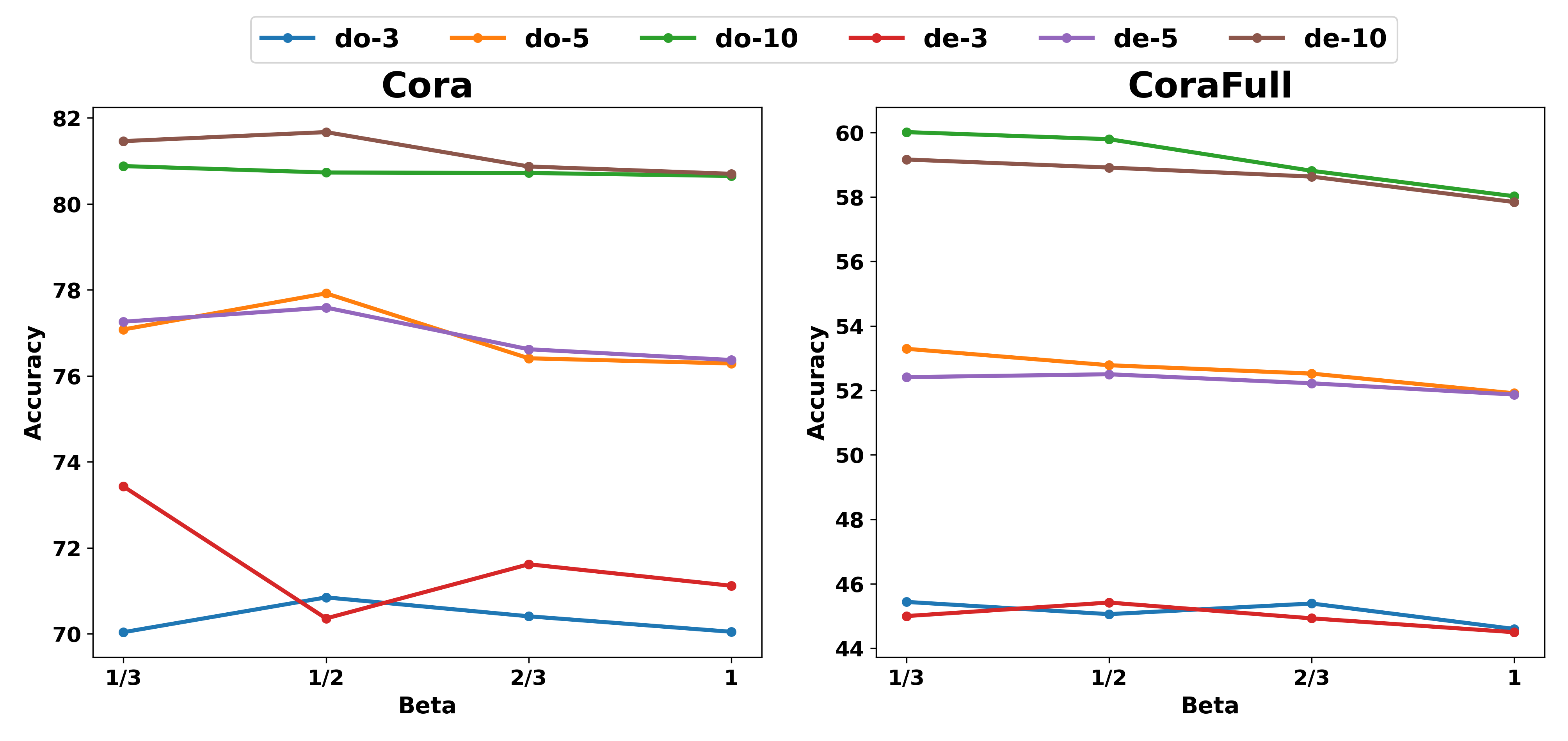}
  \caption{Impact of balance coefficient $\beta$.}
	\label{fig:parameter_beta}
\end{figure}

\begin{figure}
	\centering
	\subfigure[$stage=1$]{\label{fig:tsne_unlabeled_0}\includegraphics[width=0.32\columnwidth]{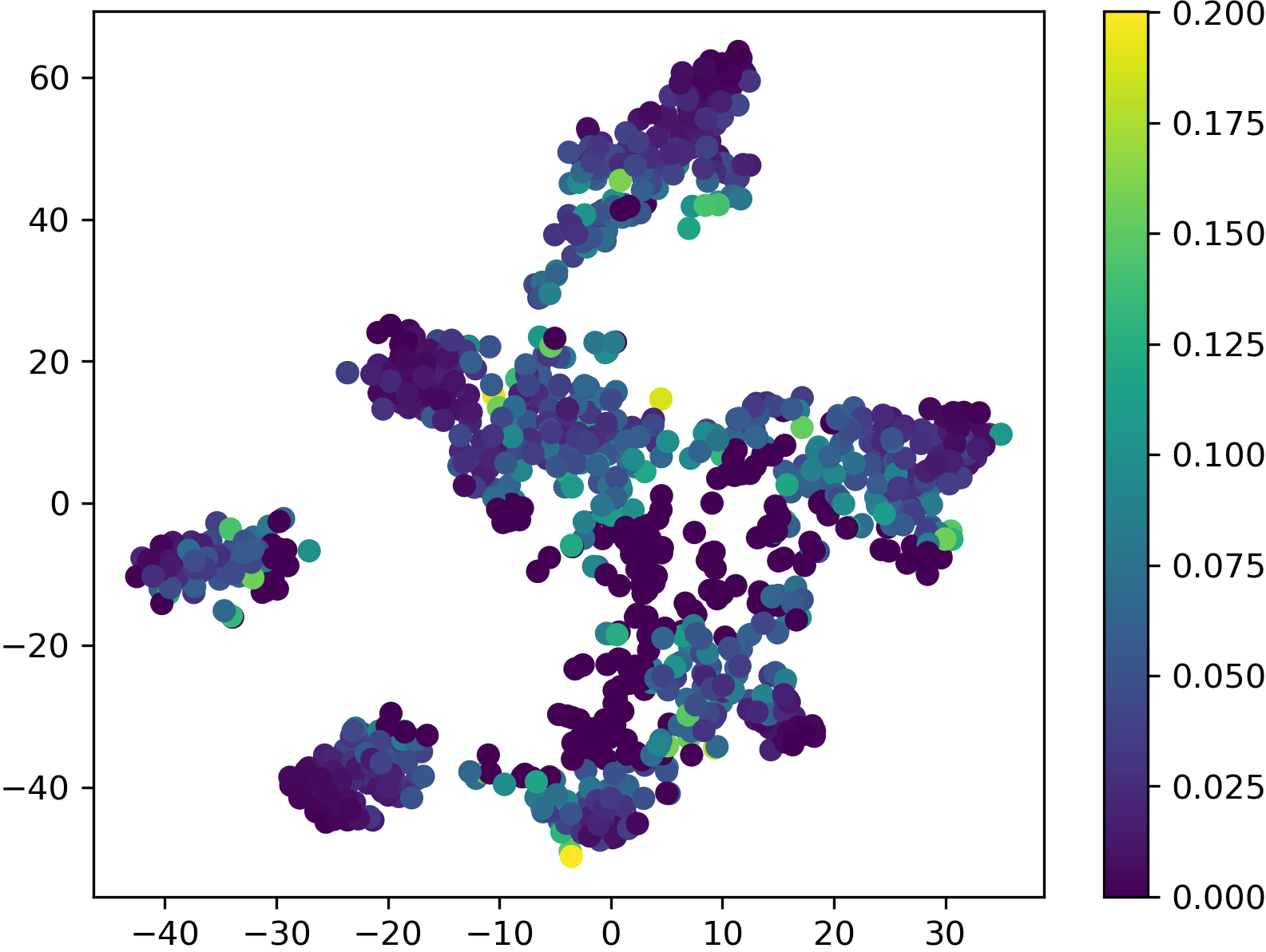}}
  \subfigure[$stage=2$]{\label{fig:tsne_unlabeled_1}\includegraphics[width=0.32\columnwidth]{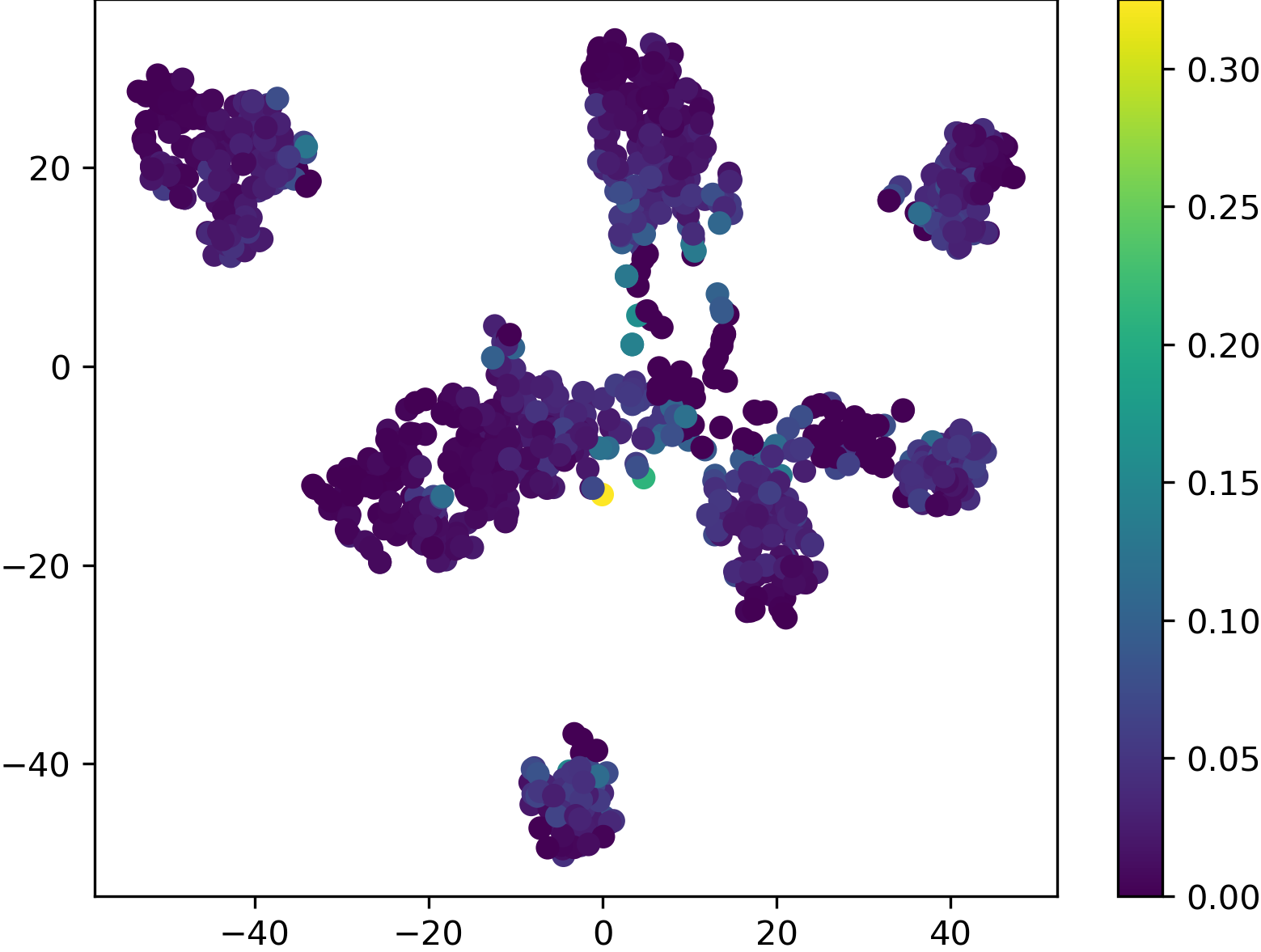}}
	\subfigure[$stage=3$]{\label{fig:tsne_unlabeled_2}\includegraphics[width=0.32\columnwidth]{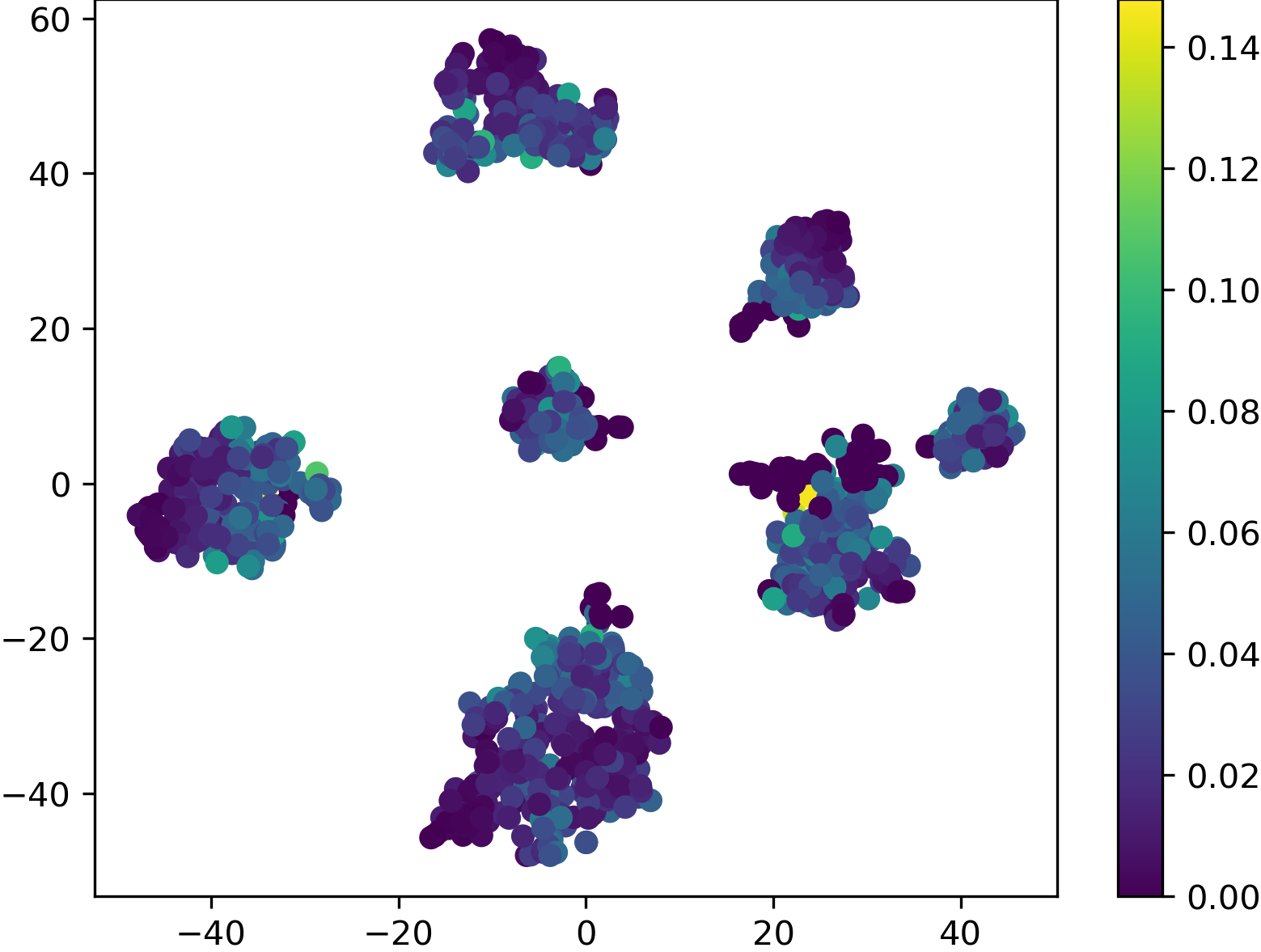}}
	
  \subfigure[$stage=1$]{\label{fig:tsne_test_0}\includegraphics[width=0.32\columnwidth]{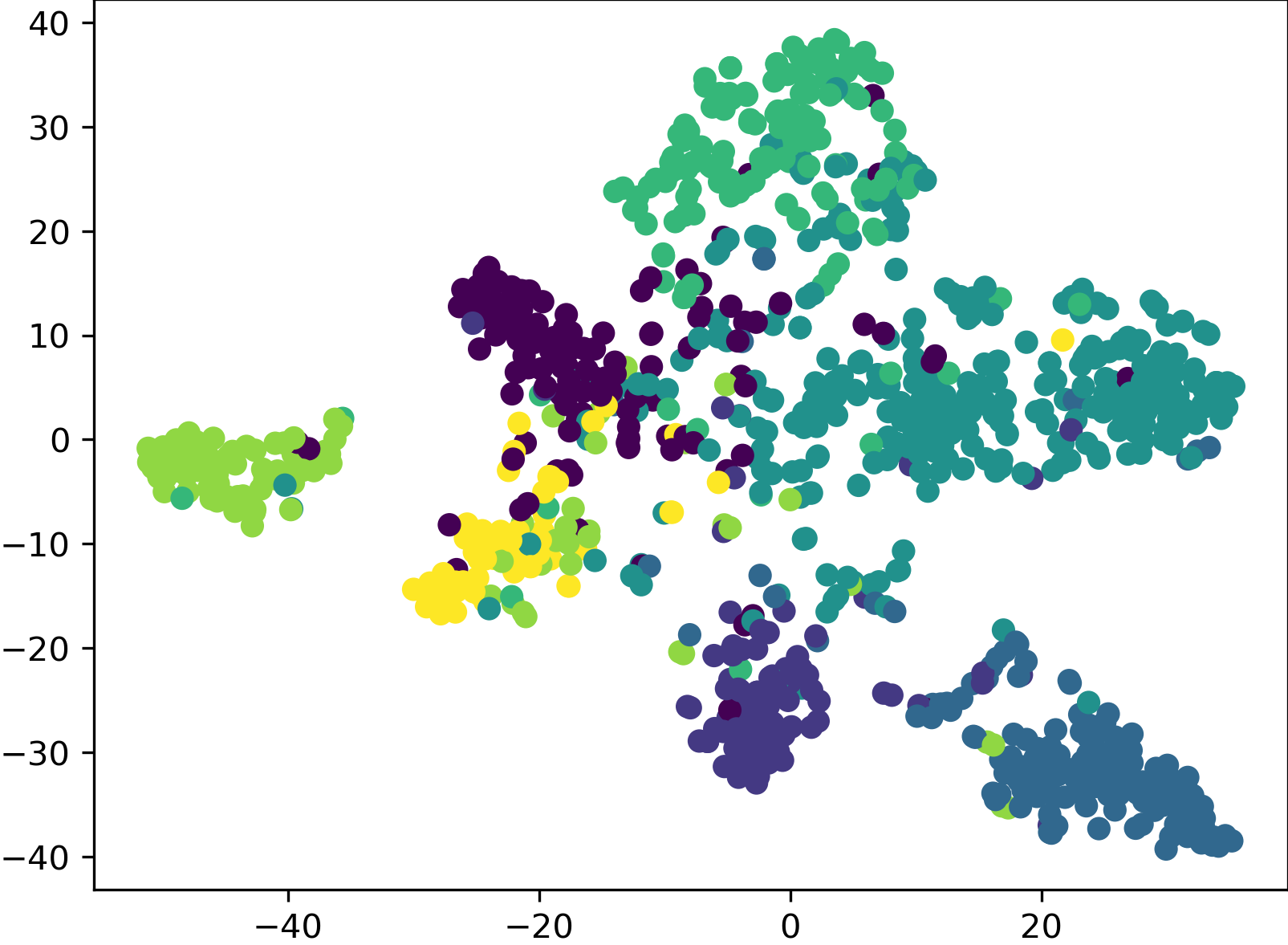}}
	\subfigure[$stage=2$]{\label{fig:tsne_test_1}\includegraphics[width=0.32\columnwidth]{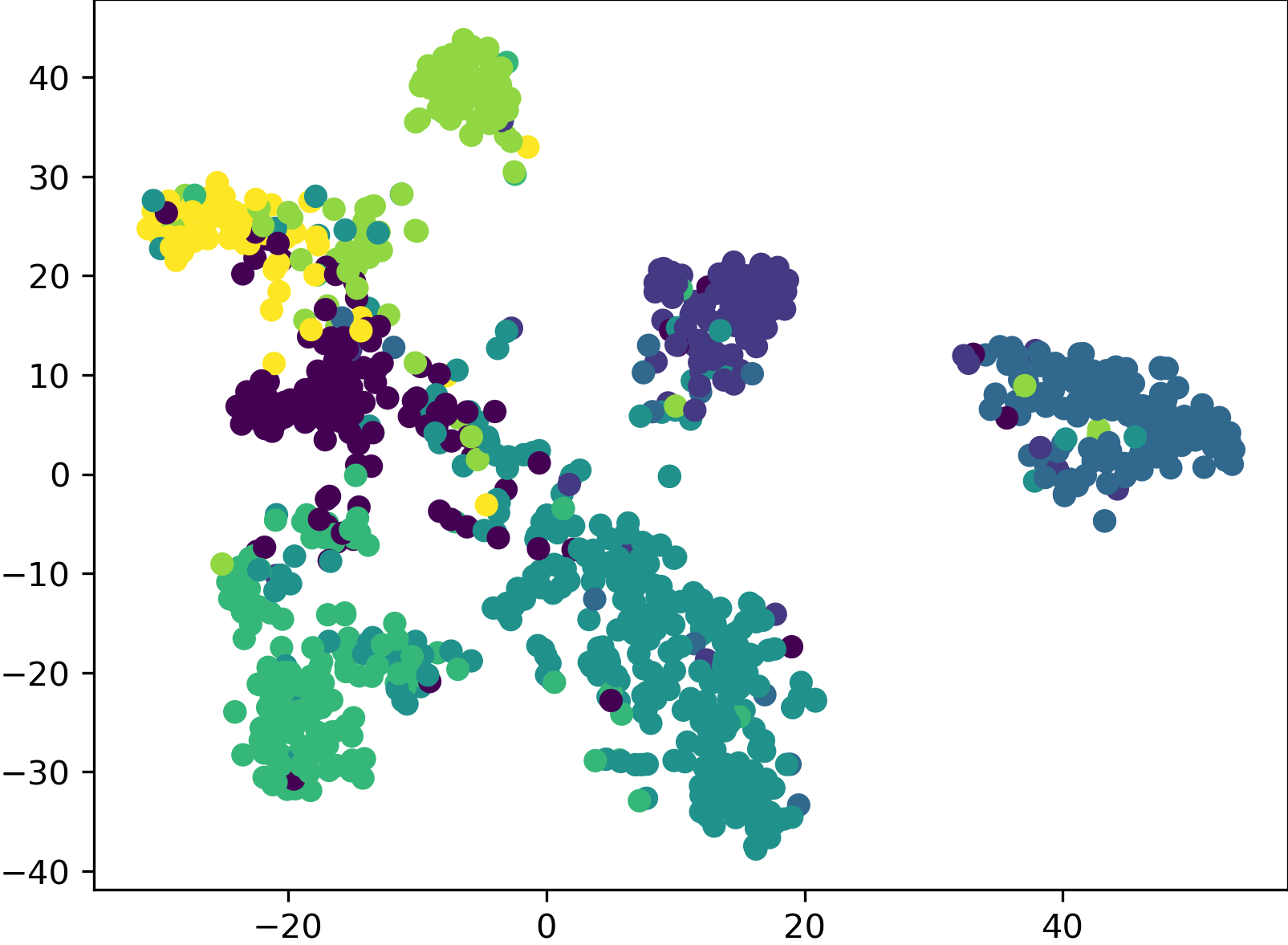}}
	\subfigure[$stage=3$]{\label{fig:tsne_test_2}\includegraphics[width=0.32\columnwidth]{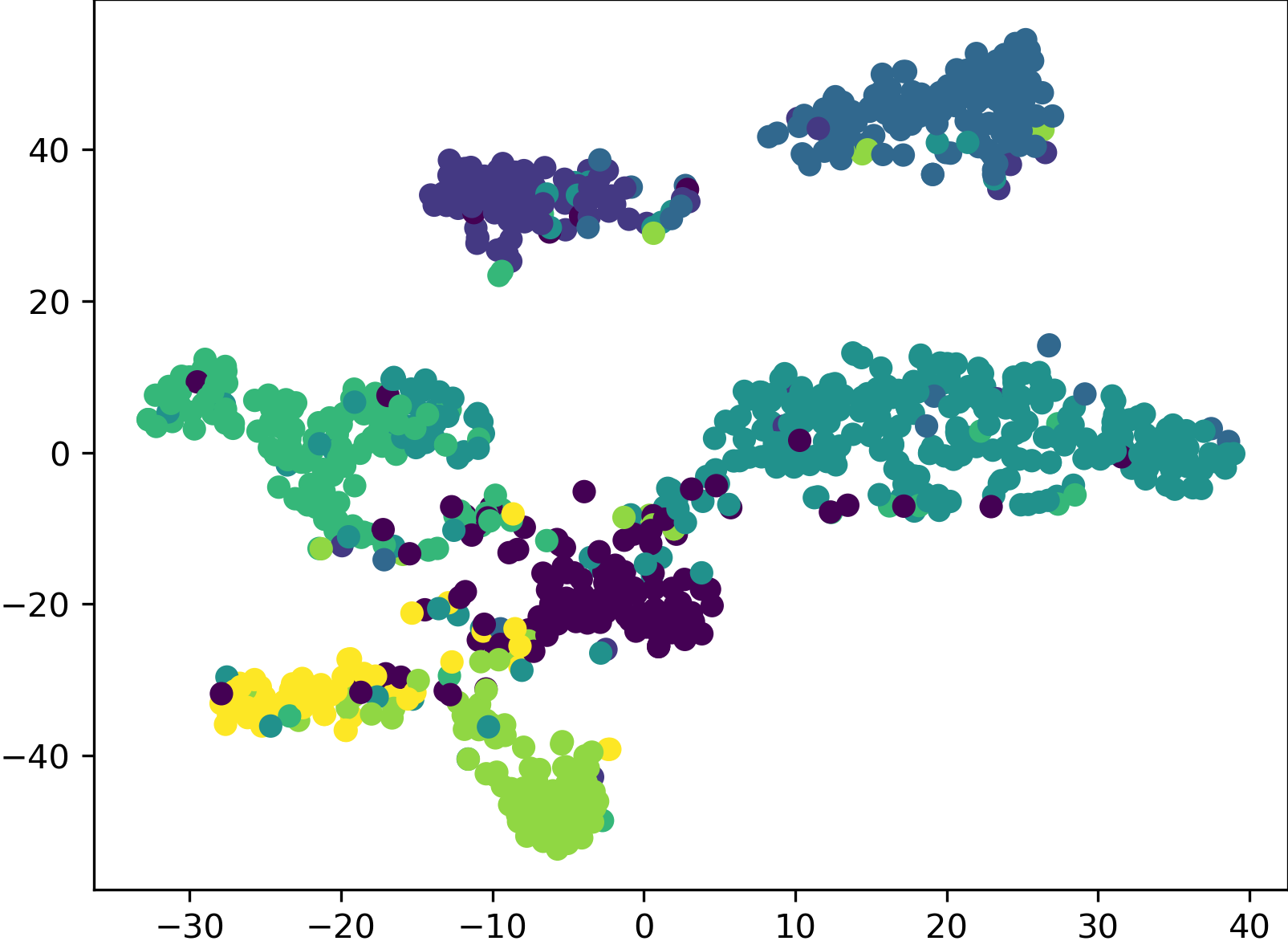}}
  \caption{Visualization of learned embeddings for unlabeled nodes ((a)$\sim$(c)) and test nodes ((d)$\sim$(f)) on Cora at different stages during self-training.}
	\label{fig:experiment_tsne}
\end{figure}

%We investigate into the sensitivity of hyper-parameters on Cora and CoraFull datasets, since similar observations are also made on other datastes. 
% Our {\model} framework involves a few important parameters to tune. 
Here, we investigate into the sensitivity of two hyper-parameters (\ie threshold $\tau$ and balance coefficient $\beta$) on Cora and CoraFull datasets. Similar observations are also made on other datastes. In particular, we respectively report the performance of {\model}$_{do}$ and {\model}$_{de}$, and vary the $L/C$ in \{3, 5, 10\}. For clear notation in figures, we use ``do-3'' to denote {\model}$_{do}$ with $L/C = 3$, and the rest can be done in the same manner.

\textbf{Analysis of threshold $\tau$ in self-training} We test the impact of threshold $\tau$ in self-training, and vary it from 0.40 to 0.70 for Cora and 0.60 to 0.90 for CoraFull. The results are summarized in Fig .\ref{fig:parameter_thre}. Generally speaking, the best performance is achieved when we set a smaller $\tau$, which is consistent with our analysis above that high-confidence unlabeled nodes contribute less. 

\textbf{Analysis of balance coefficient $\beta$}
We then test the impact of the balance coefficient $\beta$ in Eq. \ref{eq:final_loss}, and vary it from 1/3 to 1. The results are shown in Fig. \ref{fig:parameter_beta}. Obviously, with the increase of $\beta$, or, in other words, with more attention paid to hard nodes, the performance shows a downward trend, further demonstrating the effectiveness of our design.

\subsubsection{Visualization}
For a more intuitive of the proposed information gain based DR-GST, we conduct the task of visualization on Cora dataset. Specifically, as shown in Fig. \ref{fig:experiment_tsne}, we visualize the output embedding of the student model at different stages in DR-GST for Cora dataset. From Fig. \ref{fig:tsne_unlabeled_0} to Fig. \ref{fig:tsne_unlabeled_2} we show the visualization of unlabeled nodes, where a lighter dot represents a node endowed with a higher weight by information gain when calculating the loss function in Eq. \ref{eq:final_loss}. Obviously, we can discover that at an earlier stage, DR-GST pays more attention to nodes close to the decision boundary which is also indistinct at this moment. With the training progress going on, the light nodes gradually vanish, implying that most of information these nodes contain has been learned, leading to a more crisp decision boundary. From Fig. \ref{fig:tsne_test_0} to Fig. \ref{fig:tsne_test_2} we show the visualization of test nodes, where different colors represent different classes. Apparently, the separability of different classes for test nodes is gradually improved, further demonstrating the effectiveness of DR-GST for optimizing the decision boundary. 

\section{Related Work}
\label{sec:related_work}

In line with the main focus of our work, we review the most related work in graph neural networks and self-training.

\textbf{Graph Neural Networks}
Recent years have seen a surge of efforts on Graph Neural Networks (GNNs) and achieved state-of-the-art performance in various tasks on graphs~\cite{zhang2020deep,wu2020comprehensive}.
%Graph Neural Networks (GNNs) have recently intrigued vast interest and achieved state-of-the-art performance in various tasks on graphs. 
Generally, current GNNs can be divided into two categories. 
The first category is spectral-based GNNs, which defines graph convolution operation in the spectral domain~\cite{Spectral-graph,ChebNet}.
%which mimics CNNs to learn the local and global representation of graphs through well-designed convolution and readout operators. 
%\cite{Spectral-graph} first generalizes CNNs to graphs based on the spectrum of graph Laplacian, proposing a spectral graph-based extension of convolution. ChebNet \cite{ChebNet} employs the Chebyshev expansion of the graph Laplacian to approximate the $K$-order localized graph filters. 
The well-known GCN \cite{gcn} simplifies graph convolutions by using the 1-order approximation.
% greatly improving the efficiency. 
Since then, plenty of studies have sprung up. SGC \cite{sgc} further simplifies GCN by removing the nonlinearities between GCN layers. 
% and collapsing the resulting function into a single linear transformation.
\cite{deeper} shows that GCNs smooth node features between neighbours.
On the comparison, the other category is spatial-based GNNs, mainly devoted to aggregating and transforming the local information from the perspective of spatial domain. GAT \cite{gat} assigns the learnt weight to each edge during aggregation. 
% so that different nodes are applied to different attentions. 
\cite{graphsage} proposes a permutation-invariant aggregator for message passing. 
%\cite{mixhop} simultaneously aggregates feature information from multi-order neighbours in each layer. 
Moreover, there are many other graph neural models, we please refer the
readers to recent surveys \cite{survey1, survey2} for a more comprehensive review.

\textbf{Self-training}
Despite the success, GNNs typically require large amounts of labeled data, which is expensive and time-consuming.
Self-training \cite{self-training} is one of the earliest strategies addressing labeled data scarcity by making better use of abundant unlabeled data, and has shown remarkable performance on various tasks \cite{pseudo-labeling, uncertainty, self-neural}.
% image classification , text classification \cite{uncertainty}, neural machine translation \cite{self-neural}. 
Recently, \cite{deeper} proposes a graph-based self-training framework, demonstrating the effectiveness of self-training on graphs. Further, \cite{m3s} utilizes the DeepCluster \cite{deepcluster} to filter out low-quality pseudo labels during self-training.
CaGCN-st \cite{beconfident} argues that self-training under-performs due to generally overlooked low-confidence but high-accuracy predictions, and proposes a confidence-calibrated self-training framework.
% putting forward a multi-stage self-training framework. 
\cite{abn} proposes to select high-quality unlabeled nodes via an adaptive pseudo labeling technique. \cite{rank-self-training} utilizes a margin prediction confidence to select unlabeled nodes, aiming at identifying the most confident labels. In summary, almost all of graph self-training methods focus on improving the quality of pseudo labels by virtue of confidence, but none of them have ever considered the capability and limitation of such selection criterion.

\section{Conclusion}
In this paper, we empirically make a thorough study for capability and limitation of current self-training methods on graphs, and surprisingly find they may be cheated by confidence and even suffer from the {\ds} issue, leading to unpromising performance. To this end, we propose a novel self-training framework {\model} which not only addresses the {\ds} issue from the view of information gain, but also is equipped with the creative loss correction strategy for improving qualities of pseudo labels. Theoretical analysis and extensive experiments well demonstrate the effectiveness of the proposed {\model}. Moreover, our study also gives an insight that confidence alone is not enough for self-training and thus motivates us an interesting direction for future work, \ie exploiting more criteria for the selection of unlabeled nodes during self-training.

\section{ACKNOWLEDGMENTS}
% This work is supported in part by the National Natural Science Foundation of China (62172052,
% U1936104) and the Fundamental Research Funds for the Central Universities 2021RC28.

This work is supported in part by the National Natural Science Foundation of China (No. U20B2045, 62192784, 62172052, 61772082, 62002029, U1936104), the Fundamental Research Funds for the Central Universities 2021RC28 and CCF-Ant Group Research Fund.

\bibliographystyle{ACM-Reference-Format}
\bibliography{main}

\clearpage
%%
%% If your work has an appendix, this is the place to put it.
\appendix
\newpage
\section{SUPPLEMENT}
In the supplement, we first provide detailed poofs of import theorems in our paper \ie Theorem~\ref{theory:loss_function}, Proposition \ref{theory_lc} and Theorem \ref{theory}. Next, more experimental details are represented for reproduction.

\subsection{Proof}
In this section, we successively show the detailed proof for Theorem~\ref{theory:loss_function}, Proposition \ref{theory_lc} and Theorem \ref{theory}.
\subsubsection{Proof of Theorem \ref{theory:loss_function}}
\label{appendix:proof_loss_function}
\begin{proof}
With our assumption that $\bar{y}_u=y_u$ for each pseudo-labeled node $v_u\in\mathcal{S}_U$, 
we first rewrite $\mathcal{L}_{pop}$ in Eq. \ref{eq:loss_pop} as:
\begin{equation}
  \label{eq:loss_pop_divide}
  \begin{aligned}
    \mathcal{L}_{pop} &= \frac{|\mathcal{S}_U|}{|\mathcal{V}_L\cup\mathcal{S}_U|}\mathbb{E}_{(v_u,y_u)\thicksim P_{pop}(\mathcal{V,Y})}l(\bar{y}_u,\mathbf{p}_u)\\
    &+\frac{|\mathcal{V}_L|}{|\mathcal{V}_L\cup\mathcal{S}_U|}\mathbb{E}_{(v_i,y_i)\thicksim P_{pop}(\mathcal{V,Y})}l(y_i,\mathbf{p}_i).
  \end{aligned}
\end{equation}
Note that 
\begin{equation}
  \mathbb{E}_{(v_u,y_u)\thicksim P_{pop}(\mathcal{V,Y})}l(\bar{y}_u,\mathbf{p}_u) = \mathbb{E}_{(v_u,y_u)\thicksim P_{st}(\mathcal{V,Y})}\frac{P_{pop}(v_u,y_u)}{P_{st}(v_u,y_u)}l(\bar{y}_u,\mathbf{p}_u),
\end{equation}
then we can rewrite Eq. \ref{eq:loss_pop_divide} as
\begin{equation}
  \label{eq:loss_pop_complex}
  \begin{aligned}
    \mathcal{L}_{pop} &= \frac{|\mathcal{S}_U|}{|\mathcal{V}_L\cup\mathcal{S}_U|}\mathbb{E}_{(v_u,y_u)\thicksim P_{st}(\mathcal{V,Y})}\frac{P_{pop}(v_u,y_u)}{P_{st}(v_u,y_u)}l(\bar{y}_u,\mathbf{p}_u)\\
    &+\frac{|\mathcal{V}_L|}{|\mathcal{V}_L\cup\mathcal{S}_U|}\mathbb{E}_{(v_i,y_i)\thicksim P_{pop}(\mathcal{V,Y})}l(y_i,\mathbf{p}_i) \\
    &=\frac{|\mathcal{S}_U|}{|\mathcal{V}_L\cup\mathcal{S}_U|}\mathbb{E}_{(v_u,y_u)\thicksim P_{st}(\mathcal{V,Y})}\gamma_u l(\bar{y}_u,\mathbf{p}_u)\\
    &+\frac{|\mathcal{V}_L|}{|\mathcal{V}_L\cup\mathcal{S}_U|}\mathbb{E}_{(v_i,y_i)\thicksim P_{pop}(\mathcal{V,Y})}l(y_i,\mathbf{p}_i),
  \end{aligned}
\end{equation}
where $\gamma_u$ can be regarded as a weight of the loss function for each pseudo-labeled node $v_u$. 

Finally, recalling the loss function under the {\ds} case in Eq. \ref{eq:loss_st}, i.e., 
\begin{equation}
    \begin{aligned}
        \mathcal{L}_{st} &= \frac{|\mathcal{V}_L|}{|\mathcal{V}_L\cup\mathcal{S}_U|} \mathbb{E}_{(v_i,y_i)\thicksim P_{pop}(\mathcal{V,Y})}l(y_i,\mathbf{p}_i) \\
        &+ \frac{|\mathcal{S}_U|}{|\mathcal{V}_L\cup\mathcal{S}_U|}\mathbb{E}_{(v_u,y_u)\thicksim P_{st}(\mathcal{V,Y})}l(\bar{y}_u,\mathbf{p}_u),
    \end{aligned}
\end{equation}
we can find that it is definitely equal to that in Eq. \ref{eq:loss_pop} 
with an additional weight coefficient. 
In other words, we can recover the population distribution as long as we weight each pseudo-labeled node with a proper coefficient in $\mathcal{L}_{st}$. 
\end{proof}

\subsubsection{Proof of Proposition \ref{theory_lc}}
\label{appendix:proof_2}
\begin{proof}
    Without loss of generality, we respectively prove the equality of ${\theta}^*$ and $\bar{\theta}$ under MSE loss and CE loss.
    
    \textbf{MSE loss.} Under the MSE loss, with our non-zero assumption for $\mathbf{T}$, the following equation holds true: 
    \begin{equation}
      \begin{aligned}
        {\theta^*} &= \arg\min_{{\theta^*}\in\Theta}\sum_u||f_{\theta^*}(\mathbf{x}_u,\mathbf{A})-\mathbf{y}_u||^2\\
        &=\arg\min_{{\theta^*}\in\Theta}\sum_u||\mathbf{T}f_{\theta^*}(\mathbf{x}_u,\mathbf{A})-\mathbf{T}\mathbf{y}_u||^2\\
        &=\arg\min_{\bar{\theta}\in\Theta}\sum_u||f_{\bar{\theta}}(\mathbf{x}_u,\mathbf{A})-\bar{\mathbf{y}}_u||^2=\bar{\theta}, 
      \end{aligned}
    \end{equation}
    where $\mathbf{y}_u$ is a one-hot vector expanded from $y_u$. The proof is concluded for MSE loss.
    
    \textbf{CE loss.}
    Under CE loss, we prove the equality of $\bar{\theta}$ and $\theta^*$ from the perspective of gradient descent. Specifically, if for each node $v_u$, the gradient of $f_{\bar{\theta}}(\mathbf{x}_u,\mathbf{A})$ w.r.t.  $\bar{\theta}$ is equal to that of $f_{\theta ^*}(\mathbf{x}_u,\mathbf{A})$ w.r.t. $\theta^*$, then optimizing a model $f_{\bar{\theta}}$ using gradient descent will definitely leads to our desired model $f_{\theta^*}$, that is to say, $\bar{\theta}=\theta^*$.
    
    Specifically, for each node $v_u$, we first rewrite the CE loss as follows:
    \begin{equation}
        l(\mathbf{y}_u,\mathbf{p}_u)=\mathbf{y}_u^\mathsf{T}\log f_\theta(\mathbf{x}_u,\mathbf{A}).
    \end{equation}
    Then the difference $d$ of gradient between $\bar{\theta}$ and $\theta^*$ can be written as:
    \begin{equation}
    \label{eq:appendix_diff}
        d=||\nabla_\theta {\bar{\mathbf{y}}_u}^\mathsf{T}\log f_{\bar{\theta}}(\mathbf{x}_u,\mathbf{A})-\nabla_\theta \mathbf{y}_u^\mathsf{T}\log f_{\theta^*}(\mathbf{x}_u,\mathbf{A})||
    \end{equation}
    Considering our assumption that $f_{\bar{\theta}}(\mathbf{x}_u,\mathbf{A})=\mathbf{T}f_{\theta^ *}(\mathbf{x}_u,\mathbf{A})$, Eq. \ref{eq:appendix_diff} becomes:
    \begin{equation}
    \label{eq:appendix_diff_re}
        d=||\nabla_\theta ({\mathbf{T}\mathbf{y}_u})^\mathsf{T}\log (\mathbf{T}f_{\theta^*}(\mathbf{x}_u,\mathbf{A}))-\nabla_\theta \mathbf{y}_u^\mathsf{T}\log f_{\theta^*}(\mathbf{x}_u,\mathbf{A})||
    \end{equation}
    According to the chain rule, we have:
    \begin{equation}
        d=||\nabla_\theta f_{\theta^*}(\mathbf{x}_u,\mathbf{A})\cdot (\mathbf{T}^\mathsf{T}(\mathbf{T}\mathbf{y}_u\oslash\mathbf{T}f_{\theta^*}(\mathbf{x}_u,\mathbf{A}))-\mathbf{y}_u\oslash f_{\theta^*}(\mathbf{x}_u,\mathbf{A}))||,
    \end{equation}
    where $\oslash$ represents the element-wise division operation.
    
    Obviously, if $\mathbf{T}$ is a permutation matrix, the difference $d$ of gradient is zero. The proof is concluded for CE loss.
\end{proof}

\subsubsection{Proof of Theorem \ref{theory}}
\label{appendix:proof_1}

To prove Theorem \ref{theory}, we need to borrow a corollary from \cite{abn}, which illustrates the impact of incorrect pseudo labels on self-training without {\ds}.
\begin{corollary}
\label{corallary}
    Assuming that the augmented dataset follows the population distribution $P_{pop}$ and $||\nabla_\theta l||\leq\Psi$ for any gradient $\nabla_\theta\mathcal{L}$, the following bound between $\nabla_\theta\mathcal{L}_{pop}$ and $\nabla_\theta\mathcal{L}_{st}$ holds:
    \begin{equation}
    \begin{aligned}
        |\nabla_\theta \mathcal{L}_{pop} - \nabla_\theta \mathcal{L}_{st}| &\leqslant \frac{|\mathcal{S}_U|}{|\mathcal{V}_L\cup\mathcal{S}_U|}2\Psi||P_{(v_u,y_u)\thicksim P_{pop}(\mathcal{V,Y})}(\bar y_u\neq y_u)||.
    \end{aligned}
    \end{equation}
\end{corollary}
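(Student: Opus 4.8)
The plan is to exploit the no-shift hypothesis $P_{st}=P_{pop}$ so that the labeled portions of $\mathcal{L}_{pop}$ and $\mathcal{L}_{st}$ cancel exactly, leaving a discrepancy that is supported only on the pseudo-labeled nodes and that is controlled entirely by the pseudo-label error rate. Since nothing here asks us to correct for a distribution gap, the only source of disagreement between the two losses is that $\mathcal{L}_{st}$ scores the pseudo-nodes against the pseudo-labels $\bar y_u$ while $\mathcal{L}_{pop}$ scores them against the ground-truth labels $y_u$.

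First I would decompose both losses over the augmented dataset $\mathcal{V}_L\cup\mathcal{S}_U$. Under the corollary's hypothesis that the augmented dataset follows $P_{pop}$, the labeled group and the pseudo-labeled group are both drawn from $P_{pop}$, so $\mathcal{L}_{pop}$ from Eq.~\ref{eq:loss_pop} equals the size-weighted convex combination
$$\mathcal{L}_{pop}=\frac{|\mathcal{V}_L|}{|\mathcal{V}_L\cup\mathcal{S}_U|}\mathbb{E}_{(v_i,y_i)\sim P_{pop}}l(y_i,\mathbf{p}_i)+\frac{|\mathcal{S}_U|}{|\mathcal{V}_L\cup\mathcal{S}_U|}\mathbb{E}_{(v_u,y_u)\sim P_{pop}}l(y_u,\mathbf{p}_u),$$
which is the analogue of the split in Eq.~\ref{eq:loss_pop_divide} but retaining the ground-truth label $y_u$ (rather than $\bar y_u$) in the pseudo-node term, since here I do not assume $\bar y_u=y_u$. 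For $\mathcal{L}_{st}$ I substitute $P_{st}=P_{pop}$ into Eq.~\ref{eq:loss_st}, which gives the same two-term decomposition except that the pseudo-node term now carries the pseudo label, namely $\mathbb{E}_{(v_u,y_u)\sim P_{pop}}l(\bar y_u,\mathbf{p}_u)$.

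Subtracting, the labeled-node terms are identical and cancel, and pushing the gradient inside the expectation by linearity leaves
$$\nabla_\theta\mathcal{L}_{pop}-\nabla_\theta\mathcal{L}_{st}=\frac{|\mathcal{S}_U|}{|\mathcal{V}_L\cup\mathcal{S}_U|}\mathbb{E}_{(v_u,y_u)\sim P_{pop}}\big(\nabla_\theta l(y_u,\mathbf{p}_u)-\nabla_\theta l(\bar y_u,\mathbf{p}_u)\big).$$
The key observation is that the integrand vanishes on the event $\{\bar y_u=y_u\}$, while on its complement $\{\bar y_u\neq y_u\}$ the triangle inequality together with the uniform bound $\|\nabla_\theta l\|\leqslant\Psi$ gives $\|\nabla_\theta l(y_u,\mathbf{p}_u)-\nabla_\theta l(\bar y_u,\mathbf{p}_u)\|\leqslant\|\nabla_\theta l(y_u,\mathbf{p}_u)\|+\|\nabla_\theta l(\bar y_u,\mathbf{p}_u)\|\leqslant 2\Psi$.

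Finally I would take the norm, move it inside the expectation by the triangle inequality, and split the expectation according to whether the pseudo-label is correct: the correct part contributes nothing and the incorrect part is bounded by $2\Psi$ times its probability mass $P_{(v_u,y_u)\sim P_{pop}}(\bar y_u\neq y_u)$. This yields precisely $\frac{|\mathcal{S}_U|}{|\mathcal{V}_L\cup\mathcal{S}_U|}\,2\Psi\,\|P_{(v_u,y_u)\sim P_{pop}}(\bar y_u\neq y_u)\|$. I expect the only delicate point to be the decomposition step: one must justify that under the no-shift assumption $\mathcal{L}_{pop}$ admits the labeled/pseudo split above so that its labeled term is literally identical to that of $\mathcal{L}_{st}$ and hence cancels; everything afterward is a routine indicator-splitting estimate.
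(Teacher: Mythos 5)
Your proposal is correct, but there is nothing in the paper to compare it against: the paper does not prove this corollary at all --- it explicitly borrows it from the ABN reference \cite{abn} (``we need to borrow a corollary from \cite{abn}'') and uses it as a black box in the proof of Theorem \ref{theory}. Your blind reconstruction is the natural and, as far as I can tell, the intended argument. Under the no-shift hypothesis $P_{st}=P_{pop}$, writing $\mathcal{L}_{pop}$ as the size-weighted convex combination is trivially valid (the two weights sum to one and multiply the same expectation), the labeled terms cancel, and the residual is
\begin{equation*}
  \nabla_\theta\mathcal{L}_{pop}-\nabla_\theta\mathcal{L}_{st}
  =\frac{|\mathcal{S}_U|}{|\mathcal{V}_L\cup\mathcal{S}_U|}\,
  \mathbb{E}_{(v_u,y_u)\sim P_{pop}}\bigl[\nabla_\theta l(y_u,\mathbf{p}_u)-\nabla_\theta l(\bar y_u,\mathbf{p}_u)\bigr],
\end{equation*}
whose integrand vanishes on $\{\bar y_u=y_u\}$ and is bounded by $2\Psi$ on the complement, giving exactly the stated bound. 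Two minor points worth making explicit if you write this up: first, $\bar y_u$ is the \emph{teacher's} frozen prediction, so it carries no $\theta$-dependence and the gradient passes through the label argument without complication; second, the interchange of $\nabla_\theta$ and the expectation is innocuous here since in practice these are finite empirical averages (or is justified by dominated convergence given the uniform bound $\|\nabla_\theta l\|\leqslant\Psi$). Note also that your derivation is consistent with how the corollary is actually invoked inside the paper's proof of Theorem \ref{theory} (Eq. \ref{eq:appendix_proof_first}), where precisely this same-distribution, true-versus-pseudo-label gradient gap is bounded by $2\Psi\,\|P_{(v_u,y_u)\thicksim P_{pop}(\mathcal{V,Y})}(\bar y_u\neq y_u)\|$; the norm bars around the probability in the paper's statement are a notational quirk you are right to read as the scalar error rate.
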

Now, we prove Theorem \ref{theory}.
\begin{proof}
We first calculate the difference between $\nabla_\theta\mathcal{L}_{pop}$ and $\nabla_\theta\mathcal{L}_{st}$ as follows:
\begin{equation}
\label{eq:appendix_proof_theory1}
    \begin{aligned}
        ||\nabla_\theta\mathcal{L}_{pop}-\nabla_\theta\mathcal{L}_{st}||
        &= \frac{|\mathcal{S}_U|}{|\mathcal{V}_L\bigcup\mathcal{S}_U|}||\mathbb{E}_{(v_u,y_u)\sim P_{pop}(\mathcal{V,Y})}\nabla_{\theta}l(y_u,\mathbf{p}_u)\\
        &-\mathbb{E}_{(v_u,y_u)\sim P_{st}(\mathcal{V,Y})}\nabla_\theta l(\bar{y}_u,\mathbf{p}_u)||.
    \end{aligned}
\end{equation}
Adding and subtracting a same term $\mathbb{E}_{(v_u,y_u)\sim P_{st}(\mathcal{V,Y})}\nabla_\theta l(\bar{y}_u,\mathbf{p}_u)$, and abbreviating $\frac{|\mathcal{S}_U|}{|\mathcal{V}_L\bigcup\mathcal{S}_U|}$ as $\eta$, Eq. \ref{eq:appendix_proof_theory1} can be written as:
\begin{equation}
    \begin{aligned}
        &||\nabla_\theta\mathcal{L}_{pop}-\nabla_\theta\mathcal{L}_{st}||=\\
        &\eta||\mathbb{E}_{(v_u,y_u)\sim P_{pop}(\mathcal{V,Y})}\nabla_\theta l(y_u,\mathbf{p}_u)-\mathbb{E}_{(v_u,y_u)\sim P_{pop}(\mathcal{V,Y})}\nabla_\theta l(\bar{y}_u,\mathbf{p}_u)\\
        &+\mathbb{E}_{(v_u,y_u)\sim P_{pop}(\mathcal{V,Y})}\nabla_\theta l(\bar{y}_u,\mathbf{p}_u)-\mathbb{E}_{(v_u,y_u)\sim P_{st}(\mathcal{V,Y})}\nabla_\theta l(\bar{y}_u,\mathbf{p}_u)||.
    \end{aligned}
\end{equation}
According to the triangle property of the norm, the following inequality is satisfied:
\begin{equation}
    \begin{aligned}
        &||\nabla_\theta\mathcal{L}_{pop}-\nabla_\theta\mathcal{L}_{st}||\leq\\
        &\eta (||\mathbb{E}_{(v_u,y_u)\sim P_{pop}(\mathcal{V,Y})}\nabla_\theta l(y_u,\mathbf{p}_u)-\mathbb{E}_{(v_u,y_u)\sim P_{pop}(\mathcal{V,Y})}\nabla_\theta l(\bar{y}_u,\mathbf{p}_u)||\\
        &+||\mathbb{E}_{(v_u,y_u)\sim P_{pop}(\mathcal{V,Y})}\nabla_\theta l(\bar{y}_u,\mathbf{p}_u)-\mathbb{E}_{(v_u,y_u)\sim P_{st}(\mathcal{V,Y})}\nabla_\theta l(\bar{y}_u,\mathbf{p}_u)||).
    \end{aligned}
\end{equation}
Recalling Corollary \ref{corallary}, we know that the first term on the right hand side satisfies:
\begin{equation}
\label{eq:appendix_proof_first}
    \begin{aligned}
        &||\mathbb{E}_{(v_u,y_u)\sim P_{pop}(\mathcal{V,Y})}\nabla_\theta l(y_u,\mathbf{p}_u)-\mathbb{E}_{(v_u,y_u)\sim P_{pop}(\mathcal{V,Y})}\nabla_\theta l(\bar{y}_u,\mathbf{p}_u)||\\
        &\leq 2\Psi||P_{(v_u,y_u)\thicksim P_{pop}(\mathcal{V,Y})}(\bar y_u\neq y_u)||.
    \end{aligned}
\end{equation}
And for the second term, we have:
\begin{equation}
\label{eq:appendix_proof_second}
    \begin{aligned}
        &\mathbb{E}_{(v_u,y_u)\sim P_{pop}(\mathcal{V,Y})}\nabla_\theta l(\bar{y}_u,\mathbf{p}_u)-\mathbb{E}_{(v_u,y_u)\sim P_{st}(\mathcal{V,Y})}\nabla_\theta l(\bar{y}_u,\mathbf{p}_u)||\\
        &=\int_{-\infty}^{+\infty}\int_{-\infty}^{+\infty}\nabla_\theta l(\bar{y}_u,\mathbf{p}_u)d(P_{pop}(\mathcal{V,Y})-P_{st}(\mathcal{V,Y}))\\
        &\leq \Psi\cdot||P_{pop}(\mathcal{V,Y})-P_{st}(\mathcal{V,Y})||,
    \end{aligned}
\end{equation}
where the inequality is from our assumption that $||\nabla_\theta l||\leq\Psi$.

Combining Eq. \ref{eq:appendix_proof_first} with Eq. \ref{eq:appendix_proof_second}, we have:
\begin{equation}
    \begin{aligned}
        ||\nabla_\theta \mathcal{L}_{pop} - \nabla_\theta \mathcal{L}_{st}|| &\leqslant \frac{|\mathcal{S}_U|}{|\mathcal{V}_L\cup\mathcal{S}_U|}\Psi(2||P_{(v_u,y_u)\thicksim P_{pop}(\mathcal{V,Y})}(\bar y_u\neq y_u)\\
        &+||P_{st}(\mathcal{V,Y})-P_{pop}(\mathcal{V,Y})||).
    \end{aligned}
\end{equation}
The proof is concluded. 
\end{proof}

\subsection{Time Complexity Analysis}
\label{appendix:time}
We first analyze the time complexity of a general self-training framework. Assuming training an epoch takes $O(M)$ time, given epochs $E$, its time complexity in each stage is $O(EM)$. 
DR-GST is innovated in information gain and loss correction, which respectively takes $O(TM)$ and $O(Ec^2)$ time in each stage, where $T$ and $c$ are the numbers of sampling for variational inference and class. Moreover, considering that we train a student model twice in each stage, the total time complexity is $O((2E+T)M+Ec^2)$.
In fact, $T$ and $O(Ec^2)$ are always far less than $E$ and $O(EM)$. Consequently, the time complexity of DR-GST is approximately twice that of the general self-training framework.

\subsection{More Experimental Details}
\label{appendix:details}

\subsubsection{Details of datasets}
\label{appendix:dataset}
\begin{table}[]
  \caption{The statistics of datasets}
  \label{tab:dataset}
  \setlength{\tabcolsep}{0.9mm}{
    \begin{tabular}{@{}ccccccc@{}}
      \toprule
      \textbf{Dataset}  & \textbf{Nodes} & \textbf{Edges} & \textbf{Classes} & \textbf{Features} & \textbf{Validation} & \textbf{Test} \\ \midrule
      \textbf{Cora}     & 2708           & 5429           & 7                & 1433              & 500                 & 1000          \\
      \textbf{Citeseer} & 3327           & 4732           & 6                & 3703              & 500                 & 1000          \\
      \textbf{Pubmed}   & 19717          & 44338          & 3                & 500               & 500                 & 1000          \\
      \textbf{CoraFull} & 19793          & 65311          & 70               & 8710              & 500                 & 1000          \\
      \textbf{Flickr}   & 7575           & 239738         & 9                & 12047             & 500                 & 1000          \\ \bottomrule
      \end{tabular}}
\end{table}
We adopt five widely used benchmark datasets from citation networks~\cite{cora,corafull} (\ie Cora, Citeseer, Pubmed and CoraFull) and social network~\cite{flickr} (\ie Flickr) for evaluation. For the citation networks, nodes represent papers, edges are the citation relationship between papers, node features
are comprised of bag-of-words vector of the papers and labels represent the fields of papers. 
And for the social network, nodes in Flickr represent users of the Flickr website, edges are their relationships 
induced by their photo-sharing records and labels represent users' interest groups. 
For all the datasets, We choose 500 nodes for validation, 1000 nodes for test. The details of these datasets are summarized in Table \ref{tab:dataset}.
Our data are public and do not contain personally identifiable information and offensive content.
The address of our data is \url{https://docs.dgl.ai/en/latest/api/python/dgl.data.html#node-prediction-datasets} 
and the license is Apache License 2.0.

\subsubsection{Implementation}
\label{appendix:implementation}
We supplement the implementation details of DR-GST and all the baselines here.

For fair comparison, we utilize the standard GCN with 2 layers as the backbone for all graph self-training framework. We optimize models via Adam with learning rate of 0.01 and early stopping with a window size of 200.
% We utilize a two-layer GCN as the base model of our teacher model as well as the student model, and set the maximum of the hyper-parameter $stage$ to be 10, 
% where at each stage, we train a student model from scratch 
% using an Adam optimizer with learning rate of 0.01, early stopping with a window size of 200 and
In paticular, we set L2 regularization with $\lambda_{r}=5e-4$ 
for Cora, Citeseer, Pubmed, CoraFull and $\lambda_{r}=5e-5$ for Flickr. 
We set ReLU as the activation function and apply a dropout rate of 0.5 to prevent over-fitting. 
As for the MC-dropout and MC-dropedge, we set the number of sampling $T=100$.
Moreover, we apply grid search for other important hyper-parameters. Specifically, the drop rate of MC-dropout and MC-dropedge is chosen from $\{0.1,0.2,\cdots,0.5\}$, 
the balance coefficient $\beta$ for information gain in Eq. \ref{eq:final_loss} is searched in $\{4/3,1,2/3,1/2,1/3,1/4\}$ and the threshold $\tau$ 
is tuned amongst $\{0.4,0.45,\cdots,0.75\}$ for Cora, Citeseer, $\{0.6,0.65,\cdots,0.9\}$ for Pubmed, CoraFull and $\{0.75,0.78,\cdots,0.96\}$ for Flickr.

We adopt the implementation of GCN, GAT and APPNP from DGL\footnote{https://www.dgl.ai/}, and the implementations of STs~\footnote{https://github.com/Davidham3/deeper\_insights\_into\_GCNs} and ABN~\footnote{https://anonymous.4open.science/r/e7aca211-0d8d-4564-8f3f-0ef24b01941e/} are publicly provided by their authors. Considering that the implementation of M3S is not available, we re-implement it referring to the original paper~\cite{m3s}. For all baselines, we perform grid search for important hyper-parameters (\ie $\tau$) to obtain optimal results.

\subsubsection{Experimental Environment}
\label{appendix:environment}
In this section we summarize the hardware and software environment in our experiments.

We utilize a linux machine powered by an Intel(R) Xeon(R) CPU E5-2682 v4 @ 2.50GHz CPU and 4 Tesla P100-PCIE-16GB as well as 4 GeForce RTX 3090 GPU cards.

The operating system is Linux version 3.10.0-693.el7.x86\_64. We realize our code with Python 3.8.8 as well as some other python packages as follows: PyTorch 1.8.1, DGL 0.6.0 (cuda 10.1), NetworkX 2.5.

% \section{Research Methods}

% \subsection{Part One}

% Lorem ipsum dolor sit amet, consectetur adipiscing elit. Morbi
% malesuada, quam in pulvinar varius, metus nunc fermentum urna, id
% sollicitudin purus odio sit amet enim. Aliquam ullamcorper eu ipsum
% vel mollis. Curabitur quis dictum nisl. Phasellus vel semper risus, et
% lacinia dolor. Integer ultricies commodo sem nec semper.

% \subsection{Part Two}

% Etiam commodo feugiat nisl pulvinar pellentesque. Etiam auctor sodales
% ligula, non varius nibh pulvinar semper. Suspendisse nec lectus non
% ipsum convallis congue hendrerit vitae sapien. Donec at laoreet
% eros. Vivamus non purus placerat, scelerisque diam eu, cursus
% ante. Etiam aliquam tortor auctor efficitur mattis.

% \section{Online Resources}

% Nam id fermentum dui. Suspendisse sagittis tortor a nulla mollis, in
% pulvinar ex pretium. Sed interdum orci quis metus euismod, et sagittis
% enim maximus. Vestibulum gravida massa ut felis suscipit
% congue. Quisque mattis elit a risus ultrices commodo venenatis eget
% dui. Etiam sagittis eleifend elementum.

% Nam interdum magna at lectus dignissim, ac dignissim lorem
% rhoncus. Maecenas eu arcu ac neque placerat aliquam. Nunc pulvinar
% massa et mattis lacinia.

\end{document}